\numberwithin{equation}{section}
\def\cM{{\mathcal M}}
\numberwithin{equation}{section}
\newcommand{\R}{\mathbb{R}}
\newcommand{\vertiii}[1]{{\left\vert\kern-0.25ex\left\vert\kern-0.25ex\left\vert #1 
    \right\vert\kern-0.25ex\right\vert\kern-i0.25ex\right\vert}}
\newcommand{\bSigma}{\mbox{\boldmath${\Sigma}$}}
\newcommand{\btheta}{\mbox{\boldmath${\theta}$}}
\newcommand{\bbeta}{\mbox{\boldmath${\beta}$}}
\newcommand{\bxi}{\mbox{\boldmath${\xi}$}}
\newcommand{\bUpsilon}{\mbox{\boldmath${\Upsilon}$}}
\newtheorem{thm}{\bf Theorem}[section]
\newtheorem{lem}[thm]{Lemma}
\DeclareMathOperator*{\argmin}{arg\,min}
\setlist[itemize]{left=16pt} 
\def\bc{{\bf c}}
\def\bff{{\bf f}}
\def\bH{{\bf H}}
\def\bG{{\bf G}}
\def\bp{{\bf p}}
\def\br{{\bf r}}
\def\bs{{\bf s}}
\def\bv{{\bf v}}
\def\bx{{\bf x}}
\def\by{{\bf y}}
\def\cH{{\cal H}}
\def\cS{{\cal S}}
\newcommand{\bomega}{\mbox{\boldmath$\omega$}}
\title{A Structure-Guided Gauss--Newton Method \\ for Shallow R{\scriptsize e}LU Neural Network \thanks{Submitted to the editors
    DATE.
\funding{This work of Zhiqiang Cai and Min Liu was supported in part by the National Science Foundation under grant DMS-2110571. The work of Jianlin Xia was supported in part by the National Science Foundation under grant DMS-2111007.}}} 
\author{Zhiqiang Cai\thanks{Department of Mathematics, Purdue University, West Lafayette, IN
    (\email{caiz@purdue.edu}, \email{ding158@purdue.edu}, \email{liu1957@purdue.edu}, \email{xiaj@purdue.edu}).}
\and Tong Ding\footnotemark[2]
\and Min Liu\thanks{School of Mechanical Engineering, Purdue University, West Lafayette, IN (\email{liu66@purdue.edu})}
\and Xinyu Liu\footnotemark[2]
\and Jianlin Xia\footnotemark[2]}
\begin{document}
\maketitle
\begin{abstract}
In this paper, we introduce a structure-guided Gauss--Newton (SgGN) method for solving least-squares problems using shallow ReLU neural networks. The method is designed to simultaneously exploit three distinct structural features of the problem: (1) the least-squares form of the objective function, (2) the layered architecture of the neural network, and (3) the internal structure of the Gauss--Newton matrix, which allows explicit separation and removal of its singular components.

By formulating the training task as a separable nonlinear least-squares (SNLS) problem, the method classifies the output layer parameters as linear and the hidden layer parameters as nonlinear. Optimization proceeds through a block-iterative scheme that alternates between a damped Gauss--Newton update for the nonlinear parameters and a direct linear solver for the linear ones. Under reasonable assumptions, we prove that the mass and layer Gauss--Newton matrices involved in these updates are symmetric and positive definite. Moreover, the structured form of the layer Gauss--Newton matrix enables efficient and reliable computation of search directions without the need for heuristic regularization or shifting techniques such as those used in Levenberg--Marquardt methods.

The SgGN method is validated on a variety of challenging function approximation tasks, including problems with discontinuities and sharp transitions, settings where standard optimizers typically struggle. Numerical results consistently demonstrate that SgGN achieves faster convergence and significantly greater accuracy, particularly in adaptively repositioning breaking hyperplanes to align with the underlying structure of the target function.
\end{abstract}

\begin{keywords}
structure-guided Gauss--Newton method, neural network, least squares, mass matrix, Gauss-Newton matrix, positive definiteness
\end{keywords}

\begin{AMS}
 65D15, 65K10
\end{AMS}
\section{Introduction}\label{sec: intro}
When a neural network (NN) is employed as a model for least-squares data fitting, determining the optimal network parameters involves solving a high-dimensional, non-convex optimization problem. This problem is often computationally intensive and complex. In practice, the most commonly used optimization algorithms (iterative solvers) in machine learning are first-order gradient-based methods (see, e.g., survey papers \cite{SIAM_Review18, Survey19, SCZZ2020_Survey}), primarily due to their low per-iteration cost and ease of implementation. However, the efficiency of these methods is highly sensitive to hyper-parameter choices, especially the learning rate, which is often difficult to tune. Furthermore, these methods typically exhibit slow convergence and are prone to stagnation, commonly referred to as the plateau phenomenon in training tasks (see, e.g., \cite{ain22}).

Recently, there has been growing interest in applying second-order optimization methods, such as BFGS \cite{broyden1970convergence, fletcher1970new, goldfarb1970family, shanno1970conditioning}, to solve NN-related optimization problems. For a comprehensive overview of their benefits and recent advances, we refer the reader to the survey articles \cite{SIAM_Review18, Survey19, SCZZ2020_Survey}. Among second-order techniques, the Gauss--Newton (GN) method is particularly well suited for solving nonlinear least-squares (NLS) problems. As detailed in classical books \cite{dennis1996numerical, ortega2000iterative}, the GN method is derived from Newton’s method, but leverages the structure of the least-squares objective by approximating the Hessian with its principal component, known as the GN matrix.
In recent years, GN-type methods have found increasing applications in machine learning. In particular, the Kronecker-Factored Approximate Curvature (KFAC) method \cite{martens2015optimizing} provides a structured approximation to the Fisher information matrix, enabling efficient matrix inversion. The Kronecker-Factored Recursive Approximation (KFRA) method \cite{botev2017practical} further refines this idea by constructing a block-diagonal approximation to the GN matrix for feedforward NNs. Beyond traditional learning tasks, GN-type approaches have also been used in the context of NN–based discretizations of partial differential equations (PDEs) (see, e.g., \cite{Hao2024, Jnini2024}).

Despite its appealing features, the GN method suffers from a fundamental limitation: while the GN matrix is always positive semi-definite, it is often singular. To address this, additional regularization techniques \textemdash such as the shifting strategy employed in the Levenberg--Marquardt (LM) method \cite{Levenberg, Marquardt} \textemdash are typically introduced to enforce invertibility. However, this modifies the original optimization problem and introduces a new layer of complexity, particularly in choosing an appropriate shifting parameter, which can be highly non-trivial in practice.

The purpose of this paper is to design and investigate a novel structure-guided Gauss-Newton (SgGN) iterative method for solving LS optimization problems using shallow ReLU NNs. The method utilizes both the LS structure of the objective and the layered architecture of ReLU NNs, and formulates the training task as a separable nonlinear LS (SNLS) problem \cite{osborne2007separable}. Building on this framework, the SgGN method leverages a natural decomposition by separating the neural network parameters into two groups: the linear parameters $\hat{\mathbf{c}}$ corresponding to the weights and bias of the output layer, and the nonlinear parameters $\br$ associated with the weights and biases of the hidden layer. This separation enables a block-iterative optimization strategy, wherein the method alternates between solving for $\hat{\mathbf{c}}$ using a direct linear solver and updating  $\br$ via a damped GN iteration. This iterative scheme not only exploits the layered structure of the network, but also the structure of the GN matrix. These structural insights allow us to isolate the source of singularity in the GN matrix and to rigorously justify the positive definiteness of the intermediate matrices that arise during optimization.  

At each SgGN iteration, the linear solver involves a mass matrix $\mathcal{A}\left(\br\right)$, defined in \cref{A-f}, which depends only on the nonlinear parameters. The nonlinear GN iterative solver relies on a newly derived structured form of the GN matrix for shallow ReLU NN (see \cref{H}). This matrix takes the form:
\begin{equation}
   \mathcal{G}(\br) = \big(D(\bc)\otimes I_{d+1}\big) {\mathcal{H}}(\br)\big(D(\bc)\otimes I_{d+1}\big), \label{eq:gnmat}
\end{equation}
where $d$ is the input dimension, $\bc$ is the weights in the linear parameters $\hat{\mathbf{c}}$ corresponding to coefficients of a linear combination of the neurons, $D(\bc)$ is a diagonal matrix formed from the entries of $\bc$, and $I_{d+1}$ is the order-$(d+1)$ identity matrix. The matrix ${\mathcal{H}}(\br)$, referred to as the layer GN matrix in this work, depends solely on the nonlinear parameters and is given explicitly in \cref{H}. Importantly, both $\mathcal{A}\left(\br\right)$ and  ${\mathcal{H}}(\br)$ are functions of the nonlinear parameter $\br$ and are independent of the linear parameters $\hat{\mathbf{c}}$ .

Theoretically, we show that both $\mathcal{A}\left(\br\right)$ and ${\mathcal{H}}(\br)$ are {\it symmetric positive definite} provided that the neurons are linearly independent (see \cref{l: A} and \cref{l: Hspd}). This property is critical, as it ensures that each iteration of the SgGN algorithm involves well-defined subproblems. The natural positive definiteness of $\mathcal{A}\left(\br\right)$ and ${\mathcal{H}}(\br)$ enables the use of a wide range of efficient direct or iterative solvers for computing the updates to the linear parameters (see \cref{GN1}) and for determining the GN search direction for the nonlinear parameters (see \cref{GN2}).
Moreover, the factored form of the GN matrix in \eqref{eq:gnmat} offers several important theoretical and practical advantages:

\begin{itemize}    
\item [(a)] \textbf{No artificial shifting:} The positive definiteness of $\mathcal{H}(\mathbf{r})$ eliminates the need for additional techniques such as the shifting strategy in the LM method to enforce {\it invertibility} of the GN matrix \textemdash a step commonly required in traditional GN methods. 

\item [(b)] \textbf{Structural insight into singularity:} The factorized form $\mathcal{G}(\br)$ makes the source of singularity explicit. If an entry of $\mathbf{c}$, the coefficients in the linear combination of hidden layer neurons, is zero, then the corresponding neuron does not need to be updated. Once filtered, the remaining system is strictly positive definite. In contrast, the LM method ignores such structure and instead perturbs all directions uniformly via shifting, often distorting the optimization problem and leading to suboptimal performance.
\end{itemize}

The proposed SgGN method is applicable to both continuous and discrete least-squares approximation problems. Its convergence and accuracy are validated through numerical experiments on a range of one- and two-dimensional test problems, including cases with discontinuities and sharp transitions\textemdash scenarios where commonly used optimization algorithms such as Adam \cite{kingma2015}, BFGS, and KFRA often perform poorly. In all tested cases, the SgGN method exhibits significantly faster convergence and higher approximation accuracy. These improvements are particularly evident in the method’s ability to adaptively reposition the breaking hyperplanes (points in 1D and lines in 2D) that are determined by nonlinear parameters, to accurately align with the underlying structure of the target functions.

The remainder of this paper is organized as follows. \Cref{sec: net} introduces the set of approximating functions generated by shallow ReLU NNs and establishes the linear independence of neurons. The LS optimization problem and the corresponding nonlinear algebraic system for stationary points are described in \Cref{sec: prob}. 
In \Cref{sec: training}, the structure of the GN matrix for the nonlinear parameters is derived and the resulting SgGN method is proposed. \Cref{sec: discrete} presents the SgGN method for discrete LS optimization. The numerical results are given in \Cref{sec: numerics}, which illustrates the performance of the method on a range of function approximation tasks. Finally, conclusions and potential directions for future work are discussed in \Cref{sec:conl}.

\section{Shallow ReLU neural network}\label{sec: net}
This section describes shallow ReLU NN as a set of continuous piecewise linear functions mapping $\R^d$ to $\R$. For clarity and simplicity, we restrict our discussion to scalar-valued output functions, as the extension to higher-dimensional outputs is conceptually straightforward and does not alter the core analytical framework. We focus on the fundamental analytical and geometric properties of this function class, which are critical for understanding the behavior of the least-squares optimization problem and for developing the proposed structure-guided Gauss--Newton method.

The ReLU activation function, short for rectified linear unit, is defined as
\begin{equation}\label{relu}
    \sigma(t) =\max\{0, t\}= \left\{\begin{array}{ll}
     t, & t>0,\\[2mm]
     0, & t\leq 0.
     \end{array}\right.
 \end{equation}
Its first- and second-order weak derivatives are the Heaviside (unit) step and the Dirac delta functions given by
\begin{equation}\label{H-delta}
    H(t) = \sigma^\prime(t)= \left\{\begin{array}{ll}
    1, & t>0,\\[2mm]
    0, & t< 0
    \end{array}\right.
    \quad \mbox{ and }\quad 
    \delta(t)=\sigma^{\prime\prime}(t)=H^\prime(t) =\left\{\begin{array}{ll}
    \infty, & t=0,\\[2mm]    0, & t\neq 0,
    \end{array}\right.
\end{equation}
respectively. 

Let $\Omega$ be a connected, bounded open domain in $\R^d$. For any $\bx=(x_1,\ldots,x_d)^T\in \Omega \subset\R^d$, by appending $1$ to the inhomogeneous $(x_1,\ldots,x_d)$-coordinates, we have the following homogeneous coordinates:
\[
\by^T=\left(1,\bx^T\right)=(1,x_1,\ldots,x_d).
\]
A standard shallow ReLU NN with $n$ neurons may be viewed as the set of continuous piecewise linear functions from $\Omega\subset \R^d$ to $\R$, defined as follows:
\begin{equation}\label{NN}
    {\cM}_n(\Omega)=\left\{c_0+
    \sum_{i=1}^{n}c_i\sigma({\bomega}_{i}\cdot\bx+b_i):\,  \bx\in \Omega,\, c_i\in \R, \, b_{i}\in \R, 
    \,{\bomega}_i\in \mathcal{S}^{d-1}\right\},
\end{equation}
where $\bc=(c_1,\ldots,c_n)^T$ and $c_0$ are the output weights and bias, respectively; $\bomega_i=(\omega_{i1},\ldots, \omega_{id})^T$ and $b_i$ are the respective weight and bias of the $i^{\mbox{\scriptsize th}}$ neuron in the hidden layer, with $\bomega_i$ restricted to lie on the unit sphere $\mathcal{S}^{d-1}$ in $\R^{d}$. This weight normalization constraint is imposed without loss of generality, as it preserves the expressiveness of the network while narrowing down the solution set for a given approximation problem (see \cite{LiuCai1}). For notational convenience, we define the full collection of nonlinear parameters as
\begin{equation}\label{z}
\br=\begin{bmatrix}
    \br_1 \\ \vdots \\ \br_n
\end{bmatrix}   
   \quad\mbox{with }\quad \br_i = \begin{bmatrix}
   b_i \\ \omega_{i1} \\ \vdots \\ \omega_{id}\end{bmatrix}
   = \begin{bmatrix} b_i \\ \bomega_i\end{bmatrix},
\end{equation}
which will be used throughout the subsequent analysis and algorithmic development.

Any NN function $v(\bx)$ in ${\cM}_n(\Omega)$ is determined by the parameters $\hat{\bc}=(c_0,c_1,\ldots, c_n)^T$ and $\br$, and takes the following form 
\begin{equation}\label{v}
    v(\bx) = v(\bx; \hat{\bc}, \br) 
= c_0+  \sum_{i=1}^{n}c_i\sigma(\br_i\cdot\by),
\end{equation}
where $\hat{\bc}$ consists of the coefficients in the linear combination and is referred to as the {\em linear parameters}, while $\br$ denotes the {\em nonlinear parameters} (associated with the hidden layer weights and biases). 
To understand the geometric meaning of the nonlinear parameters $\br$, notice that the ReLU activation function $\sigma(t)$ is a continuous piecewise linear function with a single {\it breaking point} at $t=0$. Consequently, each neuron $\sigma (\br_i\textcolor{blue}{\cdot}\,\by)=\sigma (\bomega_i\cdot\bx+b_i)$ defines a continuous piecewise linear function with a corresponding  {\it breaking hyperplane} (see \cite{Cai2021linear,LiuCai1}):
\begin{equation}\label{P-i}
\mathcal{P}_i(\br_i)=\left\{\bx\in \Omega\subset \R^d:\, {\bomega}_{i}\cdot\bx + b_i=0 \right\}.
\end{equation}
Together with the boundary of the domain $\Omega$, these hyperplanes induce  a {\it physical partition}, denoted by ${\cal K}(\br)$, of the domain $\Omega$ \cite{LiuCai1, Cai2023AI}. This partition ${\cal K}(\br)$ consists of irregular, polygonal subdomains of $\Omega$ (see \cref{fig6:SgGN_h,fig6:SgGN_v} below for some examples). The NN function $v(\bx)$ defined in \cref{v} is thus a continuous piecewise linear function with respect to ${\cal K}(\br)$.

Now we turn to the discussion of the linear independence of some ridge functions defined for fixed parameter $\br$ in \cref{z}. To this end, let $\sigma_0(\bx)=1$, and for $i=1,\ldots,n$,
\begin{equation}\label{s-H}
    \sigma_i(\bx) 
    =\sigma(\br_i\cdot\by) 
    \quad\mbox{and}\quad 
    H_i(\bx) 
    = H(\br_i\cdot\by),
\end{equation}
where $\sigma$ and $H$ denote the ReLU activation and Heaviside step functions given in \cref{relu} and \cref{H-delta}, respectively. Under the assumption that the hyperplanes $\left\{\mathcal{P}_i(\br_i)\right\}_{i=1}^n$ are distinct, it is well known (see, e.g., Theorem~2.1 in \cite{he2020relu} and Lemma~2.1 in \cite{LiuCai1}) that the set of functions $\left\{\sigma_i (\bx )\right\}_{i=0}^n$ is linearly independent in $\R^d$. 

However, since $\Omega$ is a bounded sub-domain of $\R^d$ and each neuron $\sigma_i(\bx)$ is piecewise linear, certain degeneracies may arise. In particular, if there exist $\hat{d}>d$ ridge functions $\left\{\sigma_{i_k} (\bx )\right\}_{k=1}^{\hat{d}}$ whose restrictions to $\Omega$ are linear (i.e., their breaking hyperplanes lie outside of $\Omega$), then the set $\left\{\sigma_0(\bx)\right\}\cup \left\{\sigma_{i_k} (\bx)\right\}_{k=1}^{\hat{d}}$ is linearly dependent in $\Omega$. Consequently, the full set $\left\{\sigma_i (\bx )\right\}_{i=0}^n$ would also be linearly dependent in $\Omega$. To avoid this situation, we require that the intersection of the breaking hyperplane of each neuron with the domain $\Omega$ is not empty. In particular, let us introduce an admissible set for $\br$ defined in \cref{z} as follows:
\begin{equation}\label{r}
    \bUpsilon=\left\{\br=(\br_1,\ldots,\br_n):\, \br_i=(b_i,\bomega_i), \, b_i\in \R,\, \bomega_i\in \cS^{d-1},\, \mathcal{P}_i(\br_i)\cap \Omega\not= \emptyset \right\}.
\end{equation}

\begin{lem}\label{l:l-indep-sigma}
For fixed $\br\in \bUpsilon$, assume that the hyperplanes $\left\{\mathcal{P}_i(\br_i)\right\}_{i=1}^n$ are distinct. Then the set of functions $\left\{\sigma_i (\bx )\right\}_{i=0}^n$ is linearly independent in $\Omega$.
\end{lem}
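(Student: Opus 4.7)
The plan is to argue by contradiction, exploiting the fact that each neuron $\sigma_i$ contributes a discontinuity in its gradient precisely on the breaking hyperplane $\mathcal{P}_i(\br_i)$, and that these hyperplanes are distinct and pierce $\Omega$ non-trivially because $\br \in \bUpsilon$. Suppose there exist scalars $c_0, c_1, \ldots, c_n$, not all zero, with
\[
v(\bx) := c_0 + \sum_{i=1}^{n} c_i\, \sigma_i(\bx) = 0 \quad \text{for every } \bx \in \Omega.
\]
Since each $\sigma_i$ is continuous piecewise linear, $v$ is classically differentiable off $\bigcup_{i=1}^n \mathcal{P}_i(\br_i)$, with $\nabla v(\bx) = \sum_{i=1}^n c_i \bomega_i H_i(\bx)$, and this gradient must vanish on every connected component of $\Omega \setminus \bigcup_i \mathcal{P}_i(\br_i)$.

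The core step is to extract each coefficient $c_j$ individually by crossing $\mathcal{P}_j(\br_j)$ at a generic point. Because $\br \in \bUpsilon$, the set $\mathcal{P}_j(\br_j) \cap \Omega$ is a non-empty relatively open $(d-1)$-dimensional subset of $\mathcal{P}_j(\br_j)$, while distinctness of the hyperplanes ensures that each intersection $\mathcal{P}_i(\br_i) \cap \mathcal{P}_j(\br_j)$ for $i \neq j$ has affine dimension at most $d-2$. Hence one can choose a point $\bx_j^\ast \in \mathcal{P}_j(\br_j) \cap \Omega$ lying on no other hyperplane, together with an open ball $B_j \subset \Omega$ around $\bx_j^\ast$ meeting only the single hyperplane $\mathcal{P}_j(\br_j)$. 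Inside $B_j$, every $\sigma_i$ with $i \neq j$ is smooth, so the jump of $\nabla v$ across $\mathcal{P}_j(\br_j) \cap B_j$ reduces to the jump of $\nabla(c_j \sigma_j)$, which equals $c_j \bomega_j$. Since $\nabla v \equiv 0$ on both sides of the kink, we obtain $c_j \bomega_j = 0$, and $\bomega_j \in \cS^{d-1}$ forces $c_j = 0$. Running $j$ from $1$ to $n$ gives $c_1 = \cdots = c_n = 0$, and the original identity then collapses to $c_0 = 0$, the desired contradiction.

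The main obstacle I anticipate is the geometric localization step: one has to combine the admissibility condition (so $\mathcal{P}_j(\br_j)$ genuinely cuts $\Omega$) with distinctness (so $\bx_j^\ast$ can be chosen off every other hyperplane) to produce the clean isolating ball $B_j$. Once this geometric fact is in place, the jump-of-gradient computation is elementary and the rest is bookkeeping. An alternative route would be to apply the distributional Laplacian to $v$ and invoke the linear independence of the Dirac surface measures $\delta_{\mathcal{P}_i(\br_i) \cap \Omega}$ supported on distinct hyperplanes, but the gradient-jump approach avoids any distribution theory and stays close to the geometric picture already emphasized in \cref{sec: net}.
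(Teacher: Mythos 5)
Your argument is correct: the admissibility condition $\mathcal{P}_j(\br_j)\cap\Omega\neq\emptyset$ plus the dimension count (pairwise intersections of distinct hyperplanes have dimension at most $d-2$) legitimately yields an isolating ball, and the gradient-jump computation $c_j\bomega_j=0$ with $\bomega_j\in\cS^{d-1}$ forces $c_j=0$, after which $c_0=0$ follows. The paper itself gives no details, deferring entirely to Lemma~2.1 of the cited reference \cite{LiuCai1}, and your self-contained jump-across-a-generic-point argument is essentially the standard proof that reference relies on, so you have in effect supplied the omitted proof rather than a genuinely different one.
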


\begin{proof}
The lemma may be proved in a similar fashion as that of Lemma~2.1 in \cite{LiuCai1}.
\end{proof}

\begin{lem}\label{l: l-indep}
Under the assumptions of \cref{l:l-indep-sigma}, the set of functions 
\[\{H_{i}(\bx),x_1H_{i}(\bx),\ldots,x_dH_{i}(\bx)\}_{i=1}^{n}\] is linearly independent in $\Omega$. 
\end{lem}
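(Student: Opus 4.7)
The plan is to exploit the partition of $\Omega$ generated by the breaking hyperplanes $\{\mathcal{P}_i(\br_i)\}_{i=1}^n$ and peel off one index at a time through a single-hyperplane crossing argument. Suppose
\[
\sum_{i=1}^n \bigl(\alpha_{i,0} + \alpha_{i,1}x_1 + \cdots + \alpha_{i,d}x_d\bigr) H_i(\bx) \;=\; 0 \quad \text{in } \Omega,
\]
and set $\balpha_i = (\alpha_{i,0},\alpha_{i,1},\ldots,\alpha_{i,d})^T \in \R^{d+1}$, so the hypothesis reads $\sum_{i=1}^n (\balpha_i\cdot\by)\,H_i(\bx) = 0$. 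I would aim to conclude $\balpha_i = \mathbf{0}$ for every $i$.

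The first step is that the distinct hyperplanes $\mathcal{P}_i$ cut $\Omega$ into finitely many open polytopal cells $\{K_\ell\}$ on each of which every $H_i$ is identically $0$ or $1$. Setting $S_\ell = \{i : H_i\equiv 1 \text{ on } K_\ell\}$, the restriction of the hypothesis to $K_\ell$ is the affine identity $\sum_{i\in S_\ell}\balpha_i\cdot\by = 0$. Since $K_\ell$ has nonempty interior, all coefficients of this affine function in $\bx$ must vanish, yielding
\[
\sum_{i\in S_\ell}\balpha_i \;=\; \mathbf{0} \quad \text{in } \R^{d+1}
\]
for every cell $K_\ell$.

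To extract a single $\balpha_{i_0}$, I would next locate two adjacent cells $K_+, K_-$ whose common boundary lies in $\mathcal{P}_{i_0}$ and in \emph{no} other $\mathcal{P}_j$, so that $S_+ = S_-\cup\{i_0\}$ and subtracting the two relations above produces $\balpha_{i_0} = \mathbf{0}$. The existence of such a pair uses both hypotheses of \cref{l:l-indep-sigma}: admissibility $\br\in\bUpsilon$ forces $\mathcal{P}_{i_0}\cap\Omega$ to be a relatively open $(d-1)$-dimensional subset of $\mathcal{P}_{i_0}$, while distinctness implies each intersection $\mathcal{P}_{i_0}\cap\mathcal{P}_j$ ($j\neq i_0$) has affine dimension at most $d-2$. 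Consequently $(\mathcal{P}_{i_0}\cap\Omega)\setminus\bigcup_{j\neq i_0}\mathcal{P}_j$ is nonempty; picking $\bx^*$ in this set and a ball $B(\bx^*,\epsilon)\subset\Omega$ that meets only $\mathcal{P}_{i_0}$ produces two half-balls, separated by the sign of $\br_{i_0}\cdot\by$, which sit inside adjacent cells $K_\pm$ of the required type.

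The hard part will be precisely this general-position step, which is the only place where both assumptions of \cref{l:l-indep-sigma} are used in an essential way \textemdash admissibility ensures $\mathcal{P}_{i_0}$ actually slices $\Omega$ with full codimension one, and distinctness keeps the ``bad'' intersections of lower dimension and therefore negligible inside $\mathcal{P}_{i_0}\cap\Omega$. Once this clean crossing point is available, applying the extraction for each $i_0\in\{1,\ldots,n\}$ yields $\balpha_{i_0}=\mathbf{0}$ and completes the proof.
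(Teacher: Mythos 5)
Your argument is correct, and it is essentially a fully worked-out version of the paper's own proof, which is only a two-line sketch: the paper first observes that each block $H_i(\bx)\{1,x_1,\ldots,x_d\}$ is independent because $\{1,x_1,\ldots,x_d\}$ is, and then simply asserts that independence across the $n$ blocks ``follows from the assumptions on the hyperplanes.'' Your cell decomposition plus the single-hyperplane crossing step supplies precisely the content hidden in that last assertion: admissibility ($\br\in\bUpsilon$, so $\mathcal{P}_{i_0}\cap\Omega$ is a nonempty, relatively open $(d-1)$-dimensional set) and distinctness (so each $\mathcal{P}_{i_0}\cap\mathcal{P}_j$ has dimension at most $d-2$, or is empty) guarantee a clean crossing point, and subtracting the per-cell relations across the two half-balls isolates each $\balpha_{i_0}$, which is exactly where both hypotheses of \cref{l:l-indep-sigma} enter. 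The only structural difference is that you do not need the paper's preliminary within-block step at all: vanishing of an affine function on an open cell already yields the per-cell identities, and the crossing argument finishes the proof in one pass.
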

\begin{proof}
For each $i=1,\ldots,n$, the linear independence of $\{1,x_1,\ldots,x_d\}$ implies that the set of functions
\[ 
     \big\{H_{i}(\bx),x_1H_{i}(\bx),\ldots,x_dH_{i}(\bx)\big\} = H_{i}(\bx) \{1, x_1, \ldots, x_d\}
\]
is linearly independent. Now, linear independence of $\big\{H_{i}(\bx),x_1H_{i}(\bx),\ldots,x_dH_{i}(\bx)\big\}_{i=1}^{n}$ in $\Omega$ follows from the assumptions on the hyperplanes. 
\end{proof}

\section{Continuous least-squares optimization problems}\label{sec: prob}

Let $\|\cdot\|_\mu$ denote the weighted $L^2(\Omega)$ norm defined by
\[
\|v\|_\mu=\left(\int_\Omega \mu(\bx) v^2(\bx)\,d\bx\right)^{1/2}.
\]
Given function $u(\bx)$ defined in $\Omega$, we define the corresponding least-squares functional as
\[
\mathcal{J}_\mu(v) = \dfrac{1}{2} \|v-u\|^2_\mu.
\]
The best LS approximation to $u(\bx)$ within the NN function class $\cM_n(\Omega)$ is then obtained by solving
\begin{equation}\label{min}
  u_n(\bx;\hat{\bc}^*,\br^*) =\argmin_{v\in\cM_n(\Omega)}\mathcal{J}_\mu(v) =\argmin_{\hat{\bc}\in {\scriptsize\R}^{n+1},\, \br\in {\scriptsize\bUpsilon}}\mathcal{J}_\mu(u_n(\cdot;\hat{\bc},\br)),
\end{equation}
where $u_n(\bx;\hat{\bc},\br)$ has the form of
\begin{equation}\label{u_n}
u_n(\bx)=u_n(\bx;\hat{\bc},\br)
= c_0+\sum_{i=1}^{n}c_i\sigma(\br_i\cdot\by).
\end{equation}

Problem \cref{min} is a classic SNLS problem (see, e.g., \cite{osborne2007separable} and references therein) which is a special case of the broader class of NLS problems. The NLS problems are often addressed using variants of the GN methods, which exploit the underlying quadratic structure of the objective function \cite{dennis1996numerical, ortega2000iterative}. One of the most widely used GN variants is the LM algorithm \cite{Levenberg, Marquardt}, which modifies the GN method by adding a regularization (or shifting) term to ensure matrix invertibility. The LM update rule is given by:
\begin{equation}\label{LM}
    \btheta^{(k+1)}=\btheta^{(k)} - \gamma_{k+1} \left[G\left(\btheta^{(k)}\right) +\lambda_k I \right]^{-1} \nabla_{\btheta}\mathcal{J}_\mu\left(u_n\big(\cdot; \btheta^{(k)}\big)\right),
\end{equation}
where $\btheta = (\hat{\bc}^T, \br^T)^T$, $\gamma_{k+1}\in \R_{+}$ is the step size, $\lambda_k>0$ is the shifting/damping/regularization parameter. The matrix $G\left(\btheta\right)$ is the GN approximation to the Hessian of the loss functional and is given by:
\begin{equation}\label{GN_LM}
G\left(\btheta\right) = \nabla_{\btheta}\mathcal{J}_\mu\left(u_n\big(\cdot; \btheta^{(k)}\big)\right)^T\nabla_{\btheta}\mathcal{J}_\mu\left(u_n\big(\cdot; \btheta^{(k)}\big)\right).
\end{equation}
Although $G(\btheta)$ is always symmetric and positive semi-definite for all $\btheta$, it may be singular, making its inversion problematic. The LM algorithm addresses this by introducing a shift $\lambda_k I$, but this modification comes with trade-offs, primarily concerning the selection of $\lambda_k$. Various heuristic strategies have been proposed for choosing $\lambda_k$, but no universally optimal strategy exists. In practice, users must carefully tune the shifting parameter for each specific problem. For instance, the implementation of the LM algorithm in the built-in MATLAB function \texttt{lsqnonlin} requires users to configure several shifting-related parameters, such as the initial damping factor and the number of inner iterations used for adjusting it. 

The NLS problem in \cref{min} is separable, as the NN function $u_n(\bx)\in \cM_n(\Omega)$ given in \cref{u_n} is a linear combination of neurons that depends on the nonlinear parameters. SNLS problems have been studied by many researchers since the 1970s, and two principal solution strategies have emerged to exploit this structure. One approach leverages the separability by alternating between updates of the linear parameters $\hat{\bc}$ and the nonlinear parameter $\br$. Specifically, at each iteration, the parameter pair$\left( \br^{(k+1)} ,\hat{\bc}^{(k+1)}\right)$ is computed using a block Gauss-Seidel procedure:
\begin{itemize}
 \item[(1)] Solve the minimization problem in \cref{min} with fixed $\hat{\bc} = \hat{\bc}^{(k)}$, i.e., compute ${\br}\in \bUpsilon$ such that 
\begin{equation}\label{nonlinear}
    {\br}^{(k+1)} = \argmin_{\br\in \bUpsilon} \mathcal{J}_\mu\left(u_n\big(\cdot; \hat{\bc}^{(k)},\br\big)\right).
    \end{equation}
    
    \item[(2)] Solve the minimization problem in \cref{min} with given $\br = \br^{(k+1)}\in \bUpsilon$, i.e., compute $\hat{\bc}\in \R^{n+1}$ such that 
    \begin{equation}\label{linear-min}
    \hat{\bc}^{(k+1)} = \argmin_{\hat{\bc}\in \R^{n+1}} \mathcal{J}_\mu\left(u_n\big(\cdot; \hat{\bc},\br^{(k+1)}\big)\right).
    \end{equation}
\end{itemize}

This alternating scheme has also been adopted in machine learning settings for discrete least-squares problems (see \cref{min-d}). For instance, in \cite{varproj2}, it was applied to deep neural networks, where the nonlinear subproblem \eqref{nonlinear} was addressed using gradient descent over$\br \in \R^{n(d+1)}$.

An alternative approach is to solve $\hat{\bc}$ in terms of $\br$ and substitute it back into the loss functional to get a minimization problem with fewer unknowns. This leads to a reduced optimization problem involving only the nonlinear variables. The resulting method, known as the Variable Projection (VarPro) method, was introduced in \cite{varproj}. When the Gauss–Newton method is used to solve the reduced problem, it is referred to as the VarProGN method (see \cite{varproj}). The advantage of VarProGN depends on the problem structure. When the number of linear parameters is significantly larger than the number of nonlinear parameters, VarProGN can yield a lower-dimensional optimization problem and potentially improve efficiency. However, when the parameter counts are comparable,  or when nonlinear parameters dominate, the reduced problem may introduce unnecessary complexity into the nonlinear structure. In such cases, the alternating (block Gauss–Seidel) approach may be preferable, especially when both linear and nonlinear subproblems can be solved effectively.

\section{A structure-guided Gauss-Newton (SgGN) method}\label{sec: training}

In this section, we introduce our SgGN method for solving the minimization problem in (\ref{min}), guided by structures of both the separable least squares and the ReLU NN architecture. Specifically, we adopt the alternating method described in the previous section for the outer iteration. In particular, by exploiting the algebraic structures of the minimization problem in \cref{nonlinear},  we develop a modified GN method that explicitly reveals possible singularities of the GN matrix.

\subsection{Optimality condition}\label{s:OC}

Here we use the optimality condition to derive the corresponding systems of nonlinear algebraic equations. To this end, let 
\[
\bSigma(\bx;\br)=(\sigma_1(\bx),\ldots,\sigma_n(\bx))^T \quad\mbox{and}\quad \hat{\bSigma}(\bx;\br) = (\sigma_0(\bx), \sigma_1(\bx),\ldots, \sigma_n(\bx))^T, 
\]
where $\sigma_i(\bx)=\sigma(\br_i\cdot\by)$ is defined in \cref{s-H}. Then we have 
\begin{equation}\label{u-S}
u_n(\bx)=u_n(\bx;\hat{\bc},\br) =\hat{\bSigma}(\bx;\br)^T\hat{\bc}.
\end{equation}
Let $u^*_n(\bx)=u_n(\bx;\hat{\bc}^*,\br^*)\in\cM_n(\Omega)$ be a solution of \cref{min}, then $(\hat{\bc}^*,\br^*)$ is a critical point of the loss function $\mathcal{J}_\mu(u_n(\cdot;\hat{\bc},\br))$. That is, $(\hat{\bc}^*,\br^*)$ satisfies the following system of algebraic equations
\begin{equation}\label{cps}
    {\bf 0}=\nabla_{\hat{\bc}} \mathcal{J}_\mu\left(u_n(\cdot;\hat{\bc}^*,\br^*)\right)
    \quad\mbox{and}\quad
    {\bf 0}=\nabla_{\br} \mathcal{J}_\mu\left(u_n(\cdot;\hat{\bc}^*,\br^*)\right),
\end{equation}
where $\nabla_{\hat{\bc}}$ and $\nabla_{\br}$ denote the gradients with respect to the respective parameters $\hat{\bc}$ and $\br$.

In the following, we derive specific forms of the algebraic equations in \eqref{cps}. By \cref{u-S} and the fact that $\nabla_{\hat{\bc}} u_n(\bx)= \hat{\bSigma}(\bx)$,
we have 
\begin{align*}
\nabla_{\hat{\bc}} \mathcal{J}_\mu(u_n(\cdot;\hat{\bc},\br)) = &\int_\Omega\mu(\bx)(u_n(\bx)-u(\bx)) \nabla_{\hat{\bc}} u_n(\bx)\,d\bx \\
=&\left(\int_\Omega \mu(\bx)\hat{\bSigma}(\bx)^T\hat{\bSigma}(\bx)\,d\bx\right)\hat{\bc} -\int_\Omega \mu(\bx)u(\bx) \hat{\bSigma}(\bx)\,d\bx.
\end{align*}
Hence, the first equation in (\ref{cps}) becomes 
\begin{equation}\label{linear}
   {\bf 0}=  \nabla_{\hat{\bc}} \mathcal{J}_\mu(u_n(\cdot;\hat{\bc}^*,\br^*)) =\mathcal{A}(\br^*) \,\hat{\bc}^* -\bff(\br^*),
\end{equation} 
where $\mathcal{A}(\br^*)$ and $\bff(\br^*)$ are respectively the {\it mass matrix} and the right-hand side vector given by
\begin{equation}\label{A-f}
\mathcal{A}(\br^*)= \int_\Omega \mu(\bx)\hat{\bSigma}(\bx;\br^*)^T\hat{\bSigma}(\bx;\br^*)\,d\bx \quad\mbox{and}\quad \bff(\br)= \int_\Omega \mu(\bx) u(\bx) \hat{\bSigma}(\bx;\br^*)\,d\bx.
\end{equation} 
Specifically, we have $\mathcal{A}(\br^*)=\left(a_{ij}\right)_{(n+1)\times (n+1)}$ and $\bff(\br^*)=\left(f_i\right)_{(n+1)\times 1}$ with 
\[
a_{ij}=\int_\Omega \mu(\bx) \sigma(\br_i^*\cdot\by)\sigma(\br_j^*\cdot\by)d\bx \quad\mbox{and}\quad f_i= \int_\Omega \mu(\bx) u(\bx) \sigma(\br_i^*\cdot\by)d\bx.
\]

\begin{lem}\label{l: A}
For fixed $\br\in \bUpsilon$, under the assumptions of \cref{l:l-indep-sigma}, the mass matrix $\mathcal{A}(\br)$ is symmetric  positive definite.
\end{lem}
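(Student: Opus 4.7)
The plan is to recognize $\mathcal{A}(\br)$ as the Gram matrix of the $n+1$ neurons $\{\sigma_i(\bx)\}_{i=0}^n$ with respect to the weighted $L^2(\Omega)$ inner product $\langle f,g\rangle_\mu = \int_\Omega \mu(\bx) f(\bx) g(\bx)\,d\bx$, and then invoke \cref{l:l-indep-sigma}. Symmetry is immediate from the entrywise formula $a_{ij} = \int_\Omega \mu(\bx)\sigma_i(\bx)\sigma_j(\bx)\,d\bx$, which is manifestly symmetric in $i$ and $j$.

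For positive definiteness, I would take an arbitrary $\bv = (v_0,v_1,\ldots,v_n)^T \in \R^{n+1}$ and expand
\[
\bv^T \mathcal{A}(\br)\, \bv = \sum_{i,j=0}^{n} v_i v_j \int_\Omega \mu(\bx)\sigma_i(\bx)\sigma_j(\bx)\,d\bx = \int_\Omega \mu(\bx)\Bigl(\sum_{i=0}^{n} v_i \sigma_i(\bx)\Bigr)^2 d\bx.
\]
Assuming $\mu > 0$ on $\Omega$, this quantity is nonnegative, and it vanishes if and only if $\sum_{i=0}^{n} v_i \sigma_i(\bx) = 0$ almost everywhere on $\Omega$. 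Continuity of the ReLU neurons promotes the almost-everywhere identity to a pointwise identity on $\Omega$, and then \cref{l:l-indep-sigma} forces $v_0 = v_1 = \cdots = v_n = 0$. Hence $\bv^T \mathcal{A}(\br)\,\bv > 0$ for every nonzero $\bv$, establishing positive definiteness.

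The proof is essentially a routine Gram-matrix argument; the only nontrivial ingredient is the linear independence of $\{\sigma_i(\bx)\}_{i=0}^n$ on the bounded domain $\Omega$, which is precisely the conclusion of \cref{l:l-indep-sigma} and the whole reason the admissible set $\bUpsilon$ in \cref{r} was introduced (to prevent breaking hyperplanes from lying entirely outside $\Omega$, which would make some neurons linear on $\Omega$ and destroy independence). Accordingly, I anticipate no real obstacle beyond stating the reduction cleanly and recording the positivity of the weight $\mu$.
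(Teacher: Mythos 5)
Your proposal is correct and follows essentially the same route as the paper: both identify $\bv^T\mathcal{A}(\br)\,\bv$ with the squared weighted $L^2$ norm $\|\sum_{i=0}^n v_i\sigma_i\|_\mu^2$ and then invoke the linear independence of $\{\sigma_i\}_{i=0}^n$ from \cref{l:l-indep-sigma}. Your added remarks on the positivity of $\mu$ and the passage from almost-everywhere to pointwise vanishing via continuity merely make explicit details the paper leaves implicit.
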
 

\begin{proof}
The symmetry of $\mathcal{A}(\br)$ is evident. For any $\bxi\in \R^{n+1}$, let $v(\bx)=\bxi^T\hat{\bSigma}(\bx)$. Then the positive definiteness of $\mathcal{A}(\br)$ is a direct consequence of the fact that $\bxi^T \mathcal{A}(\br) \bxi = \left\|v\right\|_\mu^2$ and \cref{l:l-indep-sigma}.
\end{proof}

Next, we calculate $\nabla_{\br} \mathcal{J}_\mu(u_n(\cdot;\hat{\bc},\br))$. To simplify the expression of formulas, we use the Kronecker product, denoted by $\otimes$, of two matrices.

Let 
\[
\bH=\bH(\bx)=\left(H_1(\bx),\ldots, H_n(\bx)\right)^T.
\]
For $i,j=1,\ldots,n$, the fact that 
\[
\nabla_{\br_i}\sigma_j\left(\bx\right) =\nabla_{\br_i}\sigma(\br_j\cdot\by)
=\left\{\begin{array}{ll}
  {\bf 0},   &  i\not=j,\\[2mm]
   H_j(\bx) \by,  & i=j
\end{array}
\right.
\]
implies 
\begin{equation}\label{grad-r}
   \nabla_{\br} u_n(\bx;\hat{\bc},\br) 
   = \left(D(\bc)\otimes I_{d+1}\right) \left(\bH(\bx)\otimes \by\right),
\end{equation}
where $D(\bc)=\mbox{diag}(c_1,\ldots, c_n)$ is the diagonal matrix with the $i^{\mbox{\scriptsize th}}$-diagonal element $c_i$.

Denote a scaled gradient vector of $\mathcal{J}_\mu\left(u_n(\bx;\hat{\bc},\br)\right)$ with respect to $\br$ by
\begin{equation}\label{G}
    \bG(\hat{\bc},\br)=\int_\Omega \mu(\bx) \big(u_n(\bx)-u(\bx)\big) {\bH}(\bx)\otimes \by\,d\bx.
\end{equation}
Then it is easy to check that the second equation in \cref{cps} becomes
\begin{equation}\label{grad-r-J}
    {\bf 0}=\nabla_{\br}\mathcal{J}_\mu\left(u_n(\bx;\hat{\bc}^*,\br^*)\right)= \left(D(\bc^*)\otimes I_{d+1}\right)\bG(\hat{\bc}^*,\br^*).
\end{equation}

\subsection{Structure of the Gauss--Newton matrix}
To address possible singularities of the GN matrix of the NLS problem in \cref{nonlinear}, below we derive a factorized form of the GN matrix by making use of the shallow ReLU neural network structure. To this end, let $\delta_i(\bx)=  \delta\left({\br}_{i}\cdot\by\right)$ for $i=1,\ldots,n$, where $\delta(t)$ is the Dirac delta function defined in (\ref{H-delta}). Denote the $n\times n$ diagonal matrix with the $i^{\mbox{\scriptsize th}}$-diagonal element $\delta_i(\bx)$ by 
\[
\Lambda(\bx)=\mbox{diag}(\delta_1(\bx),\ldots, \delta_n(\bx)).
\]
For $i,j=1,\ldots,n$, it is straightforward to verify that
\[ \nabla_{\br_i}H_j\left(\bx\right)=\left\{\begin{array}{ll}
  {\bf 0},   &  i\not=j,\\[2mm]
   \delta_j(\bx) \by,  & i=j
\end{array}
\right.
\]
in the weak sense. This implies
\begin{equation}\label{grad H}
    \nabla_{\br}{\bH}(\bx)^T =  \Lambda(\bx) \otimes \by. 
\end{equation}
Let 
\[
\hat{\mathcal{H}}(\bc,\br)=\int_\Omega \mu (x)\left(u_n(\bx)-u(\bx)\right) \Lambda(\bx)
    \otimes \left(\by \by^T\right)\,d\bx.
\]
Introducing the {\em layer GN matrix}
\begin{equation}\label{l-GN}
    {\mathcal{H}}(\br)=\int_\Omega \mu(\bx)\left[{\bH(\bx)}\bH(\bx)^T \right]\otimes\left[\by\by^T\right]\,d\bx,
\end{equation}
the lemma below reveals the structure of the GN matrix.

\begin{lem}\label{l: H}
The Hessian matrix of $\mathcal{J}_{\mu}\left(u_n(\bx;\hat{\bc},\br)\right)$ with respect to $\br$ has the form of 
\begin{equation}\label{Ha}
    \nabla_{\br}\big(\nabla_{\br} \mathcal{J}_{\mu}\left(u_n(\bx;\hat{\bc},\br)\right)\big)^T =\mathcal{G}(\bc,\br) +\hat{\mathcal{H}}(\bc,\br) \big(D(\bc)\otimes I_{d+1}\big) ,
\end{equation} 
where $\mathcal{G}(\bc,\br)$ is the GN matrix with respect to $\br$ given by
\begin{equation}\label{H}
    \mathcal{G}(\bc,\br) = \big(D(\bc)\otimes I_{d+1}\big) {\mathcal{H}}(\br)\big(D(\bc)\otimes I_{d+1}\big).
\end{equation}
\end{lem}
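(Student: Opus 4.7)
The plan is to compute the Hessian $\nabla_{\br}(\nabla_{\br}\mathcal{J}_\mu)^T$ directly by differentiating twice and applying the product rule, and then to recognize the two resulting integrals as $\mathcal{G}(\bc,\br)$ and $\hat{\mathcal{H}}(\bc,\br)(D(\bc)\otimes I_{d+1})$ by means of the Kronecker-product identities already set up in (\ref{grad-r}) and (\ref{grad H}).

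First, I would start from the first-order identity
\[
\nabla_{\br}\mathcal{J}_\mu(u_n(\cdot;\hat{\bc},\br)) = \int_\Omega \mu(\bx)\big(u_n(\bx)-u(\bx)\big)\,\nabla_{\br}u_n(\bx)\,d\bx,
\]
and apply $\nabla_{\br}$ to its transpose. Because $u$ is independent of $\br$, the product rule produces exactly two contributions,
\[
\nabla_{\br}(\nabla_{\br}\mathcal{J}_\mu)^T = \int_\Omega \mu(\bx)\Big[\nabla_{\br}u_n\,(\nabla_{\br}u_n)^T + (u_n-u)\,\nabla_{\br}(\nabla_{\br}u_n)^T\Big]\,d\bx.
\]
I then show the first term contributes $\mathcal{G}(\bc,\br)$ and the second term contributes the curvature correction $\hat{\mathcal{H}}(\bc,\br)(D(\bc)\otimes I_{d+1})$.

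For the first integrand, I substitute $\nabla_{\br}u_n = (D(\bc)\otimes I_{d+1})(\bH\otimes\by)$ from (\ref{grad-r}), take the transpose, and apply the Kronecker mixed-product property $(A\otimes B)(C\otimes D)=(AC)\otimes(BD)$ to obtain
\[
\nabla_{\br}u_n\,(\nabla_{\br}u_n)^T = \big(D(\bc)\otimes I_{d+1}\big)\big[(\bH\bH^T)\otimes(\by\by^T)\big]\big(D(\bc)\otimes I_{d+1}\big).
\]
Since the outer factors $D(\bc)\otimes I_{d+1}$ are independent of $\bx$, they may be pulled outside the integral, and the remaining inner integral is exactly $\mathcal{H}(\br)$ by the definition (\ref{l-GN}). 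This gives $\mathcal{G}(\bc,\br)$ as defined in (\ref{H}).

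For the second integrand, I compute the Hessian of $u_n$ with respect to $\br$ block-wise. Using $\partial u_n/\partial\br_j = c_j H_j(\bx)\by$ together with $\nabla_{\br}\bH^T = \Lambda(\bx)\otimes\by$ from (\ref{grad H}), the $(k,j)$ block of $\nabla_{\br}(\nabla_{\br}u_n)^T$ equals $\delta_{kj}\,c_j\,\delta_j(\bx)\by\by^T$ in the weak sense. Packing these blocks using the mixed-product property yields
\[
\nabla_{\br}(\nabla_{\br}u_n)^T = \big(\Lambda(\bx)\otimes\by\by^T\big)\big(D(\bc)\otimes I_{d+1}\big).
\]
The right-hand factor is constant in $\bx$, so it pulls out of the integral, and what remains is precisely $\hat{\mathcal{H}}(\bc,\br)$ by its definition. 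Combining the two pieces yields (\ref{Ha}).

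The main obstacle I anticipate is careful bookkeeping of the second derivative: the identity $\nabla_{\br}\bH^T=\Lambda\otimes\by$ involves Dirac masses, so I need to be explicit that the computation of $\nabla_{\br}(\nabla_{\br}u_n)^T$ is interpreted in the weak sense, and the non-commutativity of the Kronecker product requires keeping the order of factors straight throughout. Using the mixed-product property together with the fact that $D(\bc)$ and $\Lambda(\bx)$ are diagonal (hence commute) is what makes the two Kronecker expressions line up correctly to match (\ref{Ha}).
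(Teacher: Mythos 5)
Your proposal is correct and follows essentially the same route as the paper: a product-rule differentiation of the residual-weighted gradient, using \cref{grad-r} and \cref{grad H} together with the Kronecker mixed-product rule to identify the two terms as $\mathcal{G}(\bc,\br)$ and $\hat{\mathcal{H}}(\bc,\br)\big(D(\bc)\otimes I_{d+1}\big)$. The only cosmetic difference is that the paper differentiates the scaled gradient $\bG(\hat{\bc},\br)$ and reattaches the constant factor $D(\bc)\otimes I_{d+1}$ at the end via \cref{grad-r-J}, whereas you differentiate the full gradient directly and absorb $D(\bc)$ into the blockwise (weak-sense) Hessian of $u_n$ — the resulting computation is the same.
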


\begin{proof}
It follows from (\ref{grad-r-J}), the product rule, (\ref{grad-r}), and (\ref{grad H}) that
\begin{eqnarray*}\nonumber
    \nabla_{\br} \bG(\hat{\bc},\br)
    &=&\int_\Omega \mu \big(\nabla_{\br}u_n\big)\left({\bH}\otimes \by\right)^T\,d\bx+
    \int_\Omega \mu \left(u_n-u\right)\big(\nabla_{\br} {\bH}^T\big) \otimes\by^T\,d\bx \\[2mm]
    &=& \left(D(\bc)\otimes I_{d+1}\right)\int_\Omega \mu \big({\bH}\otimes \by\big)\big({\bH}\otimes \by\big)^T
    \,d\bx + \int_\Omega  \mu\left(u_n-u\right)
    \Lambda(\bx) \otimes\big(\by\by^T\big)\,d\bx ,
\end{eqnarray*}
which, together with \cref{grad-r-J} and the transpose rule of the Kronecker product, implies (\ref{Ha}). This completes the proof of the lemma.
\end{proof}

\begin{theorem}\label{l: Hspd}
Under the assumptions of \cref{l:l-indep-sigma}, the layer GN matrix ${\mathcal{H}}(\br)$ is symmetric positive definite. Accordingly, $\mathcal{G}(\bc,\br)$ is positive definite if and only if $c_i\not= 0$ for $i=1,\ldots,n$.
\end{theorem}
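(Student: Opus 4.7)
The plan is to reduce the claim to the linear independence result of \cref{l: l-indep} via a direct quadratic form computation that exploits the Kronecker structure of $\mathcal{H}(\br)$. Symmetry is immediate since $\bH(\bx)\bH(\bx)^T$ and $\by\by^T$ are each symmetric, and the Kronecker product of two symmetric matrices is symmetric, so the integrand is symmetric pointwise.

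For positive definiteness of $\mathcal{H}(\br)$, I would take an arbitrary vector $\bv \in \R^{n(d+1)}$, partitioned as $\bv=(\bv_1^T,\ldots,\bv_n^T)^T$ with $\bv_i\in\R^{d+1}$, and compute $\bv^T \mathcal{H}(\br)\bv$. Using the standard identity $(\bv_i^T\otimes I)(A\otimes B)(\bv_j\otimes I) = (\bv_i^T A \bv_j) B$ together with the product rule for Kronecker factors, one finds that
\[
 \bv^T \mathcal{H}(\br)\bv = \int_\Omega \mu(\bx)\left(\sum_{i=1}^n H_i(\bx)\,\bv_i^T\by\right)^{\!2}\,d\bx \;\ge\; 0.
\]
This integral vanishes if and only if $\sum_{i=1}^n H_i(\bx)\,\bv_i^T\by \equiv 0$ a.e. in $\Omega$. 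Writing $\bv_i=(a_i,\beta_{i,1},\ldots,\beta_{i,d})^T$ so that $\bv_i^T\by = a_i + \sum_{k=1}^d \beta_{i,k} x_k$, the vanishing condition becomes
\[
 \sum_{i=1}^n \Big(a_i H_i(\bx)+\sum_{k=1}^d \beta_{i,k}\,x_k H_i(\bx)\Big)=0 \quad\text{in }\Omega.
\]
By \cref{l: l-indep}, the family $\{H_i(\bx),\,x_1 H_i(\bx),\ldots,x_d H_i(\bx)\}_{i=1}^n$ is linearly independent in $\Omega$, forcing $a_i=0$ and $\beta_{i,k}=0$ for all $i,k$, i.e.\ $\bv=\mathbf{0}$. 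This gives the strict positivity and hence $\mathcal{H}(\br)\succ 0$.

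For the second assertion about $\mathcal{G}(\bc,\br)=\big(D(\bc)\otimes I_{d+1}\big)\mathcal{H}(\br)\big(D(\bc)\otimes I_{d+1}\big)$, I would use the congruence structure. If every $c_i\neq 0$, then $D(\bc)\otimes I_{d+1}$ is nonsingular, and a congruence by a nonsingular matrix preserves positive definiteness, so $\mathcal{G}(\bc,\br)\succ 0$. Conversely, if some $c_{i_0}=0$, then choosing $\bv$ with $\bv_{i_0}$ any nonzero vector in $\R^{d+1}$ and $\bv_i=\mathbf{0}$ for $i\neq i_0$ gives $\big(D(\bc)\otimes I_{d+1}\big)\bv=\mathbf{0}$, hence $\bv^T\mathcal{G}(\bc,\br)\bv=0$ with $\bv\neq\mathbf{0}$, so $\mathcal{G}(\bc,\br)$ fails to be positive definite.

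The only nontrivial step is recognizing the quadratic form as a perfect square in the integrand; once that compression of the Kronecker structure is observed, the rest is mechanical and the linear independence hypothesis of \cref{l: l-indep} does the work. I anticipate no real obstacle beyond bookkeeping, since the admissibility set $\bUpsilon$ was introduced precisely so that \cref{l: l-indep} applies verbatim.
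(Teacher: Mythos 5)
Your proof is correct and follows essentially the same route as the paper: both rewrite $\bv^T\mathcal{H}(\br)\bv$ as the weighted $L^2$ norm of the function $\sum_{i=1}^n(\bv_i^T\by)H_i(\bx)$ (your ``perfect square'') and invoke \cref{l: l-indep} to force $\bv=\mathbf{0}$, then treat $\mathcal{G}(\bc,\br)$ via the congruence with $D(\bc)\otimes I_{d+1}$. The only difference is that you spell out the ``only if'' direction for $\mathcal{G}$ explicitly, which the paper leaves as an immediate consequence.
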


\begin{proof}
Based on Equation \cref{l-GN}, ${\mathcal{H}}(\br)$ is symmetric. For any $\bv^T=(\bbeta_1^T, \ldots, \bbeta_{n}^T)\in \R^{n(d+1)}$ with $\bbeta_i\in \R^{d+1}$, let
\[
v(\bx)= \sum_{i=1}^{n}\big(\bbeta_i^T\by\big)H_i(\bx).
\]
It can be observed that
\[
\bv^T{\mathcal{H}}(\br)\bv=\bv^T\left(\int_\Omega \mu(\bx) \big({\bH}\otimes\by\big) \big({\bH}\otimes\by\big)^T\,d\bx\right)\bv
=\Vert v\Vert_\mu^2\ge 0,
\]
which, together with \cref{l: l-indep}, implies that ${\mathcal{H}}(\br)$ is positive definite. Next, the result on the positive definiteness of $\mathcal{G}(\bc,\br)$ then naturally follows.
This completes the proof of the theorem.
\end{proof}

With the assumptions of \cref{l:l-indep-sigma}, \cref{l: Hspd} indicates that the singularity of the GN matrix $\mathcal{G}(\bc,\br)$ is directly determined by $\bc$. Once any possible zero entries of $\bc$ are removed, $\mathcal{G}(\bc,\br)$ is guaranteed to be positive definite.

\subsection{SgGN method: Removing singularity without shifting\label{sub:sggn}}
Solving the NLS problem outlined in \cref{nonlinear} typically involves methods like the LM algorithm, which adds a shifting term $\lambda_k I$ to the GN matrix for handling potential singularity. However, determining an optimal value for $\lambda_k$ remains challenging in practice. 
Our SgGN method circumvents the need for such regularization by exploiting the factorized structure of $\mathcal{G}(\bc,\br)$, as established in \cref{l: H} and \cref{l: Hspd}. This structure provides explicit information about which parameters require updating, allowing for a more targeted approach that naturally handles singularity without artificial shifting. 

In particular, if a coefficient $c_i$ in the linear parameter vector $\bc$ is zero, the corresponding $i^{th}$ neuron makes no contribution to the current NN approximation $u_n\left(\bx; \hat{\bc}^{(k+1)},\br^{(k)}\right)$. This is mathematically reflected by the corresponding $i^{th}$ component in the optimality condition in \cref{grad-r-J}, which automatically holds since the corresponding gradient component is zero. Consequently, the associated nonlinear parameter $\br_i^{(k)}$ does not require updating.

\begin{remark}
In the case where $c_i=0$, the LM algorithm still computes an search direction for updating $i^{th}$ neuron through introducing shifting. This, while effectively solving a perturbed version of the problem, fundamentally distorts the original mathematical structure and generates an artificial search direction, one that is more influenced by the choice of the shifting parameter. To illustrate why this may be problematic, consider a simple example where the GN matrix takes the form $\begin{bmatrix}
    d_1 &\\ &0
\end{bmatrix}$. The LM method modifies this matrix by adding a multiple of the identity and solves the following system::
\[\left(\begin{bmatrix}
    d_1 &\\ &0
\end{bmatrix}+\lambda I\right)\begin{bmatrix}
    p_1\\ p_2
\end{bmatrix}=\begin{bmatrix}
    g_1\\ g_2
\end{bmatrix}.
\]
The resulting solution components are
\[
p_1=\frac{g_1}{d_1+\lambda},\quad p_2=\frac{g_2}{\lambda}.
\]
In order for the first component $p_1$ to be close to the true answer $\dfrac{g_1}{d_1}$ (without shifting), it is desirable to use small $\lambda$. However, doing so would lead to an update $p_2$ with a large magnitude, although there is supposed to be no update along that direction.
\end{remark}

Here, our structure of the GN matrix $\mathcal{G}(\bc,\br)$ enables us to explicitly remove the singularity. That is, we identify and exclude neurons with $c_i=0$ from the update process based on our explicit matrix structure as in  \cref{l: H} and \cref{l: Hspd}. In this way, we are able to formulate a reduced positive definite system that focuses solely on the relevant parameters. Specifically, let $\tilde{\bc}$ represent the subset of linear parameters with nonzero values, with $\tilde{\br}$ and $\tilde{\bH}(\bx)$ respectively denoting their corresponding nonlinear parameters and Heaviside functions. We construct a diagonal matrix $D(\tilde{\bc})$ containing only the nonzero elements $c_i\not=0$, and let $\tilde{\cH}(\tilde{\br})$ be the corresponding layer GN matrix for these active parameters. This allows us to define the {\em reduced GN matrix}
\begin{equation}\label{rGN}
    \tilde{\mathcal{G}}(\tilde{\bc},\tilde{\br}) = \big(D(\tilde{\bc})\otimes I_{d+1}\big) {\tilde{\mathcal{H}}}(\tilde{\br})\big(D(\tilde{\bc})\otimes I_{d+1}\big).
\end{equation}
With this formulation, one step of our modified GN method for solving \cref{nonlinear} becomes
\begin{eqnarray*}\nonumber
   \tilde{\br}^{(k+1)}&=&\tilde{\br}^{(k)}-\gamma_{k+1} \tilde{\mathcal{G}}^{-1}\left(\tilde{\bc}^{(k)},\tilde{\br}^{(k)}\right) \left(D\left(\tilde{\bc}^{(k)}\right)\otimes I_{d+1}\right) \tilde{\bG}\left(\hat{\bc}^{(k)}, {\br}^{(k)}\right) \\[2mm] \label{mGN} &=&  \tilde{\br}^{(k)}-\gamma_{k+1} \left(D^{-1}\left(\tilde{\bc}^{(k)}\right)\otimes I_{d+1}\right) 
   {\tilde{\mathcal{H}}}^{-1}\left(\tilde{\br}^{(k)}\right) \tilde{\bG}\left(\hat{\bc}^{(k)}, {\br}^{(k)}\right), 
\end{eqnarray*}
where the scaled gradient vector
\begin{equation}
    \tilde{\bG}(\hat{\bc},\br)=\int_\Omega \mu(\bx) \big(u_n(\bx)-u(\bx)\big) \tilde{\bH}(\bx)\otimes \by\,d\bx\label{eq:gtilde}
\end{equation}  incorporates only the active Heaviside functions. This selective update approach naturally maintains positive definiteness without requiring artificial regularization.

Following \cref{Ha}, we can see that the GN method for solving the nonlinear system \cref{grad-r-J} involves the solution of a linear system with coefficient matrix $\mathcal{G}(\bc,\br)$ in the factorized form that explicitly reveals its singularity. With this insight, we now describe the SgGN method. The algorithm starts with an initial function approximation $u_n^{(0)}(\bx)=u_n\left(\bx;\hat{\bc}^{(0)}, \br^{(0)}\right)$ by  initializing the nonlinear parameters $\br^{(0)}$, as they define the breaking hyperplanes that partition the domain, effectively forming a computational ``mesh'' for our approximation. Given $\br^{(0)}$, we compute the optimal linear parameters $\hat{\bc}^{(0)}$ on the current physical partition by solving 
\begin{equation}\label{initial-c}
    \mathcal{A}\big(\br^{(0)}\big)\,\hat{\bc}^{(0)} = \bff\big(\br^{(0)}\big).
\end{equation} 
For a detailed discussion of this initialization strategy, see \cite{LiuCai1, Cai2021linear}.
Then, given the approximation $u_n^{(k)}(\bx)=u_n\left(\bx;\hat{\bc}^{(k)}, \br^{(k)}\right)$ at the $k^{th}$ iteration, the process of obtaining 
\[
u_n^{(k+1)}(\bx)=u_n\left(\bx;\hat{\bc}^{(k+1)}, \br^{(k+1)}\right)
\]
proceeds as follows:
\begin{itemize}
\item[(i)] First, identify a set of {\em active} neurons in the current approximation $u_n^{(k)}(\bx)$. This is done by examining the linear weights $\bc^{(k)}$ associated with each neuron. Neurons whose corresponding weight magnitude $\left|{c}_i^{(k)}\right|$ meets or exceeds a threshold $\epsilon_{\bc}$ are considered active. This defines the active set $\mathcal{I}_{\text{active}}$:
\begin{equation}
\mathcal{I}_{\text{active}} = \left\{ i \in \{1, \dots, n\} \left|\; \left|{c}_i^{(k)}\right| \right.\ge \epsilon_{\bc} \right\}.\label{eq:active}
\end{equation}
The parameters corresponding to these active neurons, denoted as $\tilde{\bc}^{(k)}$ and $\tilde{\br}^{(k)}$, are extracted to form a reduced system. 

\item[(ii)] Next, construct and solve a reduced GN system using only the parameters associated with the active set identified in step (i). Unlike traditional methods that artificially modify the entire system, we form $\tilde{\bG}(\hat{\bc}^{(k)},\tilde{\br}^{(k)})$, $D(\tilde{\bc}^{(k)})$, and $\tilde{\mathcal{H}}(\tilde{\br}^{(k)})$, then solve
\begin{equation}\label{GN2}
{\tilde{\mathcal{H}}}\left({\tilde{\br}}^{(k)}\right){\tilde{\bs}}^{(k+1)} =  \tilde{\bG}(\hat{\bc}^{(k)},\tilde{\br}^{(k)})
\end{equation}
to obtain an intermediate search direction $\tilde{\bs}^{(k+1)}$. This focused approach ensures the system remains positive definite.
After obtaining the solution, the final search direction in the reduced space is computed by scaling as
\begin{equation*}
{\tilde{\bp}}^{(k+1)} = \left({D}^{-1} \left({\tilde{\bc}}^{(k)}\right)\otimes I_{d+1}\right){\tilde{\bs}}^{(k+1)},
\end{equation*}
and then map it back to the full parameter space by initializing $\bp^{(k+1)} = \bm{0}$ and setting $\bp^{(k+1)}_i = \tilde{\bp}^{(k+1)}_i$ only for indices $i \in \mathcal{I}_{\text{active}}$. This selective update strategy explicitly excludes inactive parameters, fundamentally addressing the mathematical structure of the problem rather than using shifting.

\item[(iii)] Then, determine the optimal step size $\gamma_{k+1}$ along the computed search direction $\bp^{(k+1)}$ by minimizing the line search objective function $\mathcal{J}_\mu$:
\begin{equation*}
\gamma_{k+1} = \underset{\gamma \in {\R^{+}_{0}} }{\arg\min} \mathcal{J}_\mu\left(u_n\big(\cdot;{\hat{\bc}}^{(k)},{\br}^{(k)}-\gamma {\bp}^{(k+1)}\big)\right),
\end{equation*}
which leads to the n\textbf{}onlinear parameter update
\begin{equation*}
{\br}^{(k+1)} = {\br}^{(k)} - \gamma_{k+1} {\bp}^{(k+1)}.
\end{equation*}
\item[(iv)] Finally, compute the optimal linear parameters $\hat{\bc}^{(k+1)}$ on the current physical partition by solving
\begin{equation}\label{GN1}
    \mathcal{A}\big(\br^{(k+1)}\big) \,\hat{\bc}^{(k+1)}=\bff\big(\br^{(k+1)}\big).
\end{equation} 
These updated parameters $\hat{\bc}^{(k+1)}$ and ${\br}^{(k+1)}$ together define the improved function approximation $u_n^{(k+1)}(\bx)=u_n\left(\bx;\hat{\bc}^{(k+1)},\br^{(k+1)}\right)$.
\end{itemize}

\smallskip

This iterative process continues until a desired level of accuracy or a maximum number of iterations is reached, yielding the final approximation function. See \cref{alg:iterations} for a pseudocode summary of the SgGN method.

\begin{algorithm}
    \caption{A structure-guided Gauss--Newton (SgGN) method for (\ref{min})}\label{alg:iterations}
    \begin{algorithmic}
    \REQUIRE{Initial approximation function $u_n^{(0)}(\bx) = u_n\left(\bx;\hat{\bc}^{(0)},{\br}^{(0)}\right)$, target function $u(\bx)$, coefficient threshold $\epsilon_{\bc}>0$, density function $\mu(\bx)$}
    \ENSURE{Optimized approximation function $u_n^{(K)}(\bx) = u_n\left(\bx;\hat{\bc}^{(K)},{\br}^{(K)}\right)$}
    \FOR{$k=0,1,\ldots, K-1$}
    
    \STATE $\triangleright$ \textit{Identify active parameters $\tilde{\bc}^{(k)}$ and $\tilde{\br}^{(k)}$ based on the current approximation $u_n^{(k)}(\bx) = u_n\left(\bx;\hat{\bc}^{(k)},{\br}^{(k)}\right)$}
    \smallskip
    \STATE Form active index set \cref{eq:active},
    \STATE Extract active linear parameters $\tilde{\bc}^{(k)} \leftarrow \left(c_i^{(k)}\right)_{i \in \mathcal{I}_{\text{active}}}$ and,
    \STATE Extract active nonlinear parameters $\tilde{\br}^{(k)} \leftarrow \left(\br_i^{(k)}\right)_{i \in \mathcal{I}_{\text{active}}}$.

    \smallskip
    
    \STATE $\triangleright$ \textit{Update nonlinear parameters $\br^{(k)}$}
    \smallskip
    \STATE Form the reduced matrices $D\left(\tilde{\bc}^{(k)}\right)$ and $\tilde{\mathcal{H}}\left(\tilde{\br}^{(k)}\right)$ in \eqref{rGN} and vector $\tilde{\bG}\left(\hat{\bc}^{(k)},\tilde{\br}^{(k)}\right)$ in \eqref{eq:gtilde},
    \STATE{Compute the reduced intermediate search direction ${\tilde{\bs}}^{(k)}\leftarrow {\tilde{\mathcal{H}}}\left({\tilde{\br}}^{(k)}\right){\tilde{\bs}}^{(k)} = - \tilde{\bG}(\hat{\bc}^{(k)},\tilde{\br}^{(k)})$},
    \STATE{Compute the reduce final search direction by scaling ${\tilde{\bp}}^{(k)} \leftarrow\left({D}^{-1} \left({\tilde{\bc}}^{(k)}\right)\otimes I_{d+1}\right){\tilde{\bs}}^{(k)}$},
    \STATE Initialize full search direction $\bp^{(k)}\leftarrow \bm{0}$
    \FOR{$i\in \mathcal{I}_{\text{active}}$}
    \STATE $\bp^{(k)}_i = \tilde{\bp}^{(k)}_i$
    \ENDFOR
    \STATE{Compute the step size $\gamma_{k+1} \leftarrow\underset{\gamma \in {\R^{+}_{0}} }{\arg\min} \mathcal{J}_\mu\left(u_n\big(\cdot;{\hat{\bc}}^{(k)},{\br}^{(k)}-\gamma {\bp}^{(k)}\big)\right)$},
    \STATE{Update nonlinear parameter $\br^{(k+1)} \leftarrow \br^{(k)} - \gamma_{k+1} \bp^{(k)}$}.

    \smallskip

    \STATE $\triangleright$ \textit{Update linear parameters $\hat{\bc}^{(k)}$}

    \smallskip
    
    \STATE Form the mass matrix $\mathcal{A}\left(\br^{(k)}\right)$ and  the right-hand side vector ${\bff\left(\br^{(k)}\right)}$ using equation \eqref{A-f},
    \STATE{Compute linear parameters $\hat{\bc}^{(k+1)}\leftarrow\mathcal{A}\left(\br^{(k)}\right)\hat{\bc}^{(k+1)} =\bff \left(\br^{(k)}\right)$}.
    \STATE $\triangleright$ \textit{Form the new approximation $u_n^{(k+1)}(\bx) = u_n^{(k+1)}\left(\bx; \hat{\bc}^{(k+1)}, \br^{(k+1)}\right)$}
    \smallskip
        \STATE Form the new approximation $u_n^{(k+1)}(\bx) = {c}_0^{(k+1)} + \sum\limits_{i=1}^n {c}_i^{(k+1)} \sigma\left(\br_i^{(k+1)}\cdot\by\right)$ .
        \smallskip
    \IF{a desired loss or a specified number of iterations is reached}
    \STATE Return $u_n^{(k+1)}(\bx) = u_n\left(\bx;\hat{\bc}^{(k+1)}, \br^{(k+1)}\right)$
    \ENDIF
\ENDFOR
    \end{algorithmic}
\end{algorithm}

\subsection{Practical considerations} \label{subsec:conditioning}

This section discusses some practical issues in the implementation of the SgGN method.

 The SgGN method needs to solve the linear systems in \eqref{GN1} and \eqref{GN2} during the iterations. Since the focus of this work is on designing the SgGN optimization method, here we just briefly mention some numerical issues related to these linear systems and leave the details to a forthcoming paper \cite{SgGN2}.
 The linear systems may be solved with direct or iterative solvers.
If $\br^{(k)}$ satisfies the assumption in Lemma~\ref{l: l-indep}, both the matrices $\mathcal{A}\left(\br^{(k)}\right)$ and $\mathcal{H}\left(\br^{(k)}\right)$ are symmetric positive definite (see \cref{l: A} and \cref{l: Hspd}). Nevertheless, both of them can be very ill conditioned. Take the 1D case as an example. When the $n$ breaking points are uniformly distributed over $\Omega = [0,1]$, the mass matrix $\mathcal{A}$ and the layer GN matrix $\mathcal{H}$ have $2$-norm condition numbers \cite{SgGN2,Qing22} 
\begin{equation}
    \kappa_2(\mathcal{A}) = O(n^4)\quad\mbox{and}\quad \kappa_2(\mathcal{H}) = O(n^2),
\end{equation}
respectively.
The condition numbers are even larger when there are clustered breaking points. A rigorous characterization of the conditioning is given in \cite{SgGN2}. In this paper, we use direct solvers for the purpose of verifying the convergence of the SgGN algorithm.  To accommodate highly ill-conditioned matrices, the direct inversion is done through truncated SVDs. This suffices our purpose of comparing the convergence of the SgGN with other methods.

The feasibility of designing more practical direct solvers based on structured methods will be discussed in \cite{SgGN2}. The SNLS problem in \cref{min} is a nonconvex optimization, and hence initialization is critical for the success of any optimization/iterative/training scheme. As discussed in \Cref{sec: net}, the nonlinear parameters $\br$ determine the basis functions $\left\{\sigma(\br_i\cdot\by)\right\}_{i=1}^n$ and hence the physical partition $\mathcal{K}(\br)$ of the domain $\Omega$, together with the linear parameters $\hat{\bc}$ serving as a NN approximation. Since the optimal configuration of this partition is generally unknown beforehand, a common and unbiased strategy (see, e.g., \cite{LiuCai1, Cai2021linear}) is to initialize $\br^{(0)}$ such that the corresponding hyperplanes uniformly partition the domain. The initial values of the linear parameters $\hat{\bc}^{(0)}$ are then the solution of \cref{min} with fixed $\br=\br^{(0)}$. This approach provides a reasonable starting point for the optimization process, which will subsequently adjust these hyperplanes to better capture the underlying function's features. However, this uniform partition strategy may not provide a good initial $\br^{(0)}$ for relatively large $n$. One may use the method of various continuations \cite{AlGe:90, LiuCai1, Cai2021linear, Cai2021nonlinear} for constructing a good initial; in particular, the adaptive neuron enhancement (ANE) method \cite{LiuCai1, LiuCai2} is a natural method of continuation with respect to the number of neurons.

Another practical aspect is the evaluation of integrals required by the SgGN method. These include the integrals forming the mass matrix $\mathcal{A}(\br)$, the right-hand side vector $\bff(\br)$, the layer GN matrix $\mathcal{H}(\br)$, and the gradient term $\bG(\hat{\bc},\br)$. These integrals are typically computed numerically using a suitable quadrature rule.
A commonly used approach is the composite midpoint rule applied over a uniform partition of the domain, which provides a simple and effective means of approximating integrals with reasonable accuracy. In our numerical experiments (see \Cref{sec: numerics}), we adopted this rule to evaluate all integrals involved in the SgGN method, including those defining the loss functional.
To mitigate this trade-off, one may employ adaptive quadrature strategies (see, e.g., \cite{LiCaRa23}), which selectively refine the quadrature points in regions of interest. Such approaches can reduce the total number of quadrature points while maintaining a comparable level of accuracy to uniform methods, thereby improving overall efficiency.

\section{SgGN for discrete least-squares optimization problems}\label{sec: discrete}
Building upon its formulation for continuous LS problems, this section details the adaptation of the SgGN method for discrete LS problems.

Consider a given discrete data set $\left\{\left(\bx^{i}, u^{i}\right)\right\}_{i=1}^m$, where each $\bx^i \in \Omega$ is an input point with a corresponding target function value $u^i\in\R$. An associated distribution function $0 \leq \mu(\bx) \leq 1$ is also defined for these points. The objective is to find a NN function $u^*_n(\bx)=u_n(\bx;\hat{\bc}^*,\br^*)\in \cM_n(\Omega)$ of the form in \cref{u_n} 
that solves the discrete LS minimization problem:
\begin{equation}\label{min-d}
u_n(\bx;\hat{\bc}^*,\br^*) =\argmin_{v\in\cM_n(\Omega)}\mathcal{J}_{m,\mu}(v) =\argmin_{\hat{\bc}\in {\scriptsize\R}^{n+1},\, \br\in {\scriptsize\bUpsilon}}\mathcal{J}_{m,\mu}(u_n(\cdot;\hat{\bc},\br)),
\end{equation}
where $\mathcal{J}_{m,\mu}(v)$ is the weighted discrete LS loss function given by
\begin{align*}
\mathcal{J}_{m,\mu}(v) = \dfrac{1}{2} \sum_{i=1}^{m}\mu(\bx^{i})\left(v(\bx^{i}) - u^i\right)^2 = \frac{1}{2}\|v - u\|^2_{m,\mu}
\end{align*}
and $\|\cdot\|_{m,\mu}$ denotes the weighted discrete $L^2(\Omega)$ norm defined as
\begin{equation*}
    \|v\|_{m,\mu} = \left(\sum_{i=1}^{m}\mu(\bx^{i})v^2(\bx^{i})\right)^{\frac{1}{2}}.
\end{equation*}

The SgGN framework (\cref{alg:iterations}) extends naturally to this discrete problem in \cref{min-d}. The core structural insights and the alternating optimization strategy between linear and nonlinear parameters are preserved. The primary adaptation involves replacing the integral-defined matrices and vectors from \Cref{sec: training} with their discrete analogs, formed by summations over the data set. Specifically, the mass matrix and the right-hand side vector are given by
\begin{align*}
\mathcal{A}(\br) = \sum_{i=1}^{m} \mu(\bx^i)\, \hat{\bSigma}(\bx^i) \,\hat{\bSigma}(\bx^i)^T \quad \mbox{and}\quad
\bff(\br) = \sum_{i=1}^{m} \mu(\bx^i)\, u^i\, \hat{\bSigma}(\bx^i);
\end{align*}
the scaled gradient vector of $\mathcal{J}_{m,\mu}(u_n(\cdot;\hat{\bc},\br))$ with respect to $\br$ is 
\begin{equation*}
    \bG(\hat{\bc},\br)=\sum_{i=1}^{m} \mu(\bx^{i}) \big(u_n(\bx^{i})-u^i\big)\, {\bH}(\bx^{i})\otimes \by^{i},
\end{equation*}
where $\by^i=\by(\bx^{i})= (1,x_1^i,\ldots,x_d^i)^T$; and the layer GN and the GN matrices are given by
\begin{equation*}
    {\mathcal{H}}(\br)=\sum_{i=1}^{m} \mu(\bx^i) \left({\bH}(\bx^i)\,{\bH}(\bx^i)^T\right) \otimes\left(\by^i(\by^i)^T\right) \, \mbox{ and }\, \mathcal{G}(\bc,\br)= \big(D(\bc)\otimes I_{d+1}\big) {\mathcal{H}}(\br)\big(D(\bc)\otimes I_{d+1}\big),
\end{equation*}
respectively. 

Since $\|\cdot\|_{m,\mu}$ defines a norm in $\mathbb{R}^m$, under the assumptions of \cref{l:l-indep-sigma}, we may show, in a similar fashion to those of \cref{l: A} and \cref{l: Hspd}, that $\mathcal{A}(\br)$ and ${\mathcal{H}}(\br)$ are positive definite, and that $\mathcal{G}(\bc,\br)$ is positive definite if and only if $c_i\not=0$ for all $i\in \{1,\ldots,n\}$. Similarly to the GN matrix structure discussed in \Cref{sub:sggn}, we can also effectively handle singularity in the discrete setting. With these discrete definitions of its core components, the SgGN method (\Cref{alg:iterations}) can be readily applied to solve discrete least-squares optimization problems, inheriting its notable structural advantages and robust performance characteristics from the continuous case.

\section{Numerical Experiments}\label{sec: numerics}
In this section, we present a series of numerical experiments to demonstrate the effectiveness and accuracy of the proposed SgGN algorithm. A key advantage of SgGN lies in its ability to naturally exploit the GN matrix structure, producing effective search directions without requiring artificial regularization techniques. 
To specifically highlight this benefit, our first set of experiments (\Cref{sub:1dpiece}) includes a direct comparison between SgGN and LM, illustrating the performance advantage of SgGN over shifting-based LM.

Across all test cases, we benchmark the SgGN method against several widely used NN optimization algorithms, including the first-order method Adam \cite{kingma2015}, the quasi-Newton method BFGS \cite{broyden1970convergence,fletcher1970new,goldfarb1970family,shanno1970conditioning}, and the GN-based method KFRA\cite{botev2017practical}, which is considered more applicable than the earlier GN-based method KFAC \cite{martens2015optimizing}. This comprehensive comparison evaluates each algorithm's performance in terms of convergence speed, solution quality, and ability to address challenging function approximation tasks.

Our test problems span a range of function types, including step functions in one and two dimensions, a delta-like peak function in 1D, and a continuous piecewise linear function in 2D. These functions are well-suited for accurate approximations using shallow ReLU NNs. However, they pose significant challenges for optimization algorithms due to the presence of discontinuities or sharp transitions. 
As noted in \cite{Cai2021linear, LiuCai1}, the nonlinear parameters $\br$ correspond to the breaking points/lines of the neurons, which in turn form a physical partition of the domain. Therefore, an optimization algorithm’s effectiveness can also be assessed by its ability to reposition these breaking points/lines from their initial uniform distribution to the optimal configuration aligned with the features of the target function.

For consistency, we use BFGS as a baseline. For each test, BFGS is first repeated 30 times; and we report the median loss and the corresponding approximation result. The other methods (KFRA and SgGN) are run for the same number of iterations. For Adam, due to its slower convergence, we allow a significantly larger number of iterations, continuing until the loss function plateaus.

It is worth pointing out that different methods entail different per-iteration computational complexities. For example, the per-iteration cost of BFGS \cite{Wright} and KFRA \cite{botev2017practical} are both $O(n^2)$, while Adam has a lower cost of $O(n)$. The SgGN involves solving two dense linear systems, with coefficient matrices $\mathcal{A}(\br^{(k)})$ and $\mathcal{H}(\br^{(k)})$, respectively (see \cref{alg:iterations}).
These matrices are typically very ill-conditioned for the test problems considered here. In our current implementation, truncated SVDs are used for the solution, and its cost is $O(rn^2)$ with $r$ depending on the desired accuracy. Although we use the number of iterations as one of the reference metrics for assessing the efficiency, our primary focus in these experiments is on the quality of the solution. 
As demonstrated in the test cases, SgGN consistently converges to more accurate approximations than the other methods, even for problems involving sharp transitions and discontinuities.

The detailed parameter settings for each optimization method are provided in \cref{tab:para}. All methods are initialized with the same starting configuration, as described in the Initialization section in \cref{tab:para}. The integration of the loss function ${\cal J}_\mu \left(u_n\left(\cdot;\hat{\bc},\br\right)\right)$ is computed using the composite mid-point rule over a uniform partition with mesh size $h = 0.01$.

\begin{table}[!ht]
    \centering
    \caption{List of parameters used in the methods, where the parameters for BFGS and Adam follow deep learning toolbox \cite{MatlabDL}.}
    \label{tab:para}
    \begin{tabular}{|cl|}
        \hline

        \multicolumn{2}{|c|}{\textbf{BFGS}}\\
        \hline
       \mcode{net.trainParam.min_grad} & minimum performance gradient with value $0$\\
       \mcode{net.trainParam.max_fail} & maximum validation failures with value $10^4$\\
       \mcode{net.trainParam.epochs} & maximum number of epochs to train with value $10^4$\\
        \hline
        \multicolumn{2}{|c|}{\textbf{KFRA}}\\
        \hline
        \multirow{2}{*}{$\gamma$} &A damping parameter for the approximated Gauss-Newton\\
            & matrix induced by the full Gauss-Newton\\
        \hline
        \multicolumn{2}{|c|}{\textbf{Adam}}\\
        \hline
        \mcode{InitialLearnRate} & initial learning rate $\alpha_0$\\
        \mcode{DropRateFactor} & multiplicative factor $\alpha_f$  by which the learning rate drops\\
        \multirow{2}{*}{\mcode{DropPeriod}} & number of epochs that passes between adjustments to\\
            & the learning rate, denoted by $T$\\
        \hline\hline
        \multicolumn{2}{|c|}{\textbf{Initialization}}\\
        \hline
        Linear coefficient $\hat{\bc}$ & initialized by solving \cref{initial-c} for SgGN and by a narrow \\ & normal distribution $\mathcal{N}(0,0.01)$ for the other methods\\
        Nonlinear parameter $\br_i$ & the corresponding breaking hyperplanes uniformly partition\\ & the domain \\
        \hline
    \end{tabular}
\end{table}

\subsection{One-dimensional piecewise constant function\label{sub:1dpiece}}
The first test problem is a one-dimensional piece-wise constant function defined in the interval $[0,10]$, consisting of ten segments with values drawn from a skewed distribution(see \cref{fig1:step1D:problem}). This setup is designed to evaluate whether an optimizer can effectively relocate uniformly initialized breaking points to align with the discontinuities in the target function. Theoretically, a shallow ReLU NN with 20 neurons suffices to approximate a ten-piece step function to any prescribed accuracy $\epsilon >0$ \cite{Cai2021linear, Cai2023linear}. However, due to the non-convex nature of the optimization problem, we employ 30 neurons in our experiments to ensure sufficient model capacity.

To illustrate the structural advantages of the SgGN method discussed in \Cref{sub:sggn}, we begin with a direct comparison against the Levenberg–Marquardt (LM) algorithm, as implemented in MATLAB’s built-in routine \texttt{lsqnonlin}. We consider two LM configurations: (i) applying LM solely to the nonlinear parameters, and (ii) applying LM to all parameters simultaneously. In both cases, the LM-related settings are carefully tuned for optimal performance. The resulting approximations are shown in \Cref{subfig:LM-nl} and \Cref{subfig:LM-full}, respectively, and a comparative plot of the training loss versus that of SgGN is presented in \Cref{subfig:LM-loss}.

\begin{figure}[ptbh]
    \centering
    \subfigure[LM (nonlinear) $u_n$\label{subfig:LM-nl}]{
        \includegraphics[width=1.8in]{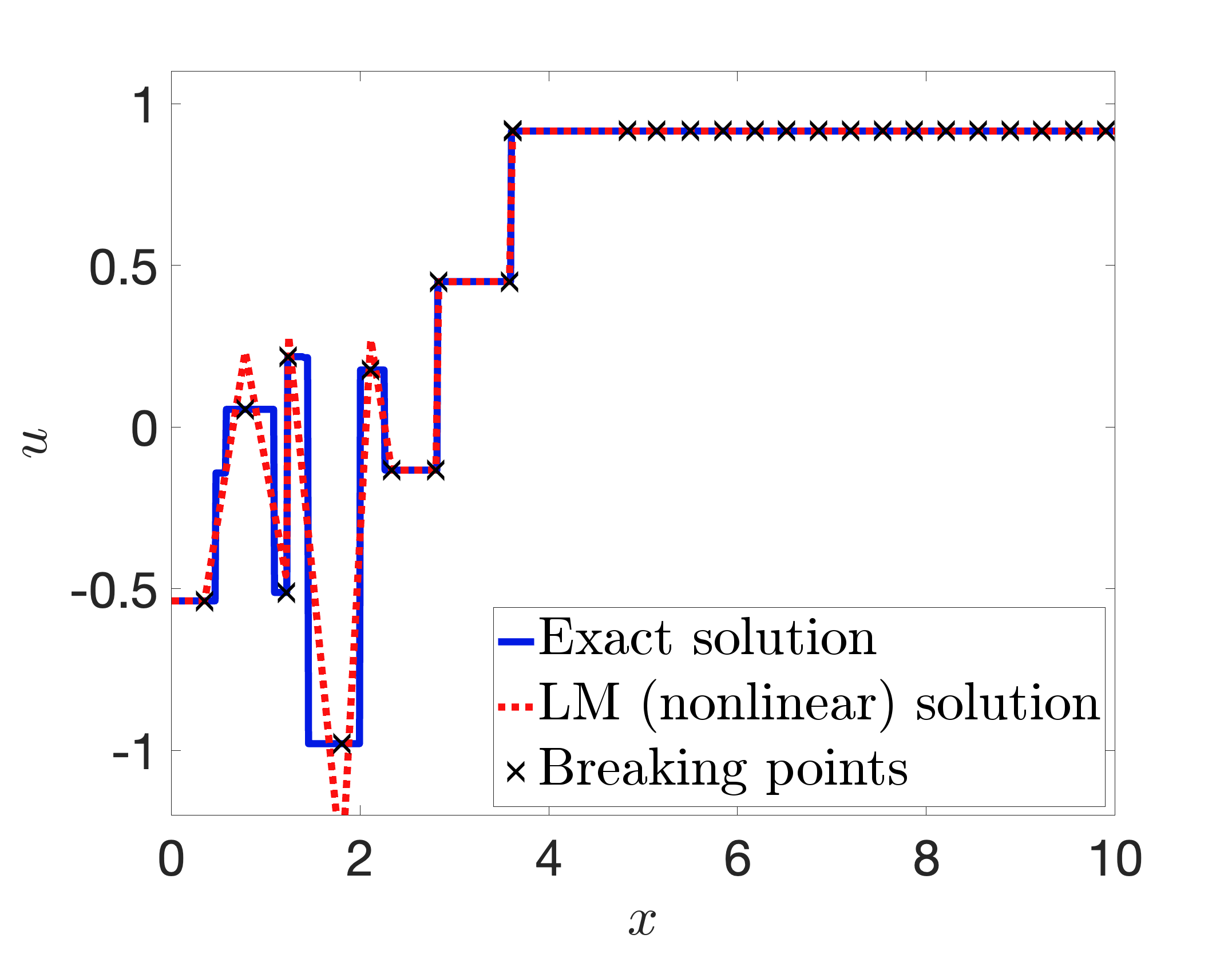}
    }
    \hfill
     \subfigure[LM (full) $u_n$\label{subfig:LM-full}]{
        \includegraphics[width=1.8in]{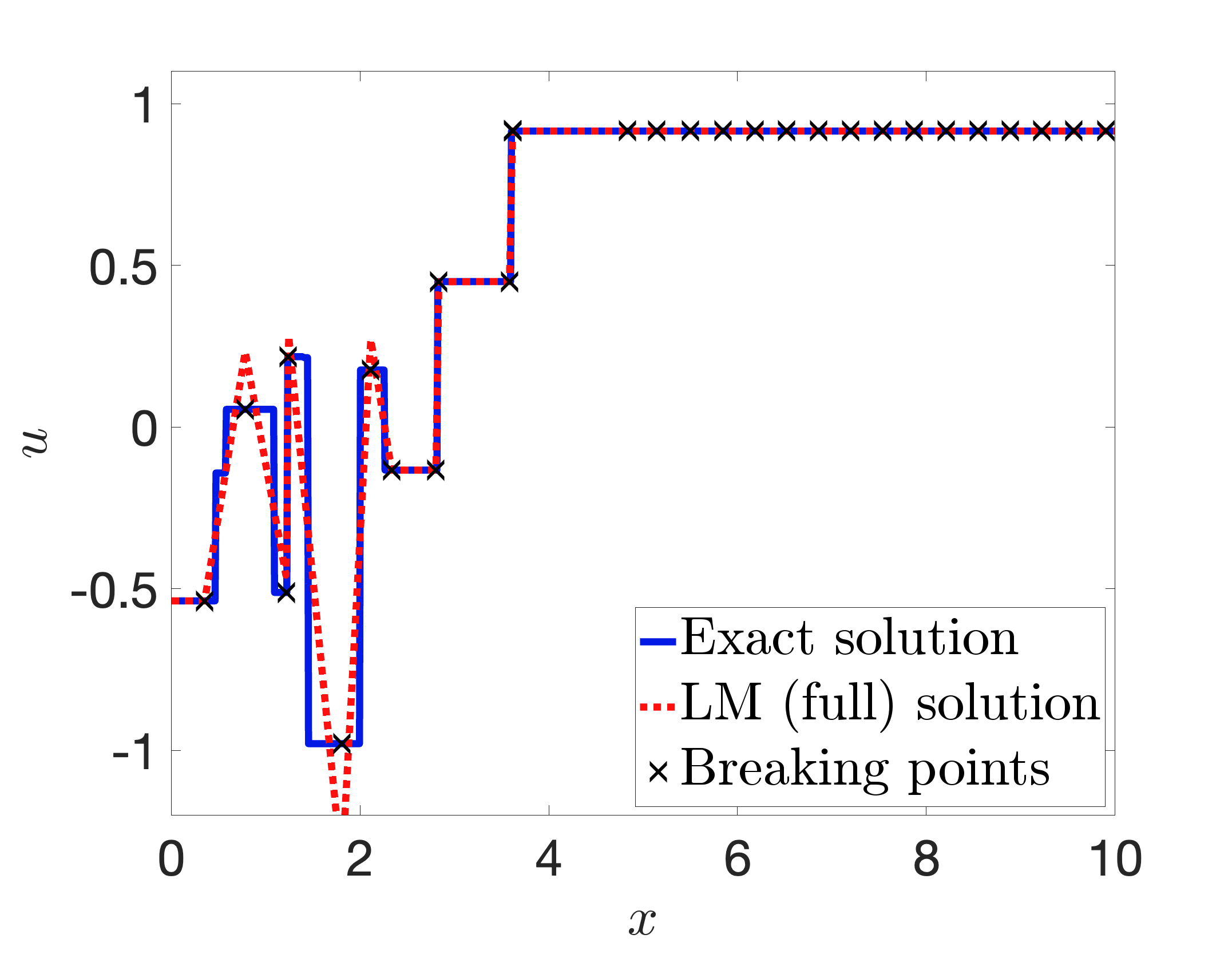}
    }
    \hfill
    \subfigure[Loss curves \label{subfig:LM-loss}]{
        \includegraphics[width=1.8in]{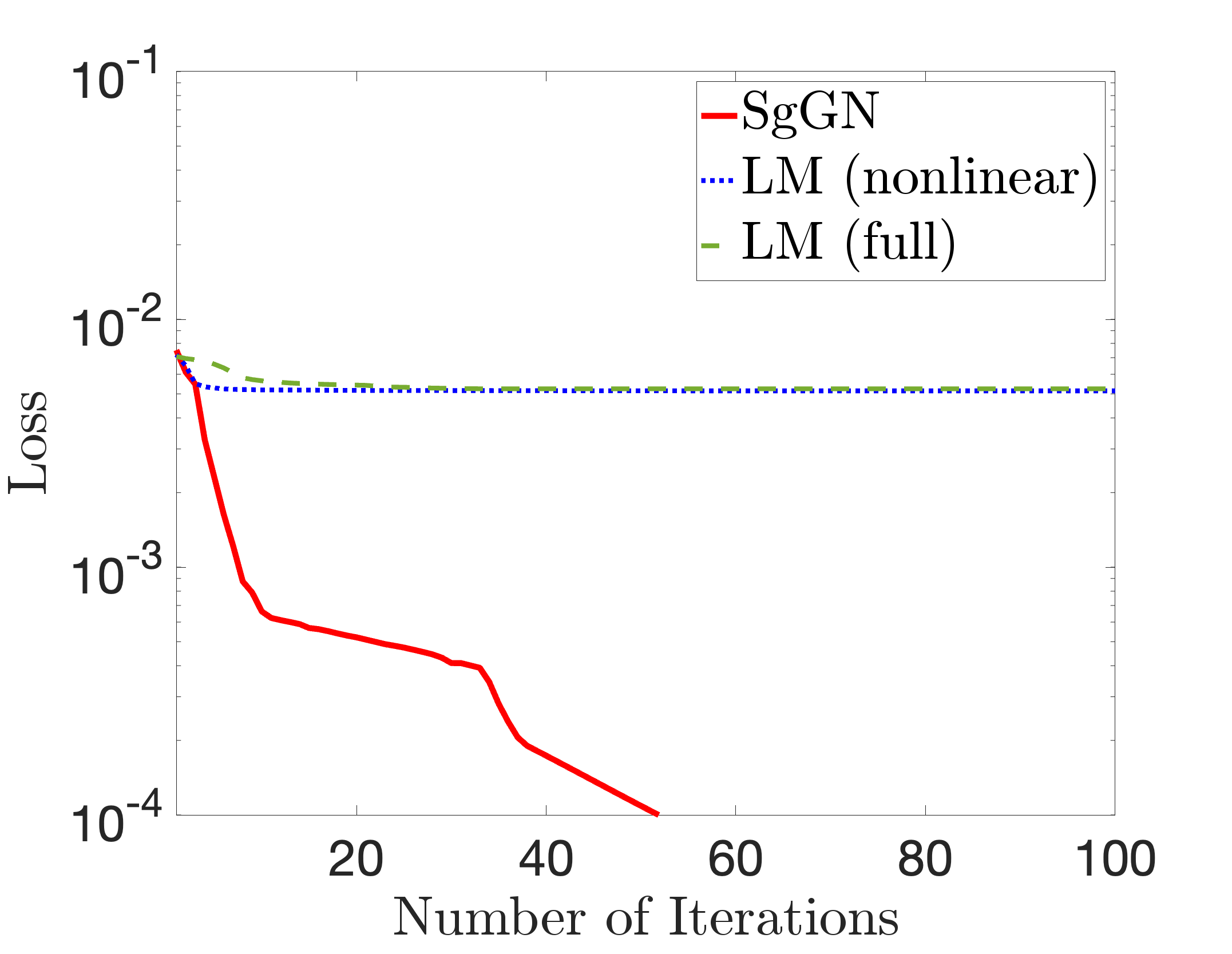}
    }
    \caption{Performance of the Levenberg-Marquardt (LM) method as compared with SgGN.}
    \label{fig:LM-performance}
\end{figure}

As shown in \Cref{fig:LM-performance}, the SgGN method achieves significantly lower training loss, highlighting the effectiveness of its structure-aware formulation.  In contrast, the LM algorithm relies on shift-based regularization to ensure matrix invertibility, which may hinder its ability to converge to high-accuracy solutions, especially in problems characterized by irregular discontinuities. 

Having established the structural advantages of SgGN over the LM method, we now turn our attention to a broader evaluation of SgGN against other widely adopted optimization techniques, including BFGS, Adam, and KFRA. To ensure a fair comparison, we carefully tune the learning rates and key hyper-parameters of each method to achieve their best possible performance on the test problems. Specifically, in this test problem, for the Adam optimizer, we use an initial learning rate of $\alpha_1= 0.1$, a drop factor $\alpha_f =0.5 $, and a drop period T=1000. For the KFRA method, we set the damping parameter to $\gamma=0.01$.

\begin{figure}[ptbh]
  \centering 
  \subfigure[Target function $u$ and initial breaking points as reflected by $\times$'s]{ 
    \label{fig1:step1D:problem} 
    \includegraphics[width=2.2in]{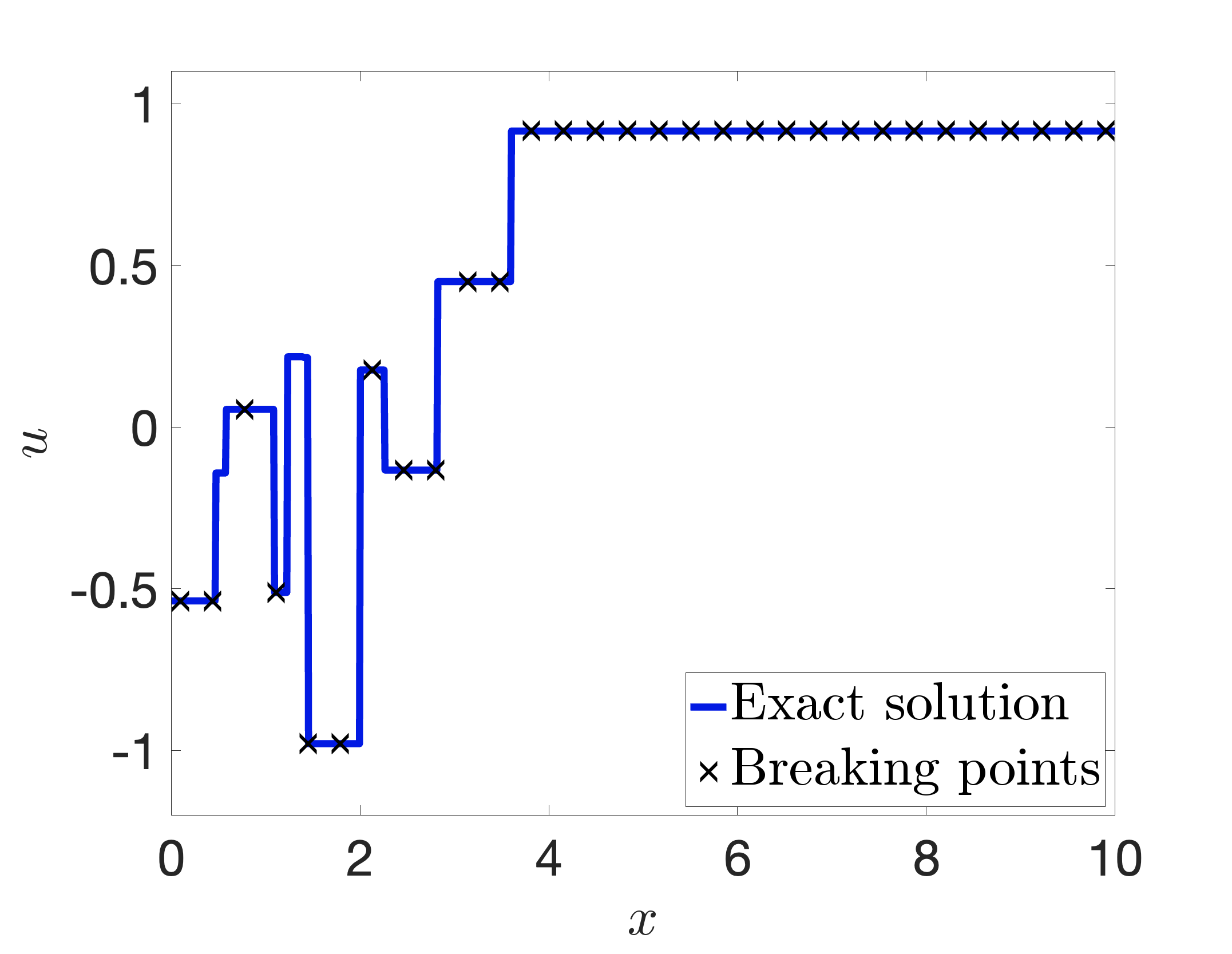}}
    \hspace{0.05in}
    \subfigure[Loss curves]{ 
    \label{fig1:step1D:loss} 
    \includegraphics[width=2.2in]{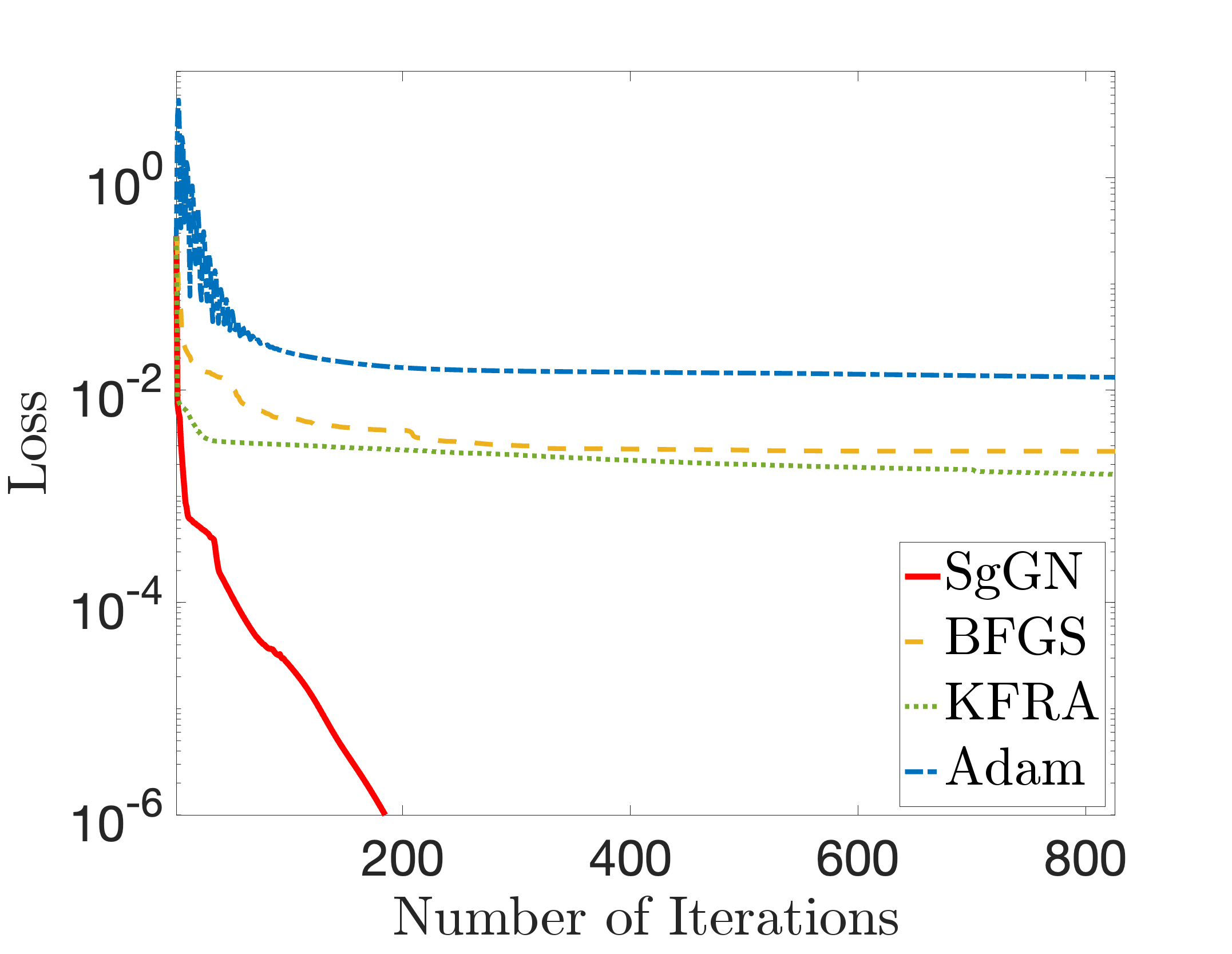}}
    \caption{Approximation of a piecewise constant function.}
  \label{fig1:step1D} 
\end{figure}

The loss decay curves for all methods are shown in \cref{fig1:step1D:loss}. While the loss for SgGN continues to decrease steadily beyond 200 iterations, the other three methods exhibit slower convergence, with their curves plateauing near their final loss values.
A quantitative comparison of the least-squares loss is provided in Table~\ref{tab1:step1D}, where SgGN achieves a remarkably low final loss on the order of $10^{-9}$, substantially outperforming the other methods, which reach loss values around $10^{-3}$.

Further insight is provided by the approximation results in Figure~\ref{fig2:step1D}. Only the SgGN method accurately captures all step discontinuities (Figure~\ref{fig2:step1D:SgGN}) by precisely aligning the neuron breakpoints with the function jumps. In contrast, the other methods (Figures~\ref{fig2:step1D:BFGS}, \ref{fig2:step1D:KFRA}, and \ref{fig2:step1D:Adam}) either miss several discontinuities or exhibit overshooting near the steps, reflecting their limited ability to adapt to the function's discontinuities.

\begin{table}[ptbh]
        \caption{Accuracy comparison for the approximation of the one-dimensional piecewise constant function.}
        \label{tab1:step1D}
    \begin{center}
        \begin{tabular}{ |c|c|c|c|c|c|c| }
             \hline
             Method & \multicolumn{2}{c|}{SgGN} &\multicolumn{2}{c|}{BFGS} & KFRA  & Adam \\
             \hline\hline
            Iteration& $\bm{9}$ &$825$& $207$& $ 825$ & $825$ & $10,000$ \\
             \hline
            $ \mathcal{J}_{m,\mu}$& $8.76\text{E-}4$ &$\bm{6.56\textbf{E-}9}$& $4.03\text{E-}3 $& $2.65\text{E-}3 $& $ 1.61\text{E-}3$  & $8.14\text{E-}3$ \\
             \hline
        \end{tabular}
    \end{center}
\end{table}

\begin{figure}[ptbh]
  \centering 
    \subfigure[SgGN $u_n$]{ 
    \label{fig2:step1D:SgGN} 
    \includegraphics[width=2.2in]{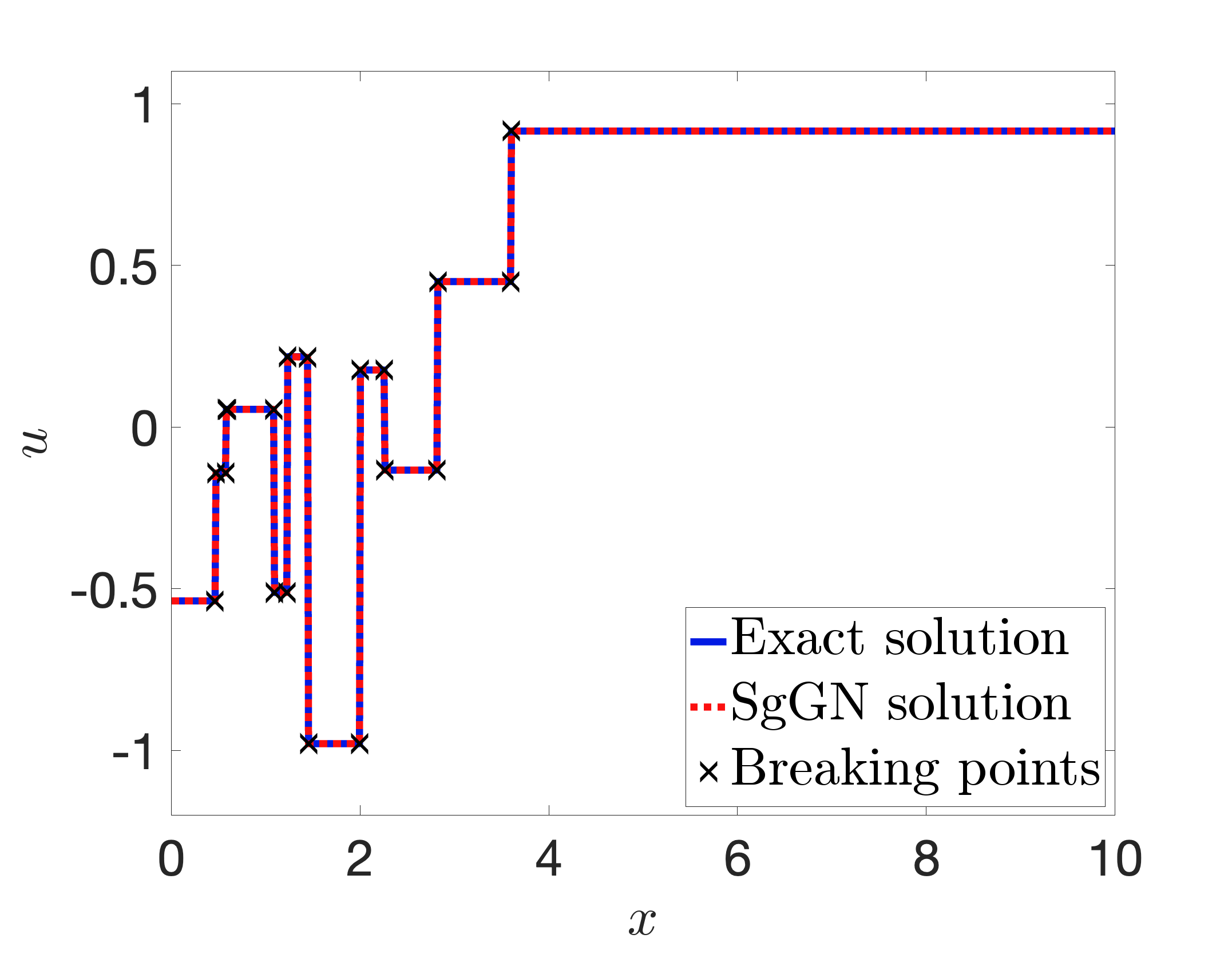}}
    \hspace{0.05in}
  \subfigure[BFGS $u_n$]{ 
    \label{fig2:step1D:BFGS} 
    \includegraphics[width=2.2in]{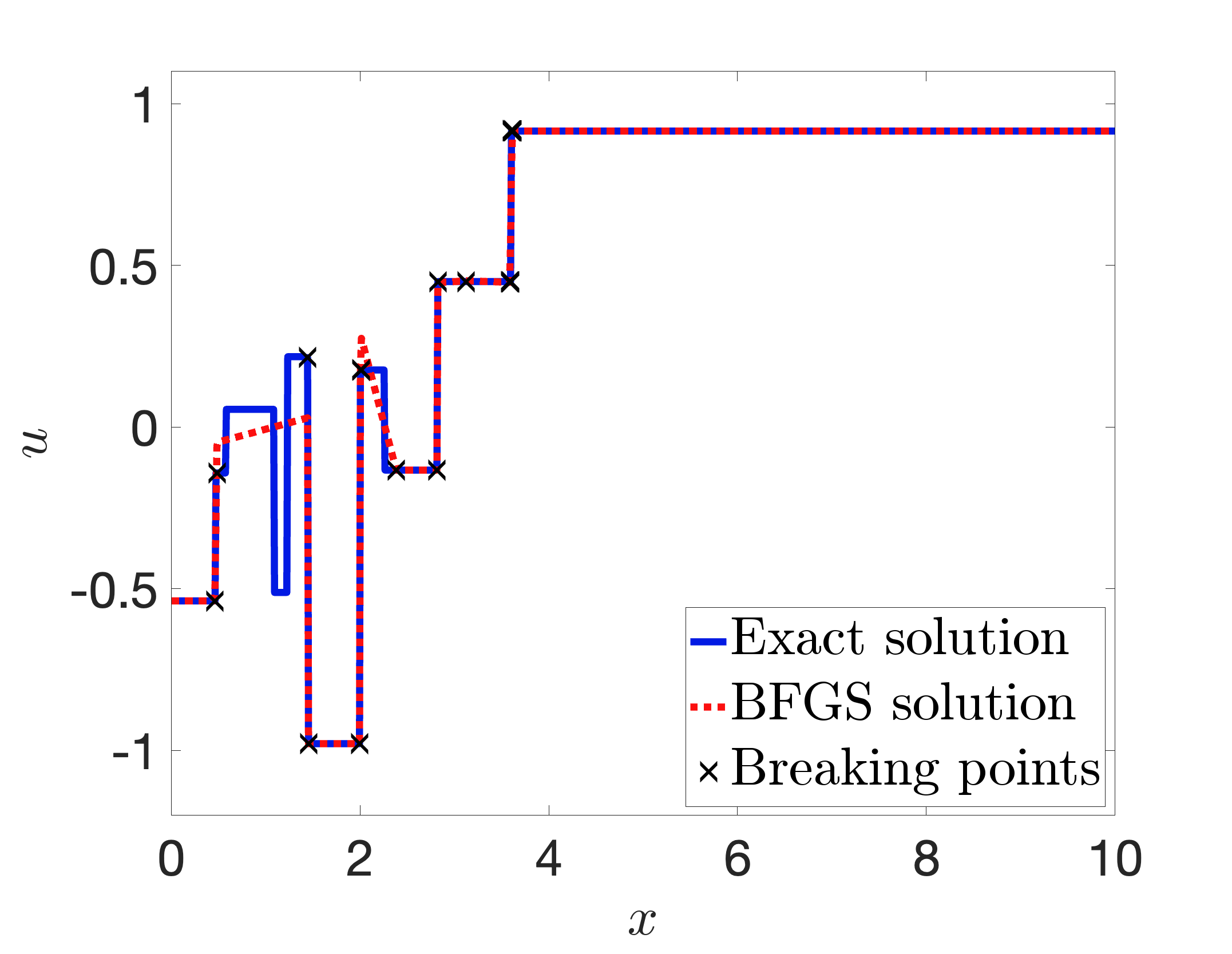}}
  \subfigure[KFRA $u_n$]{ 
    \label{fig2:step1D:KFRA}   \includegraphics[width=2.2in]{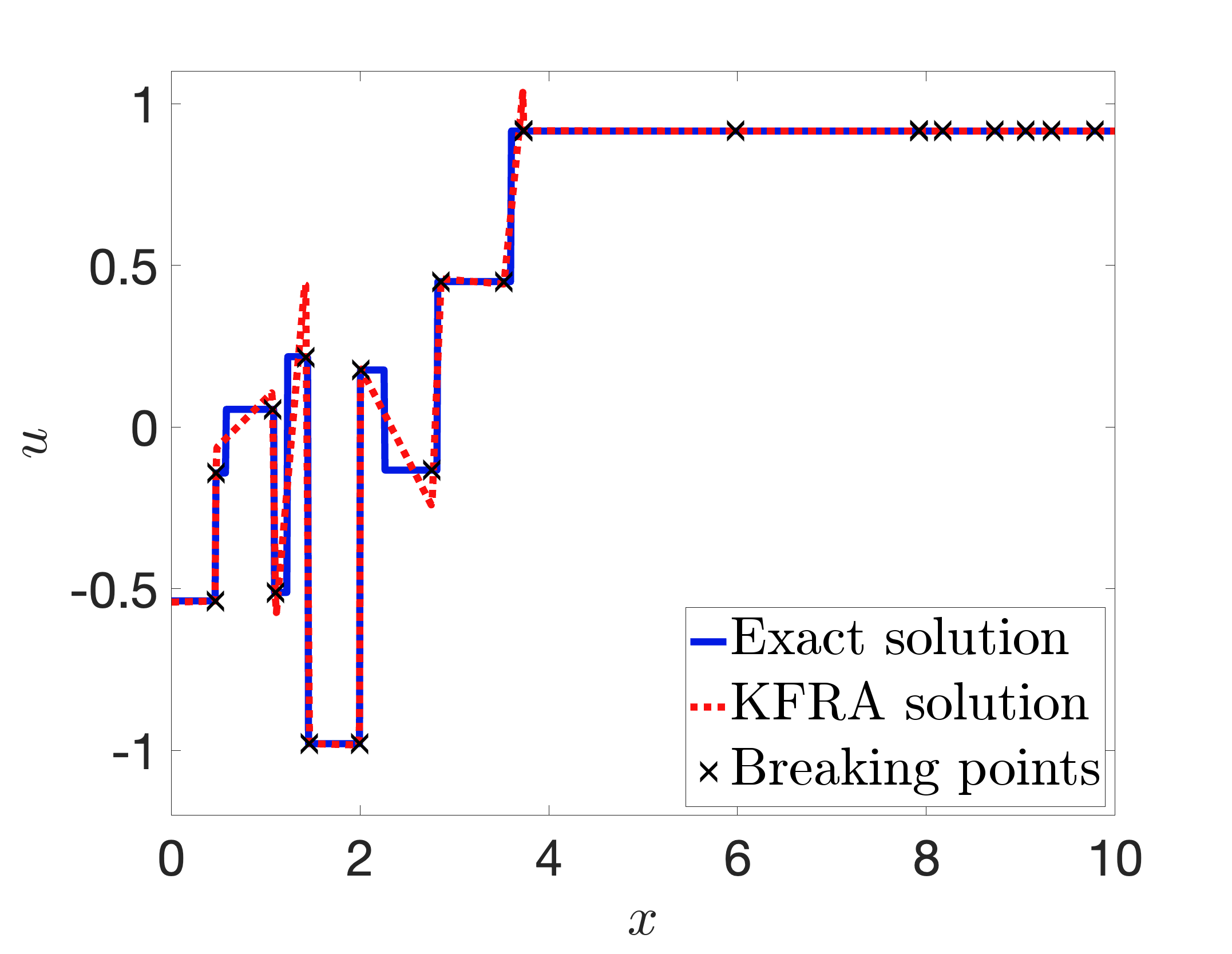}}
    \hspace{0.05in}
  \subfigure[Adam $u_n$]{ 
    \label{fig2:step1D:Adam} 
    \includegraphics[width=2.2in]{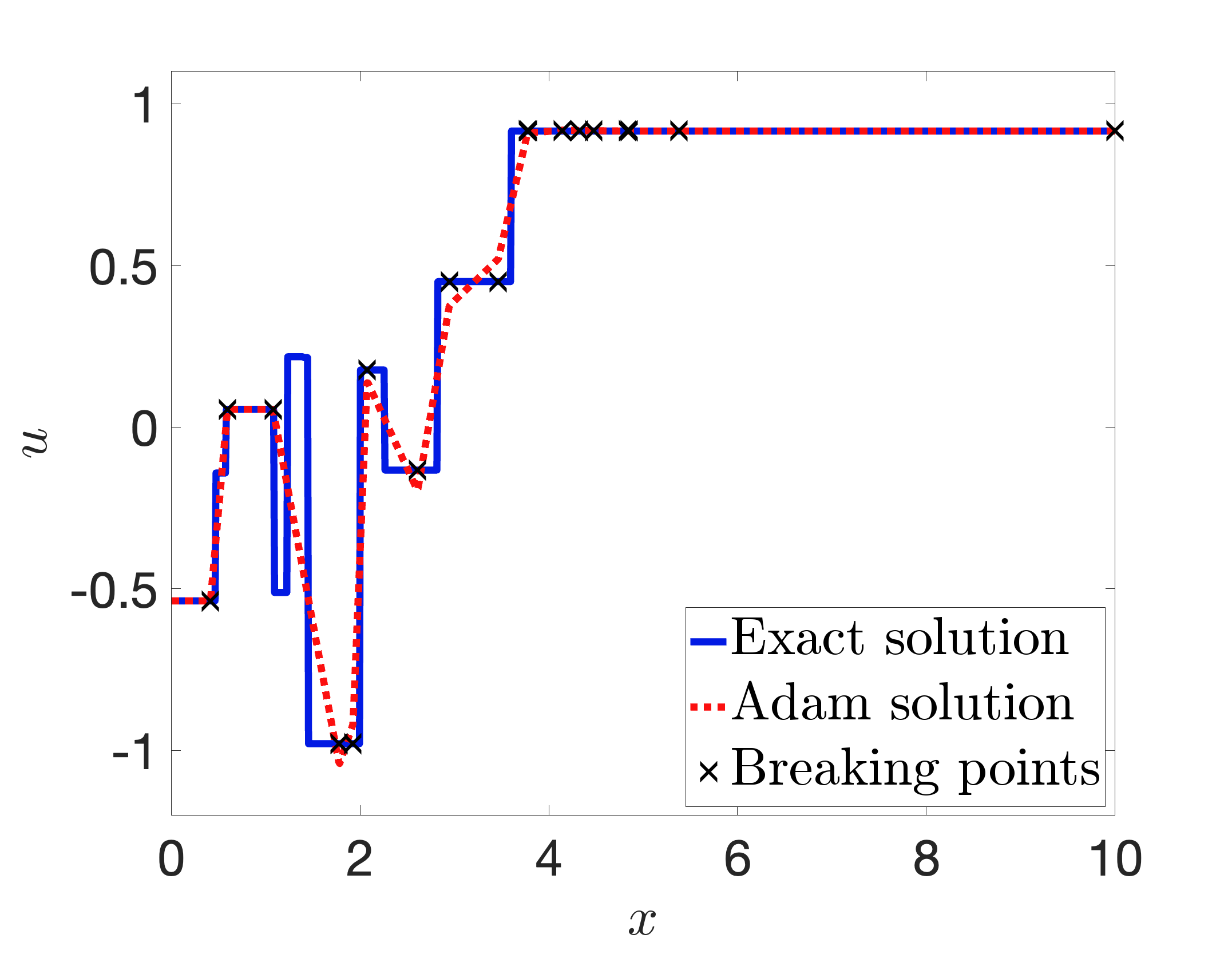}}  
  \caption{Approximation of the one-dimensional piecewise constant function.}
  \label{fig2:step1D} 
\end{figure}

\subsection{One-dimensional delta-like function}
In the second experiment, we examine a smooth but sharply peaked delta-like function introduced and defined in \cite{XiaShenWang16} 
\begin{equation*}
    u(x)=\sum_{i=1}^{k}\frac{1}{d_i(x-x_i)^2+1}, \quad x \in [-1.5,1.5], 
\end{equation*}
where $k$ denotes the number of centers, $x_i$ is the center position, and $d_i$ are sharpness parameters controlling the width of each peak \textemdash larger $d_i$ values correspond to narrower peaks.

In the test, we set $k=3$ with centers $\left\{x_1,x_2,x_3\right\} =\left\{ -\frac{\pi^2}{10}, -(\pi-\frac{5}{2}), \frac{\sqrt{85}}{10}\right\}$, and corresponding width parameters $\left\{d_1,d_2,d_3\right\} =\left\{ 10^4, 10^3 ,5\times10^3\right\}$.  A shallow ReLU NN with 15 neurons in the first hidden layer was used for all tests. The network was initialized with uniformly distributed breaking points,  as shown in \cref{fig3:delta:problem}. This experiment aims to evaluate two aspects of each optimization method: (1) its ability to relocate the initially rationally placed breaking points to accurately capture all three {\it irrationally} located peak centers, and (2) its capacity to adaptively allocate the 15 breaking points to reflect the varying sharpness of the peaks. For Adam, we used $\alpha_1=0.02 $, $\alpha_f =0.6 $ and $T = 2000$; for KFRA, we set $\gamma=0.0001$.

As shown in \cref{fig3:delta:loss}, SgGN achieves a significantly faster loss decay and converges to a superior solution within approximately 50 iterations. It reaches a final loss of magnitude $10^{-4}$, considerably lower than the $10^{-3}\sim10^{-2}$ range observed for the other methods (see \cref{tab2:delta}). The quality of the solution is further illustrated in Figure~\ref{fig4:delta}, where SgGN successfully aligns the breakpoints with all three peak centers and adaptively redistributes the remaining neurons to capture the peaks' different widths with high fidelity. In contrast, the other methods (Figures~\ref{fig4:delta:BFGS}, \ref{fig4:delta:KFRA}, and \ref{fig4:delta:Adam}) either fail to resolve all the peaks or inadequately distribute the breaking points, resulting in poor approximations of the narrow peak features.

\begin{figure}[!ht]
    \begin{center}
    \subfigure[Target function $u$ and initial breaking points as reflected by $\times$'s]{ 
        \label{fig3:delta:problem} 
     \includegraphics[width=2.2in]{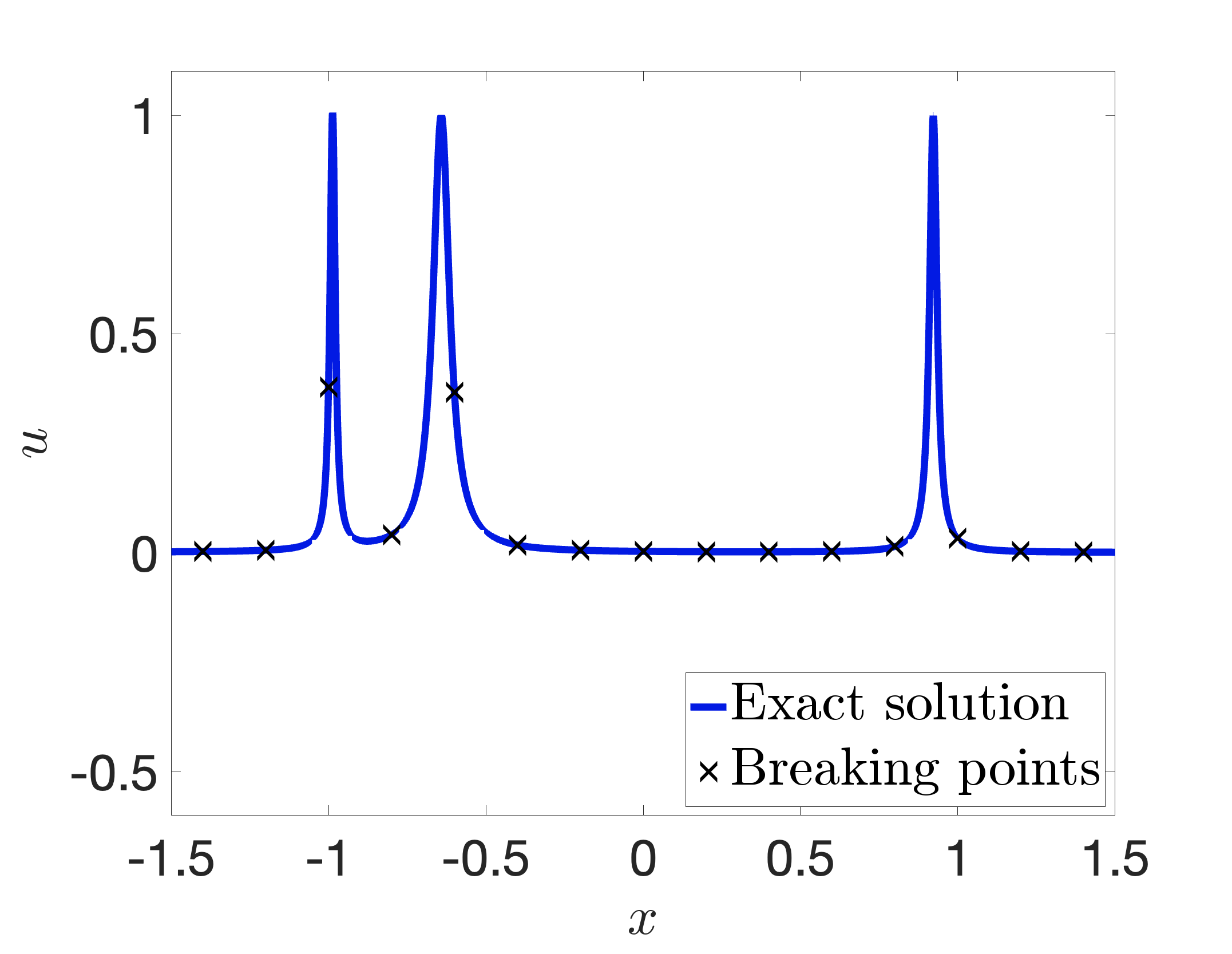}}   
     \subfigure[Loss curves]{ 
        \label{fig3:delta:loss} 
     \includegraphics[width=2.2in]{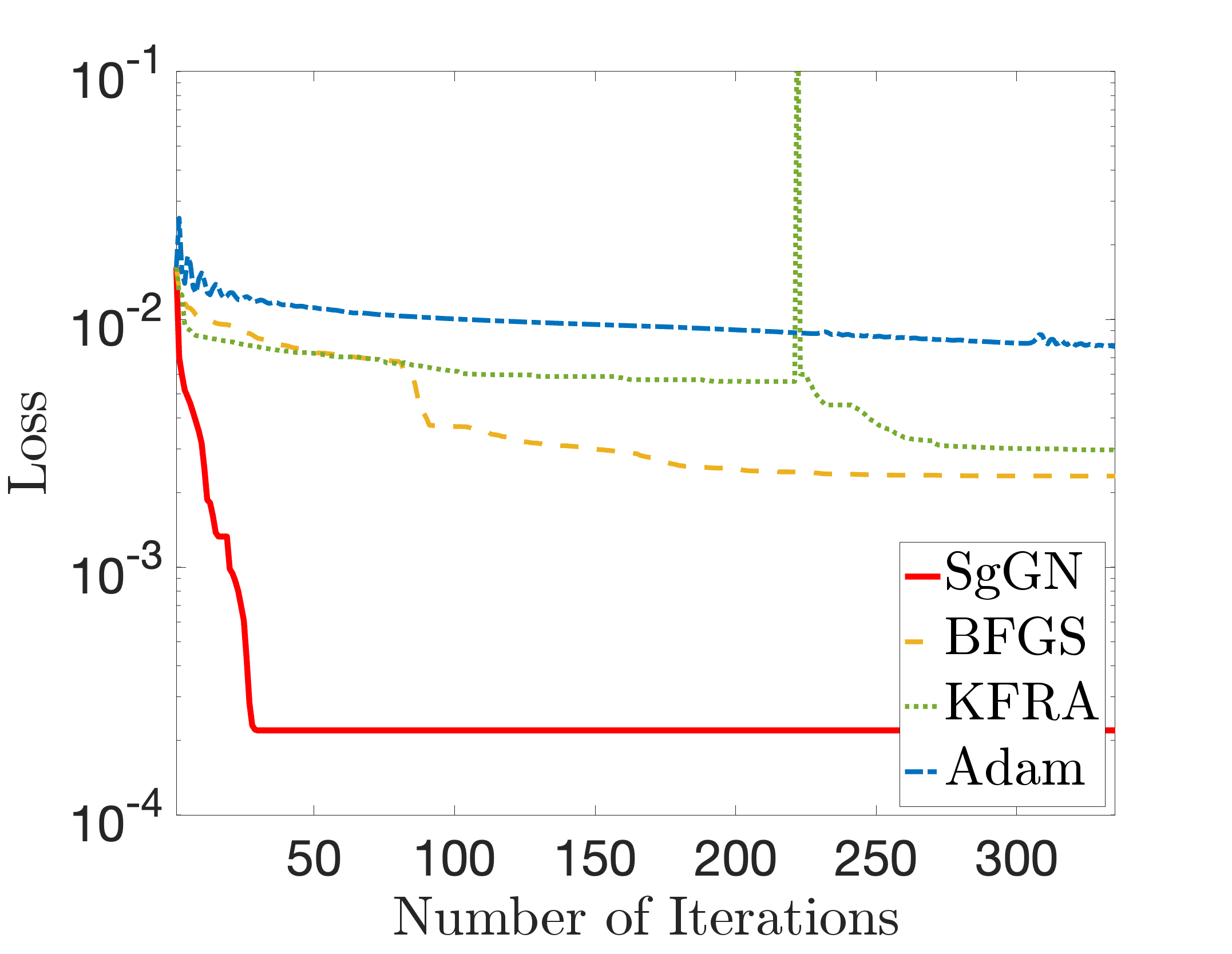}}
      \caption{Approximation of the one-dimensional delta-like function: target function, initial breaking points, and loss curves.}
    \label{fig3:delta}
    \end{center}
\end{figure}
 \begin{figure}[!ht]
    \begin{center}   
    \subfigure[SgGN $u_n$]{ 
    \label{fig4:delta:SgGN} 
    \includegraphics[width=2.2in]{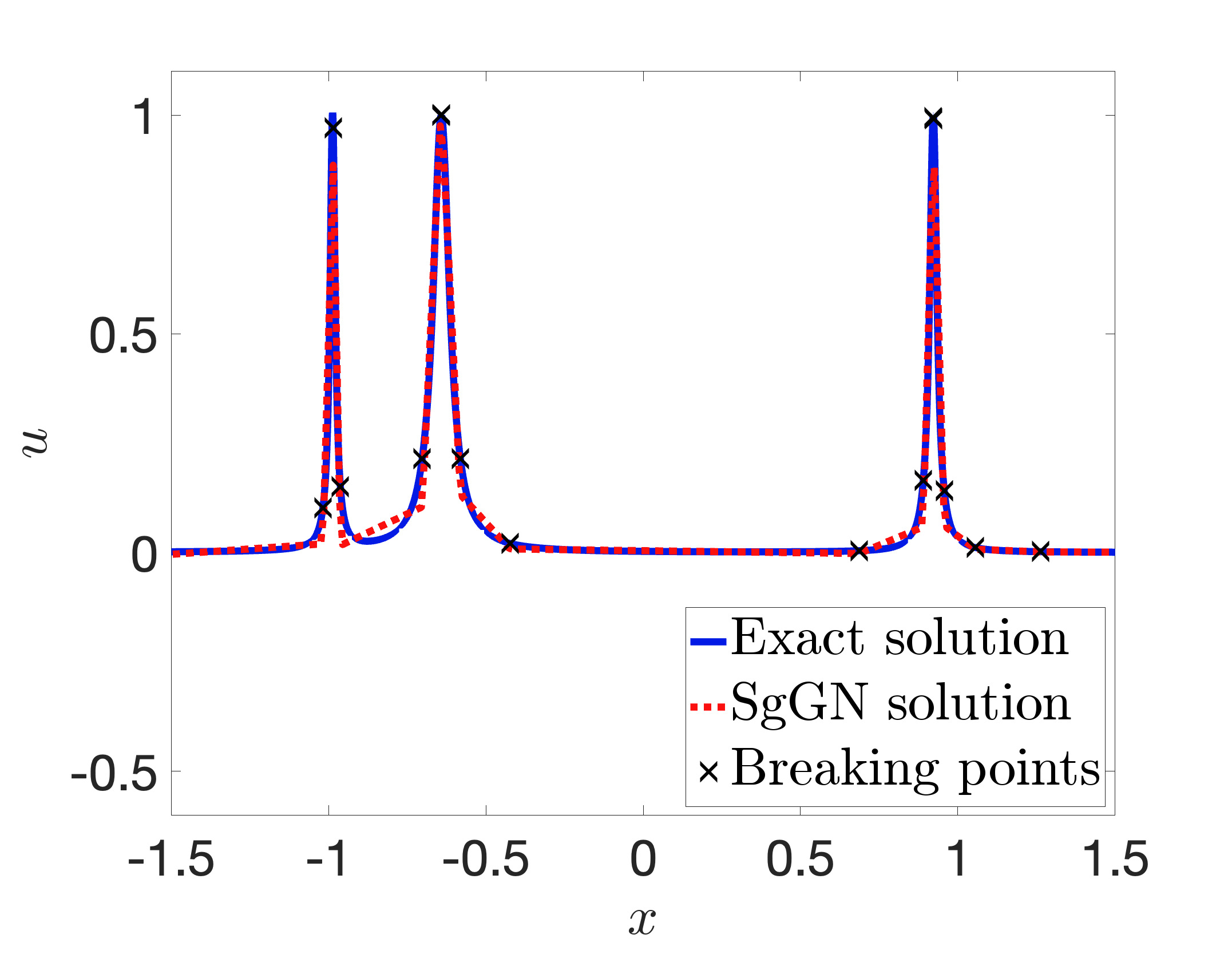}}
    \hspace{0.05in}
  \subfigure[BFGS $u_n$]{ 
    \label{fig4:delta:BFGS} 
    \includegraphics[width=2.2in]{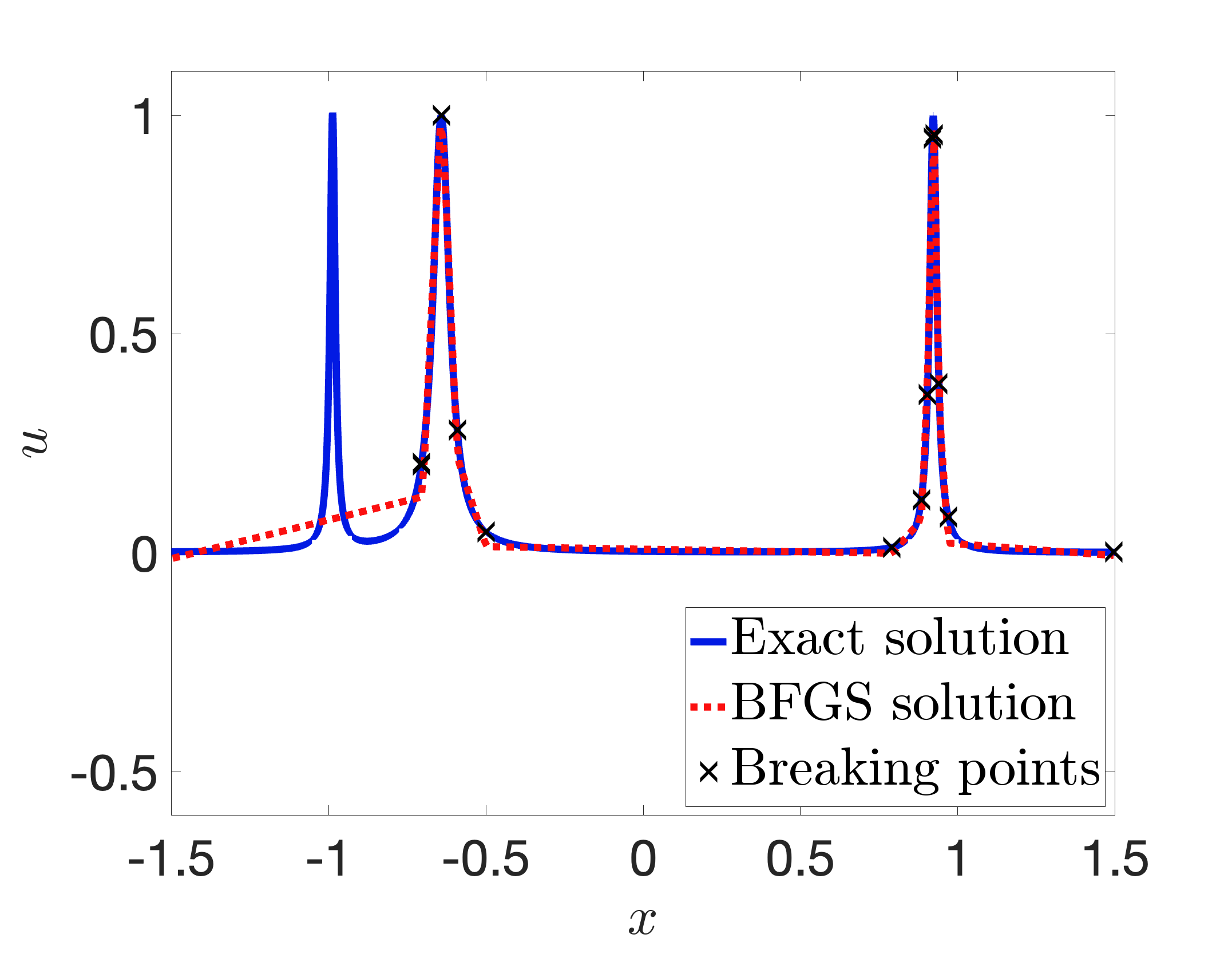}}
  \subfigure[KFRA $u_n$]{ 
    \label{fig4:delta:KFRA} 
    \includegraphics[width=2.2in]{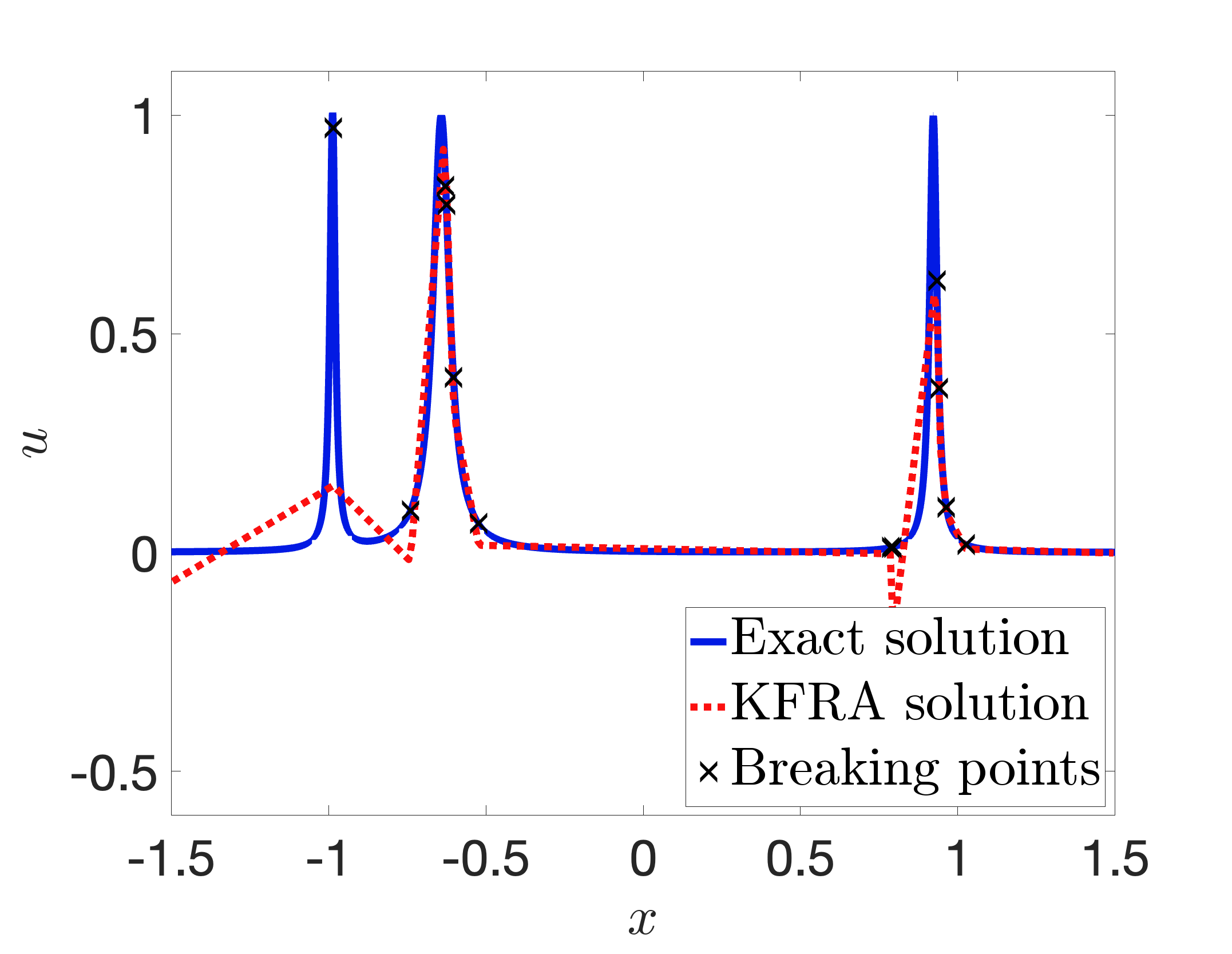}}
    \hspace{0.05in}
  \subfigure[Adam $u_n$]{ 
    \label{fig4:delta:Adam} 
    \includegraphics[width=2.2in]{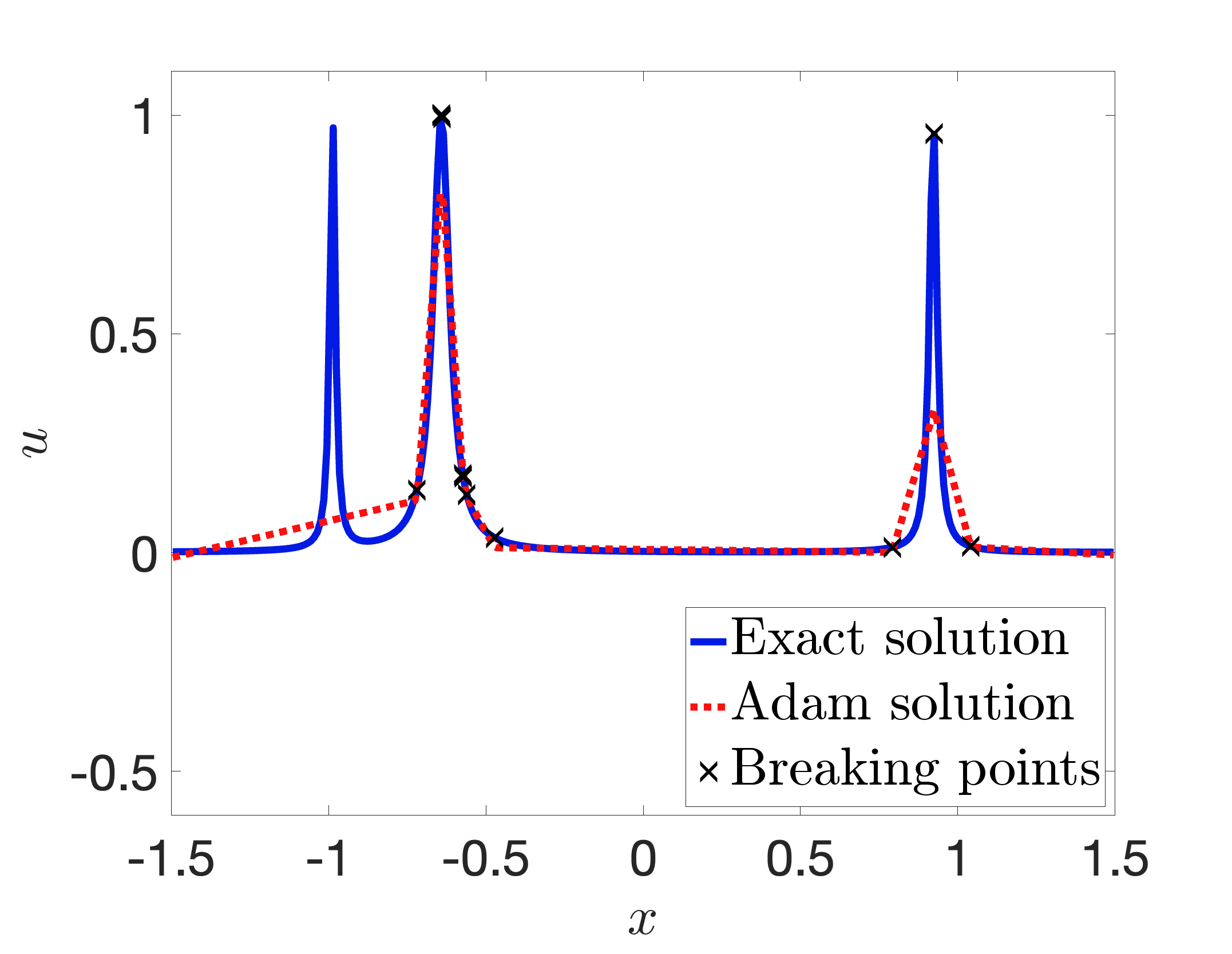}} 
     \caption{Approximation of the one-dimensional delta-like function: loss curves and approximation results using four optimization methods.}
    \label{fig4:delta}
    \end{center}
\end{figure}

\begin{table}[!ht]
    \caption{Accuracy comparison for the approximation of the one-dimensional delta-like function.}
    \label{tab2:delta}
    \begin{center}
        \begin{tabular}{ |c|c|c|c|c|c|c| }
             \hline
             Method & \multicolumn{2}{c|}{SgGN} &\multicolumn{2}{c|}{BFGS} & KFRA & Adam \\
             \hline\hline
            Iteration& $\bm{12}$ & $334$ &$91$&$334 $ &$ 334$ &$10,000$ \\
             \hline
             $\mathcal{J}_{m,\mu}$ & $1.87\text{E-}3$ & $\bm{2.19\textbf{E-}4}$ & $3.74\text{E-}3$&$2.33\text{E-}3$& $2.97\text{E-}3$&$3.94\text{E-}3$ \\
             \hline
        \end{tabular}
    \end{center}
\end{table}

\subsection{Two-dimensional piecewise constant function}
Next, we consider a 2D piecewise constant function defined in the domain $[-1,1]^2$:
\begin{equation*}
    u(\bx) = \begin{cases}
        1, & -0.5\leq x_1+x_2\leq 0.5,\\
        -1, &\text{otherwise.}
    \end{cases}
\end{equation*}

In contrast to the previous two examples \textemdash where more neurons than theoretically necessary were used to account for non-convex optimization uncertainties \textemdash this test deliberately employs the minimum number of neurons needed to assess each method's effectiveness under constrained model capacity. As illustrated in Figure~\ref{fig5:step2D:3D_exact}, each discontinuity in the target function can theoretically be approximated using just two neurons. Thus, a high-quality approximation should place a pair of neurons on either side of the discontinuity, with the closeness of their corresponding breaking lines indicating the accuracy of alignment. Based on this principle, we use only four neurons in this experiment. For the Adam optimizer, we set $\alpha_1=0.01$, $\alpha_f =0.8 $ and $T = 2000$; and for the KFRA method, $\gamma$ is set as 0.005.

The loss decay behavior is shown in Figure~\ref{fig5:step2D:loss}. The SgGN method not only exhibits faster convergence but also reaches its final training loss within approximately 20 iterations \textemdash substantially earlier than the other methods. Table~\ref{tab3:step2D} compares the least-squares losses after 142 iterations for the second-order methods and 10,000 iterations for Adam. Given the integration mesh size $h=0.01$,  there exists a theoretical lower bound on the achievable proximity of the breaking lines; which constrains the minimal attainable loss. Even under this limitation, SgGN achieves a final loss on the order of $10^{-3}$, outperforming the other methods, whose losses remain in the $10^{-2}$ range.
Figure~\ref{fig5:step2D} further demonstrates the spatial alignment of the breaking lines. As shown in Figure~\ref{fig5:step2D:bl_SgGN}, SgGN successfully positions all four breaking lines to accurately capture the discontinuities in the target function. In contrast, the other methods (Figures~\ref{fig5:step2D:bl_BFGS} and \ref{fig5:step2D:bl_Adam}) manage to align with only one side of the discontinuity, indicating a significantly less effective approximation.

\begin{figure}[!ht]
    \begin{center}
        \subfigure[Exact solution]{ 
        \label{fig5:step2D:3D_exact} 
        \includegraphics[width=1.5in]{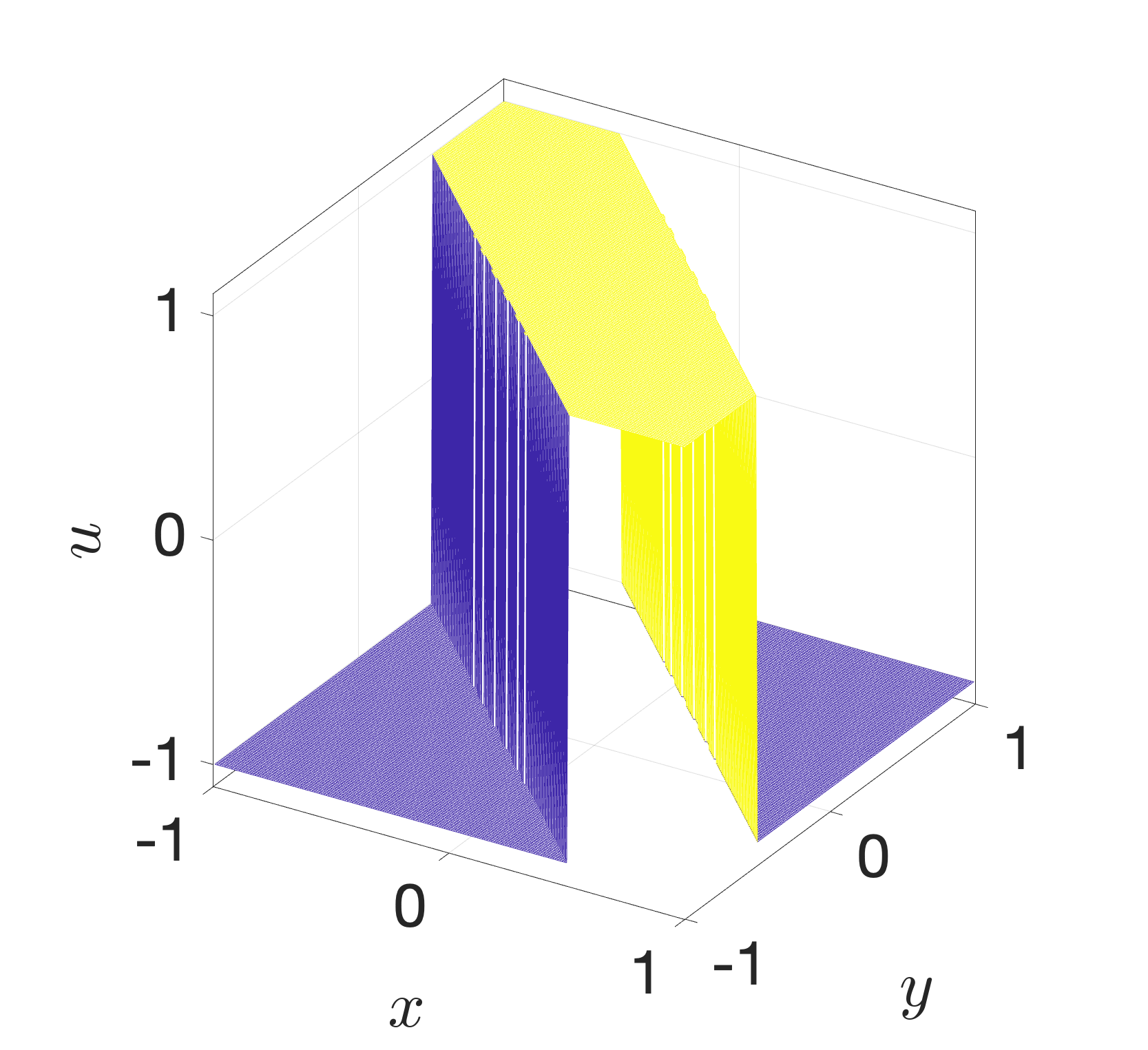}}
        \hspace{0.05in}
        \subfigure[Initial breaking lines]{ 
        \label{fig5:step2D:bl_init} 
        \includegraphics[width=1.5in]{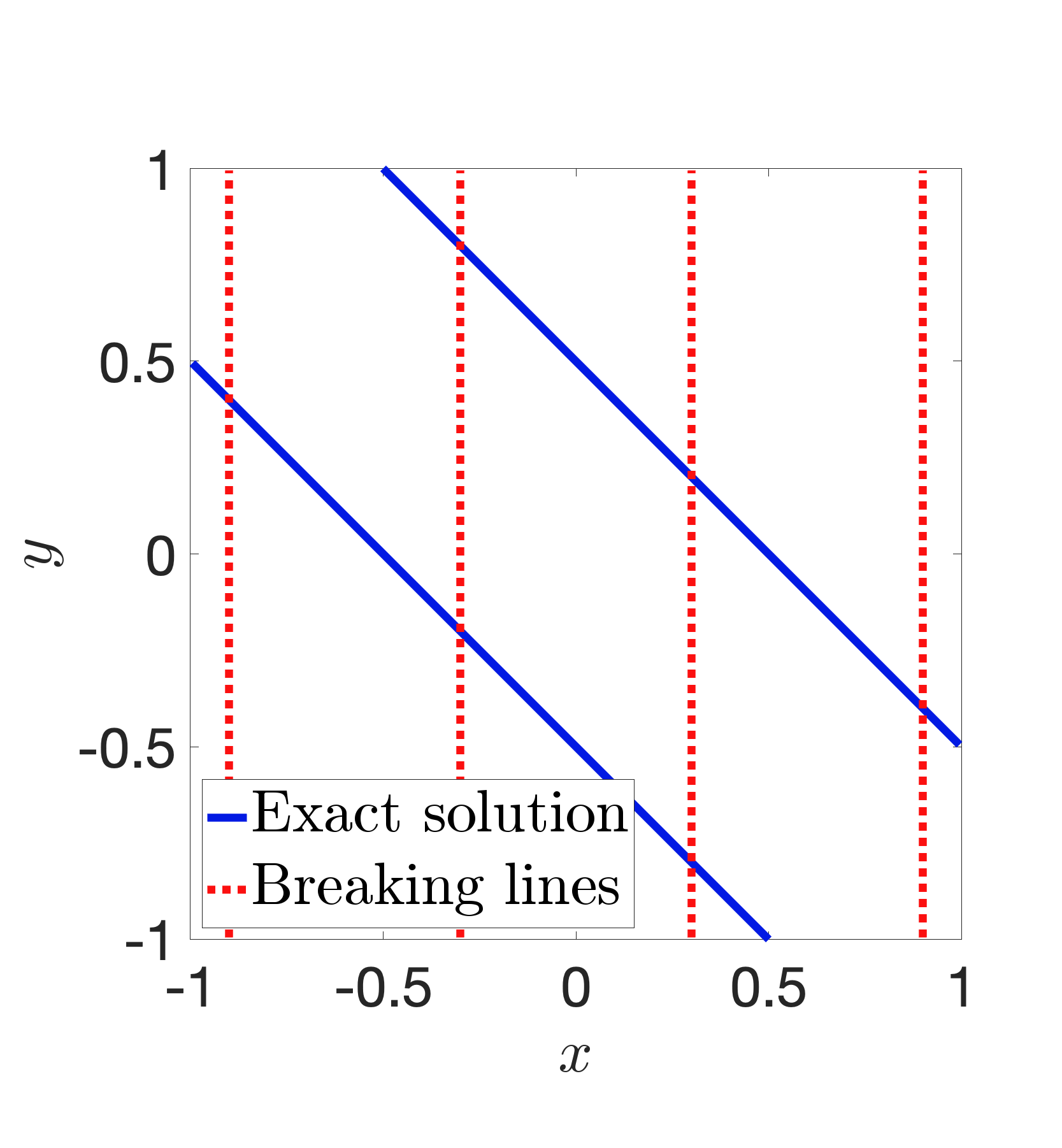}}
        \hspace{0.05in}
        \subfigure[Loss curves]{ 
        \label{fig5:step2D:loss} 
        \includegraphics[width=1.5in]{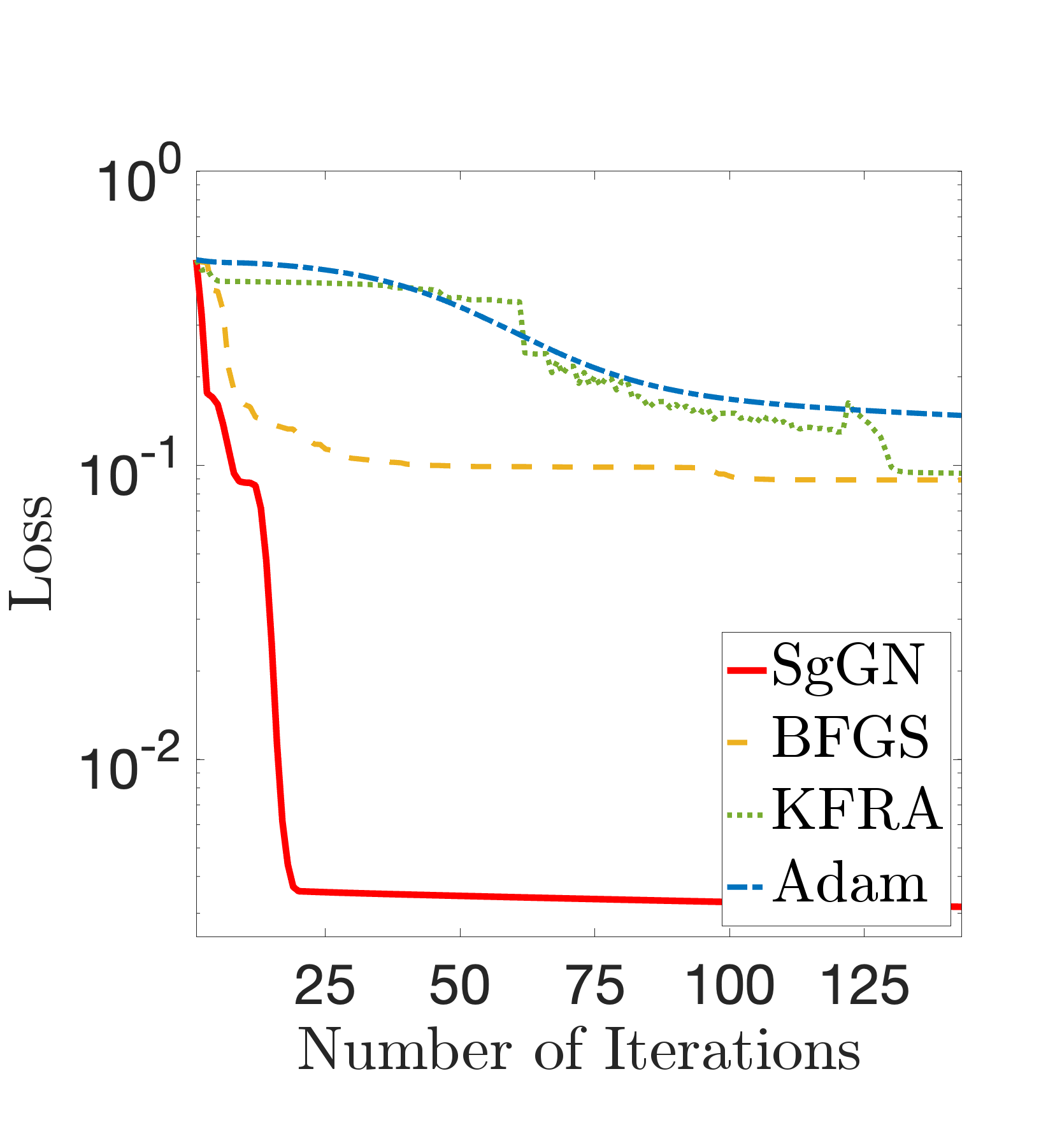}}
        \hspace{0.05in}\\
        \subfigure[SgGN $u_n$]{ 
        \label{fig5:step2D:3D_SgGN} 
        \includegraphics[width=1.35in]{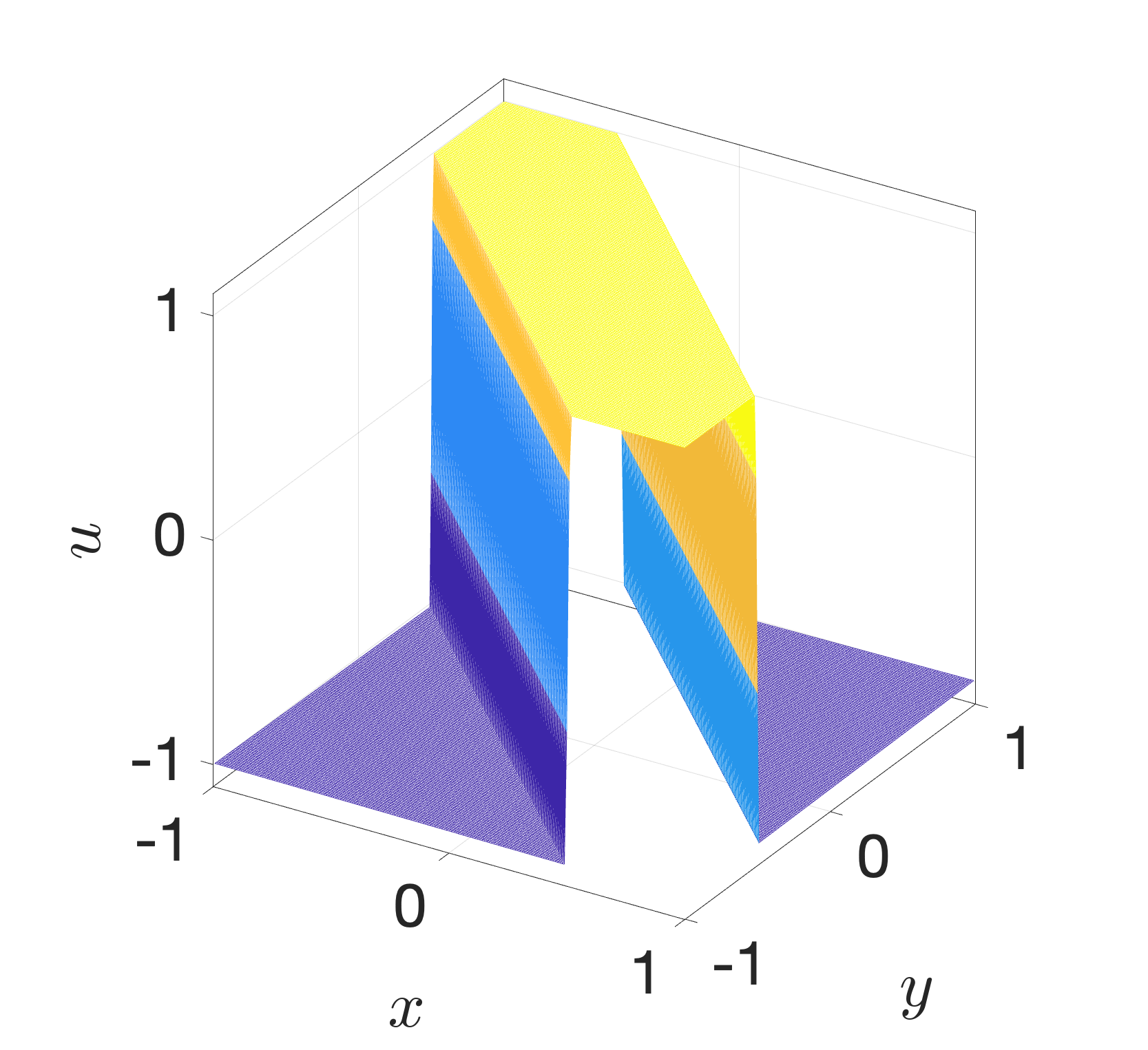}}
        \subfigure[BFGS $u_n$]{ 
        \label{fig5:step2D:3D_BFGS} 
        \includegraphics[width=1.35in]{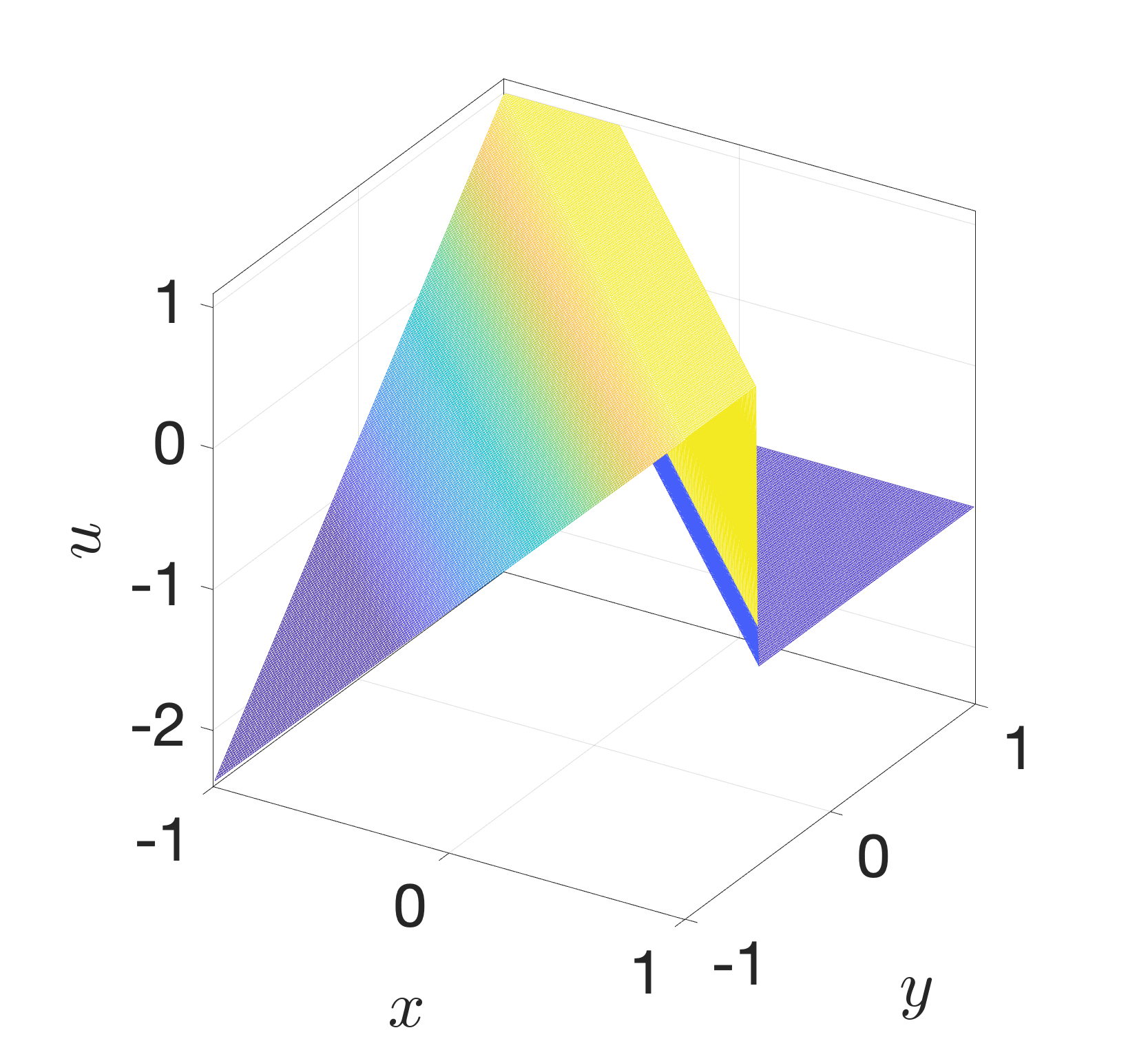}}
        \subfigure[KFRA $u_n$]{ 
        \label{fig5:step2D:3D_KFRA} 
        \includegraphics[width=1.35in]{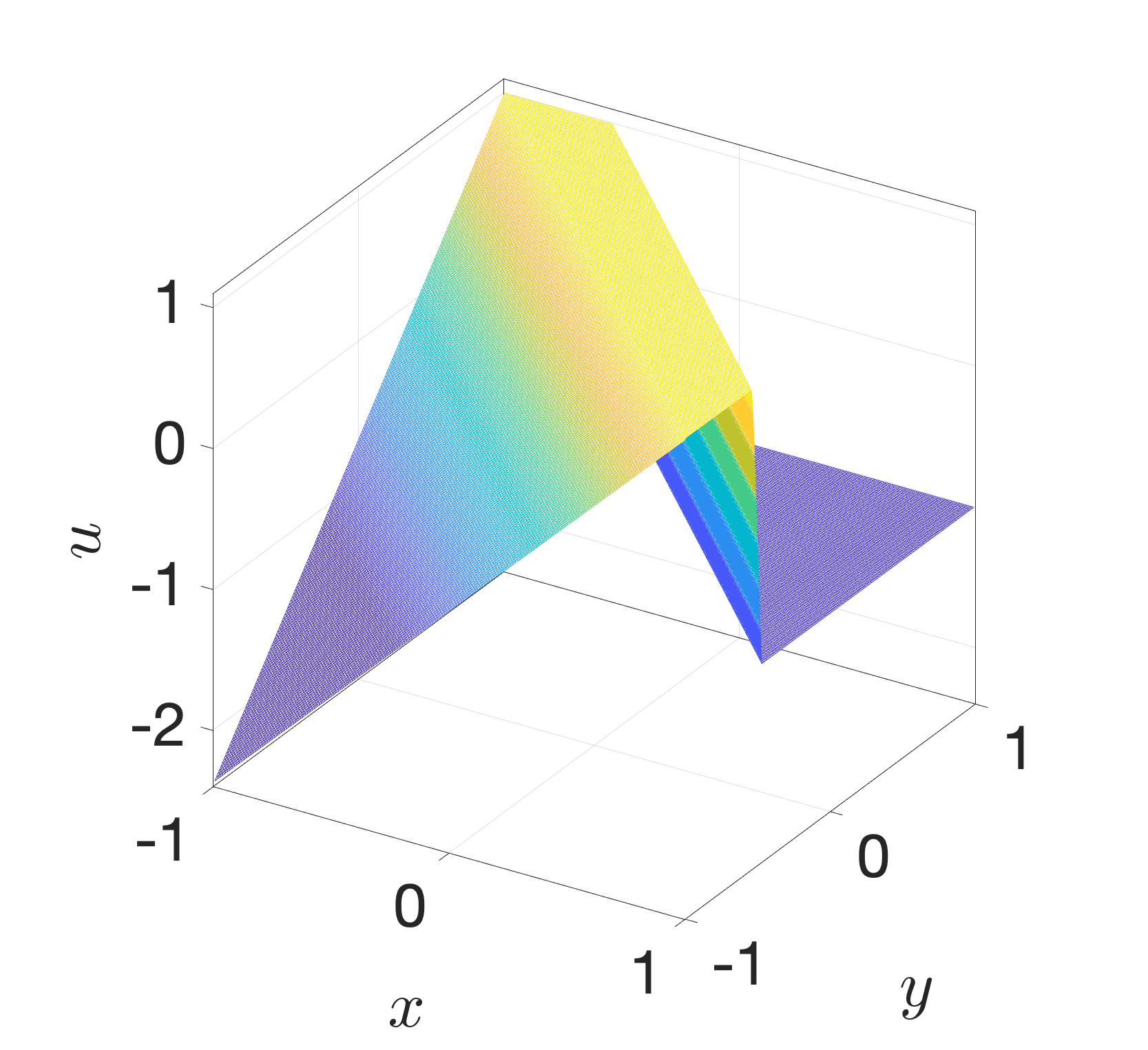}}
        \subfigure[Adam $u_n$]{ 
        \label{fig5:step2D:3D_Adam} 
        \includegraphics[width=1.35in]{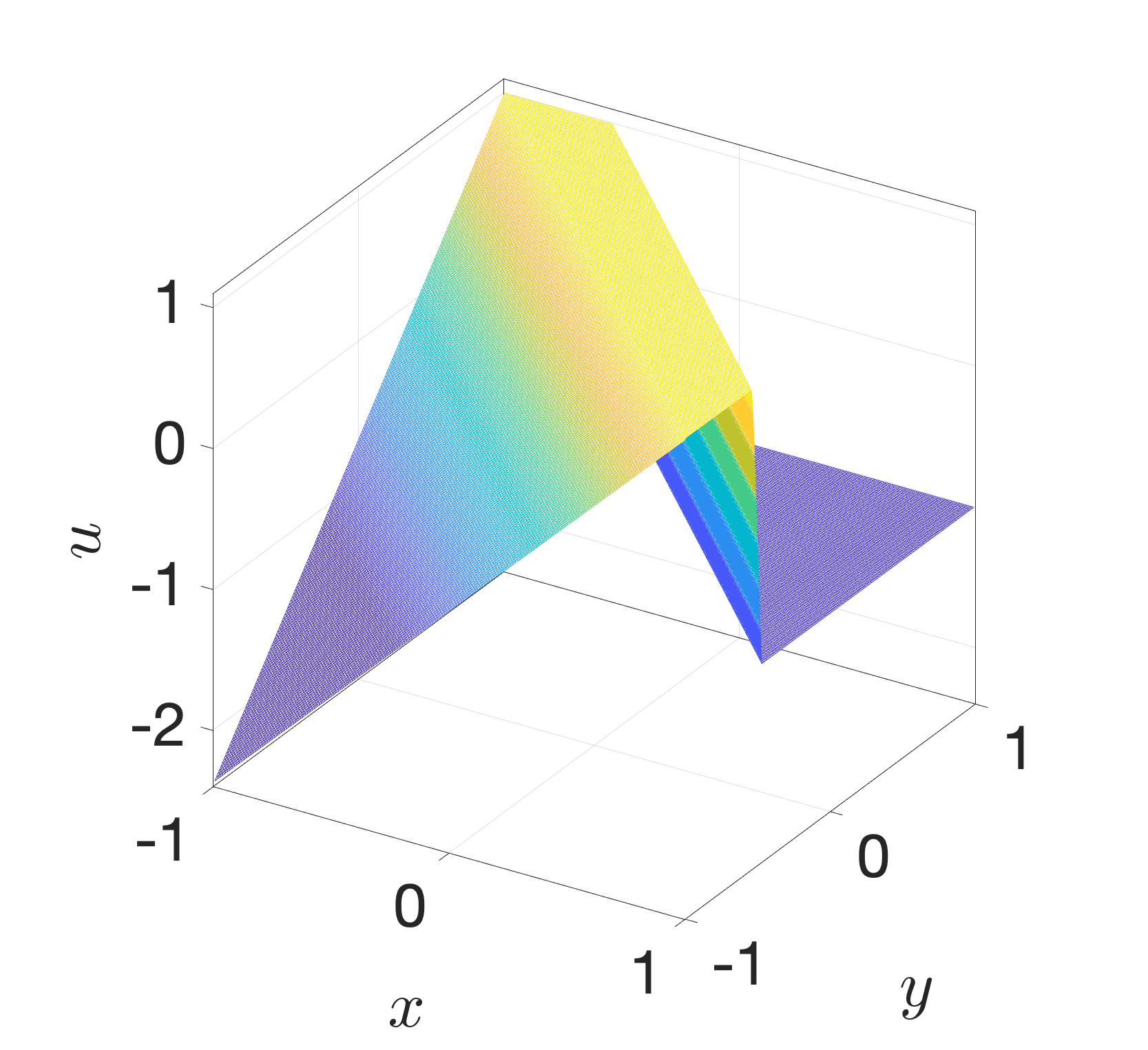}}
         \subfigure[SgGN trained breaking lines]{ 
        \label{fig5:step2D:bl_SgGN} 
        \includegraphics[width=1.35in]{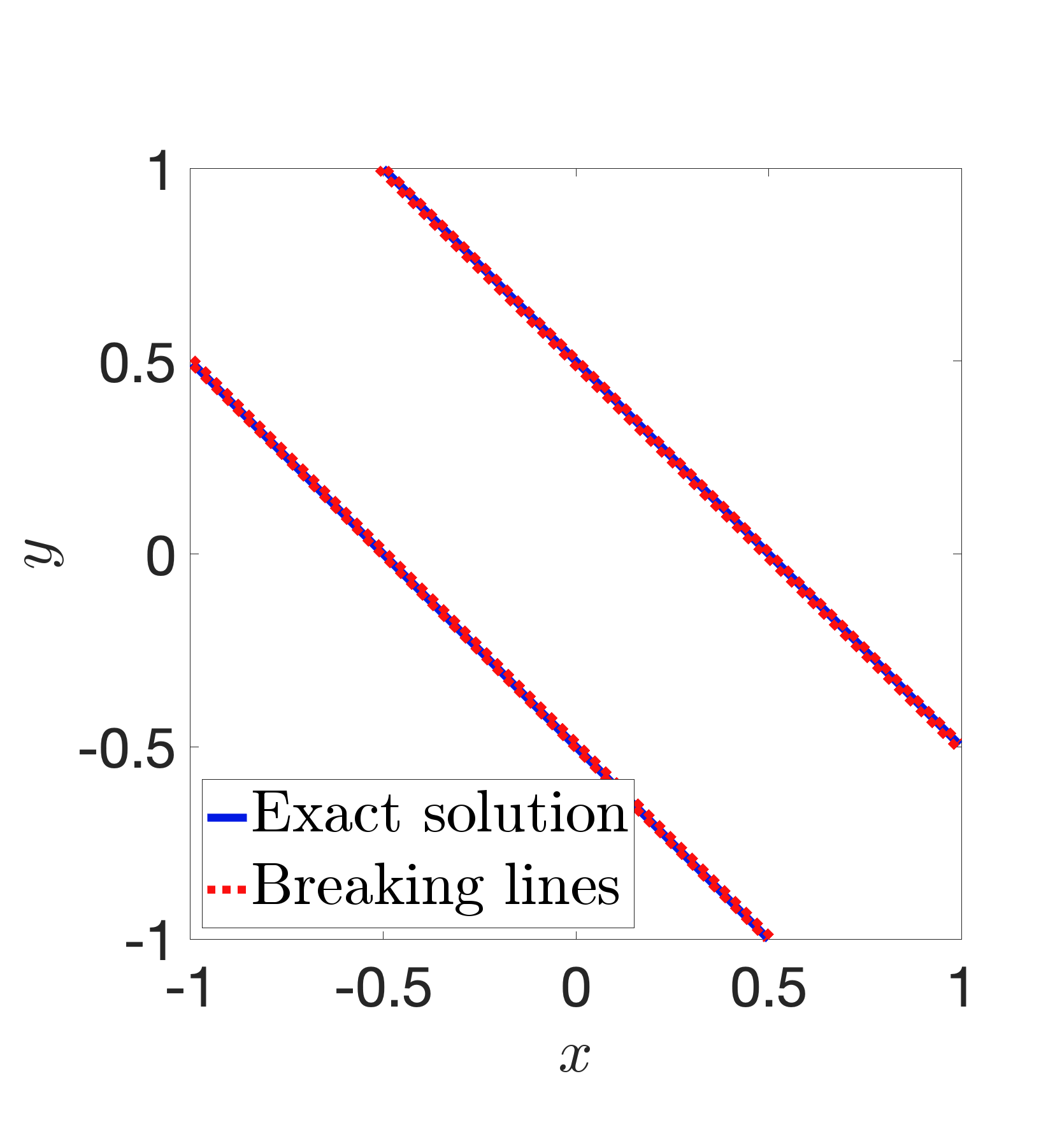}}
        \subfigure[BFGS trained breaking lines]{ 
        \label{fig5:step2D:bl_BFGS} 
        \includegraphics[width=1.35in]{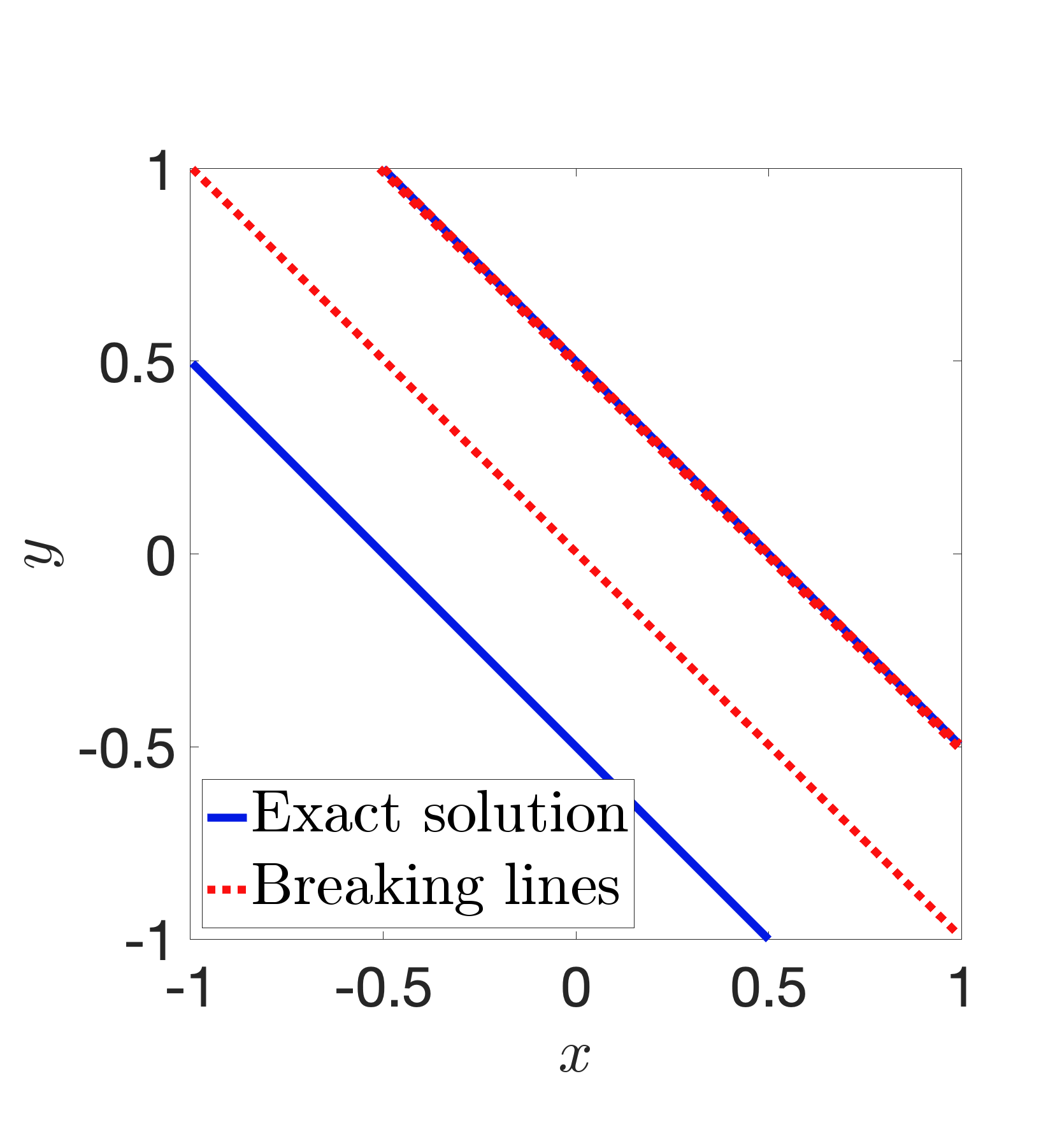}}
        \subfigure[KFRA trained breaking lines]{ 
        \label{fig5:step2D:bl_KFRA} 
        \includegraphics[width=1.35in]{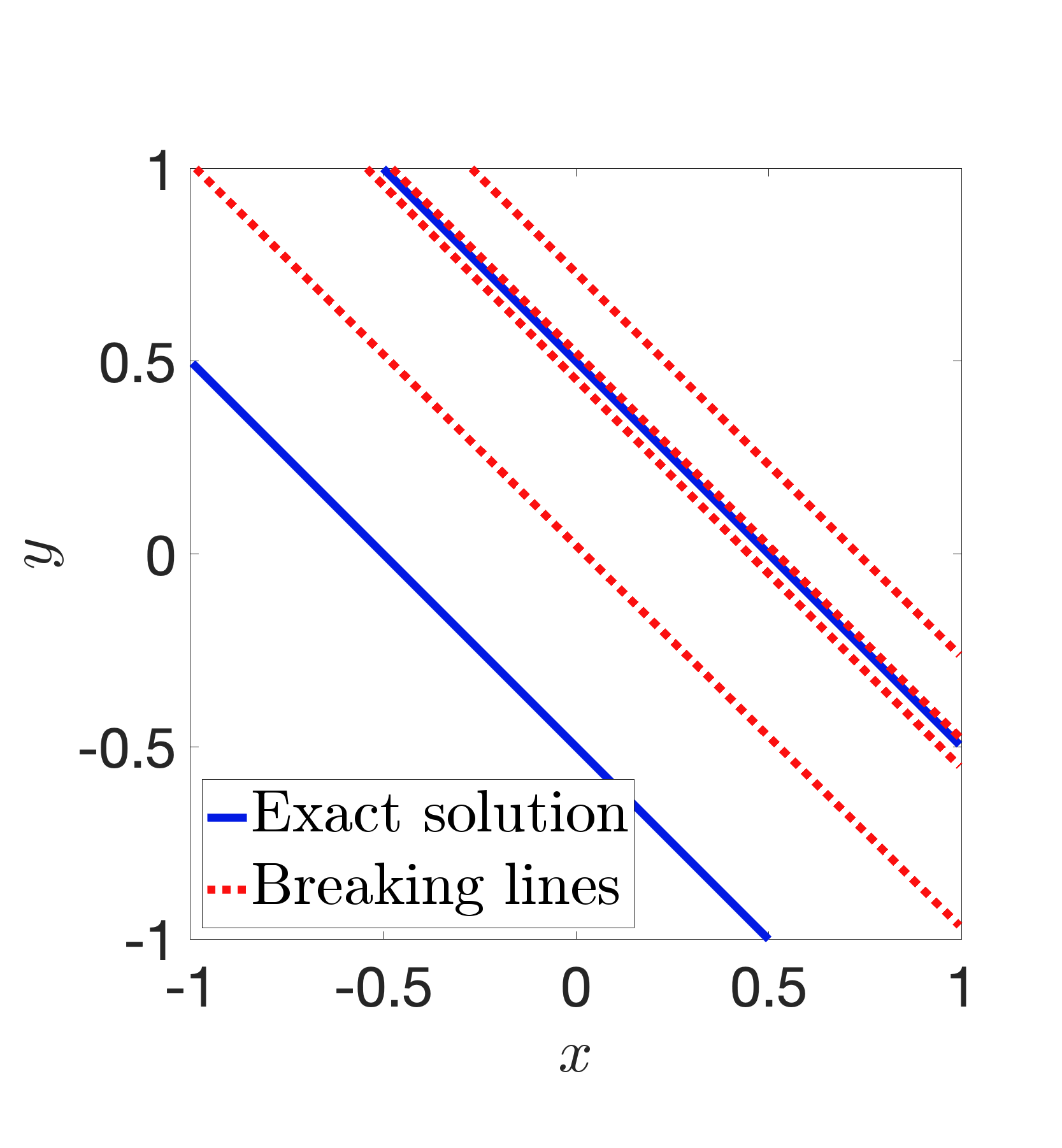}}
        \subfigure[Adam trained breaking lines]{ 
        \label{fig5:step2D:bl_Adam} 
        \includegraphics[width=1.35in]{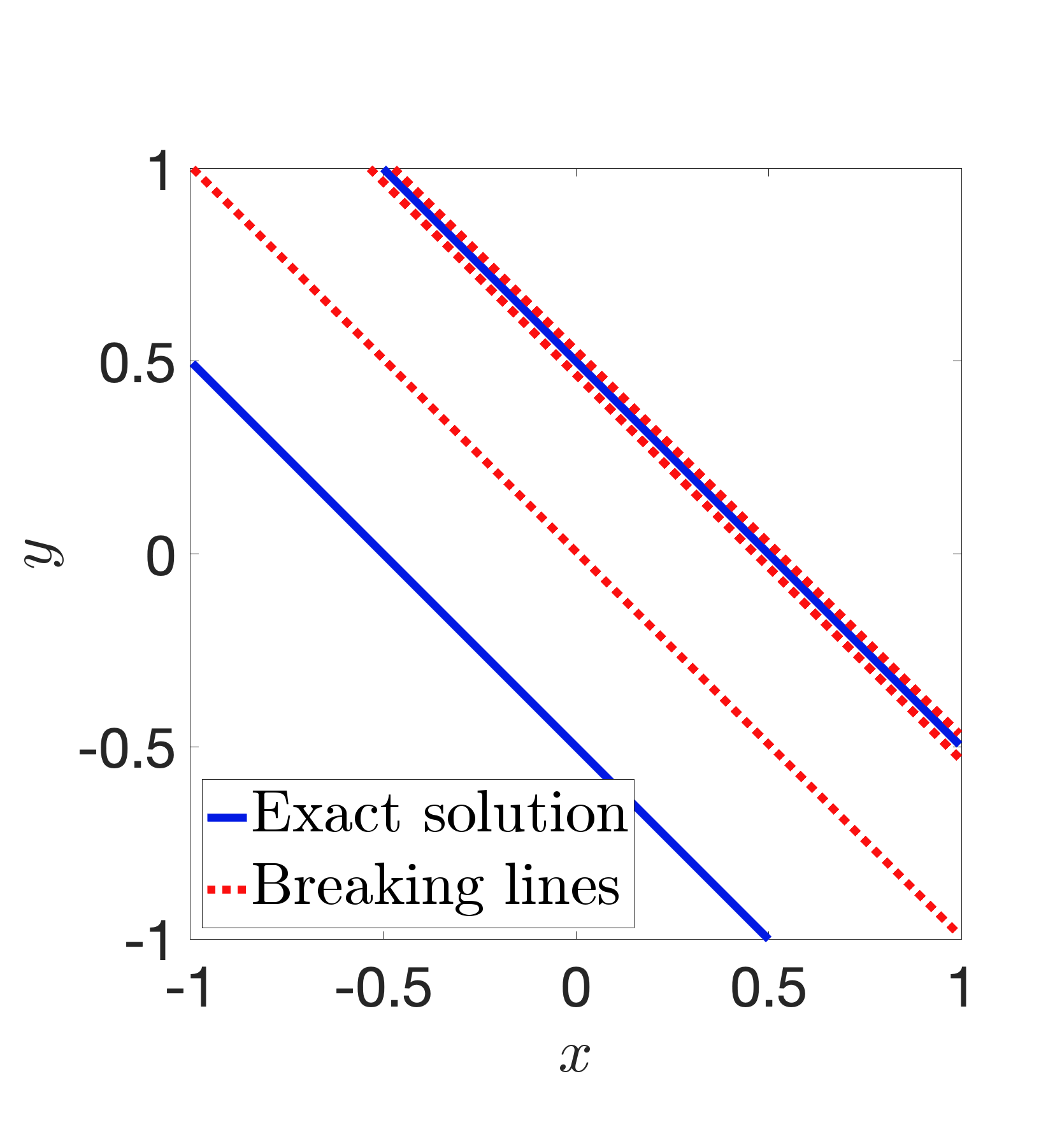}}
     \end{center}
     \caption{Approximation of the two-dimensional piecewise constant function: target function, initial breaking lines, optimization loss curves and approximation results using four optimization methods.}
     \label{fig5:step2D}
\end{figure} 

\begin{table}[!h]
        \caption{Accuracy comparison for the two-dimensional piecewise constant function.}
        \label{tab3:step2D}
    \begin{center}
        \begin{tabular}{ |c|c|c|c|c|c|c|}
             \hline
             Method &  \multicolumn{2}{c|}{SgGN} & \multicolumn{2}{c|}{BFGS}&KFRA & Adam \\
             \hline\hline
            Iteration& $\bm{9}$ &$142$ &$100$& $142$ & 142&$10,000$\\
             \hline
            $\mathcal{J}_{m,\mu} $& $8.82\text{E-}2$ &$\bm{3.16\textbf{E-}3}$ &$9.20\text{E-}2$ &$8.92\text{E-}2$& $9.40 \text{E-}2$& $9.23\text{E-}2$\\
             \hline
        \end{tabular}
    \end{center}
\end{table}
 
\subsection{Two-dimensional function in $\mathcal{M}_n(\Omega)$}
For the previous three test problems, the target functions do not reside within the defined network function set $\mathcal{M}_n(\Omega)$, resulting in an inherent approximation challenge that prevents the loss from converging to machine precision. 
 
To better assess the optimization performance, we introduce a synthetic test case in which the target function is explicitly constructed from $\mathcal{M}_n(\Omega)$, using randomly selected optimal parameters $\hat{\bc}^{*}$ and $\br^{*}$:
\begin{equation}\label{art-eq}
    u(\bx) = \sum_{i=1}^{N}c_i^{*}\phi_{i}(\bx;\br_i^*) + c_{0}^{*},
\end{equation}
where $N$ is the number of neurons. This setup allows for direct evaluation of each optimization method's performance by tracking the movement of the breaking lines toward the known optimal configuration. To avoid trivial convergence due to over-parameterization or initial proximity to the optimum, we restrict the network to just $N=5$ neurons and constrain the initialization of the five breaking lines to lie only along horizontal or vertical directions. For the Adam optimizer, the parameters were set as follows: $\alpha_1=0.1 $, $\alpha_f =0.5 $, $T = 2000$ for horizontal initialization, and $\alpha_1= 0.1$, $\alpha_f =0.8 $, $T = 3000$ for vertical initialization. For KFRA, the damping parameter was chosen as $\gamma = 0.005$.

\begin{figure}[!ht]
    \begin{center}
    \subfigure[Horizontal initial (HI)]{ 
    \label{fig6:init_h} 
    \includegraphics[width=1.35in]{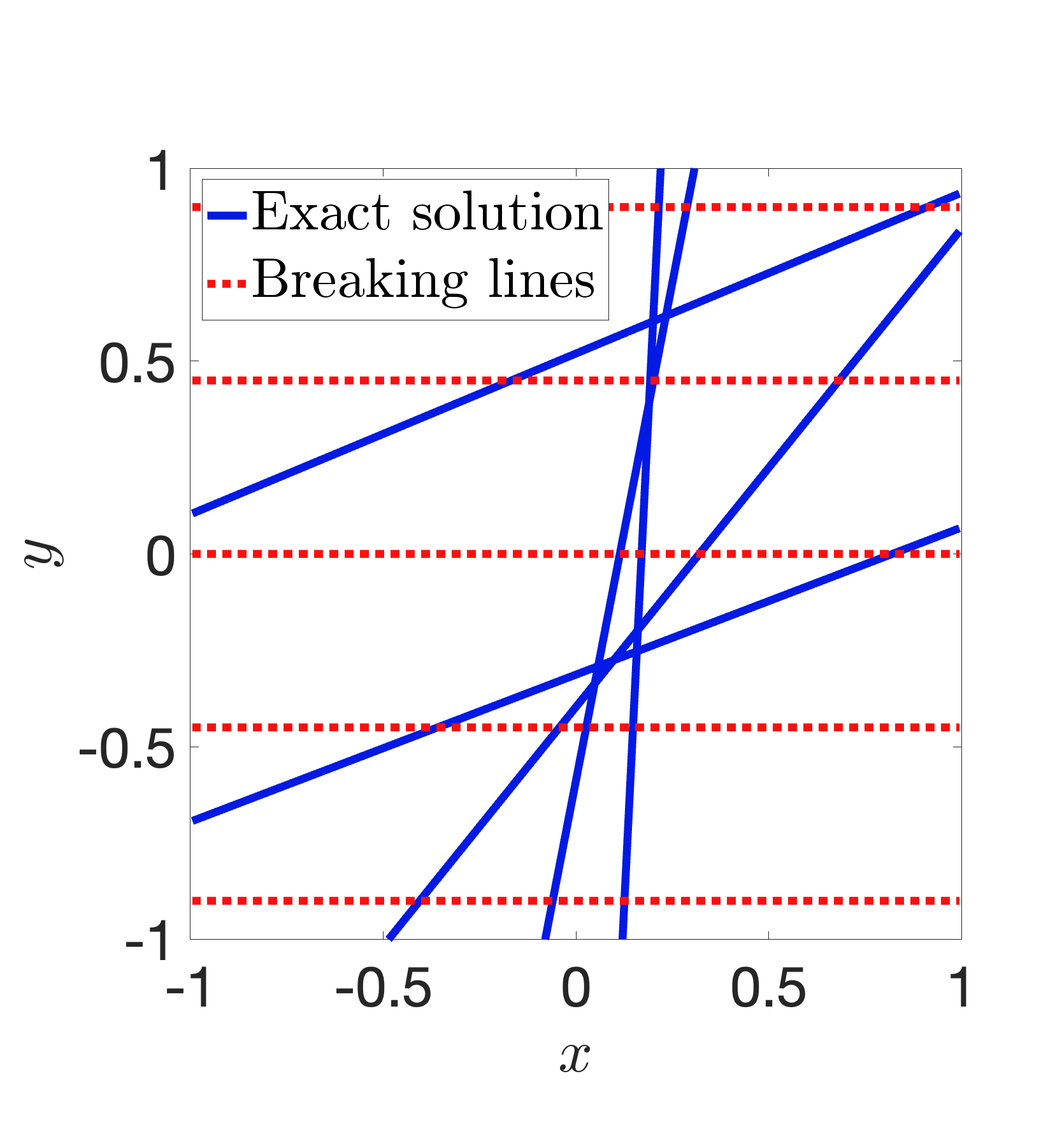}}
     \subfigure[Loss curves (horizontal)]{ 
    \label{fig6:loss_h} 
     \includegraphics[width=1.35in]
     {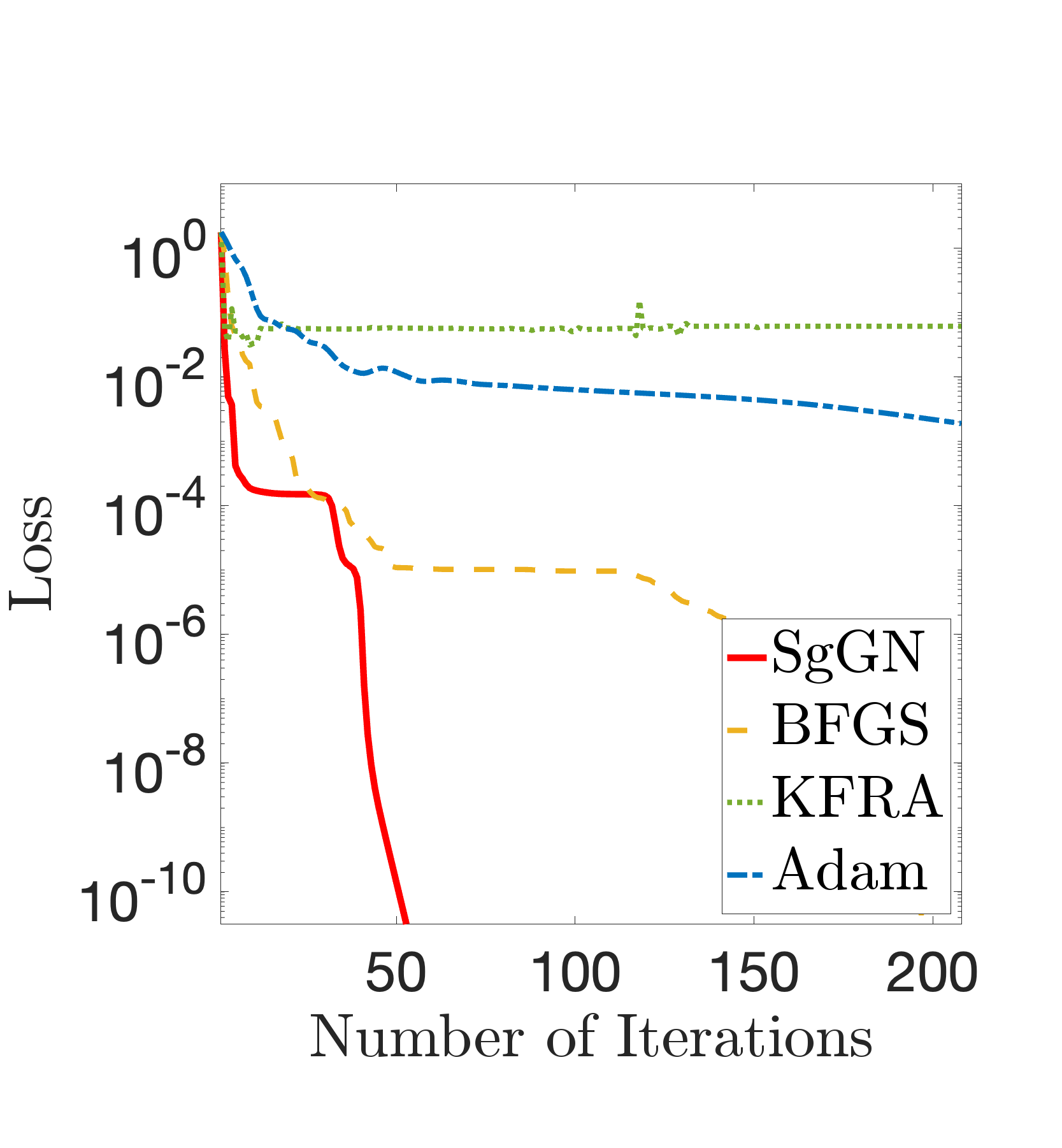}}
    \subfigure[Vertical initial (VI)]{ 
        \label{fig6:init_v} 
        \includegraphics[width=1.35in]{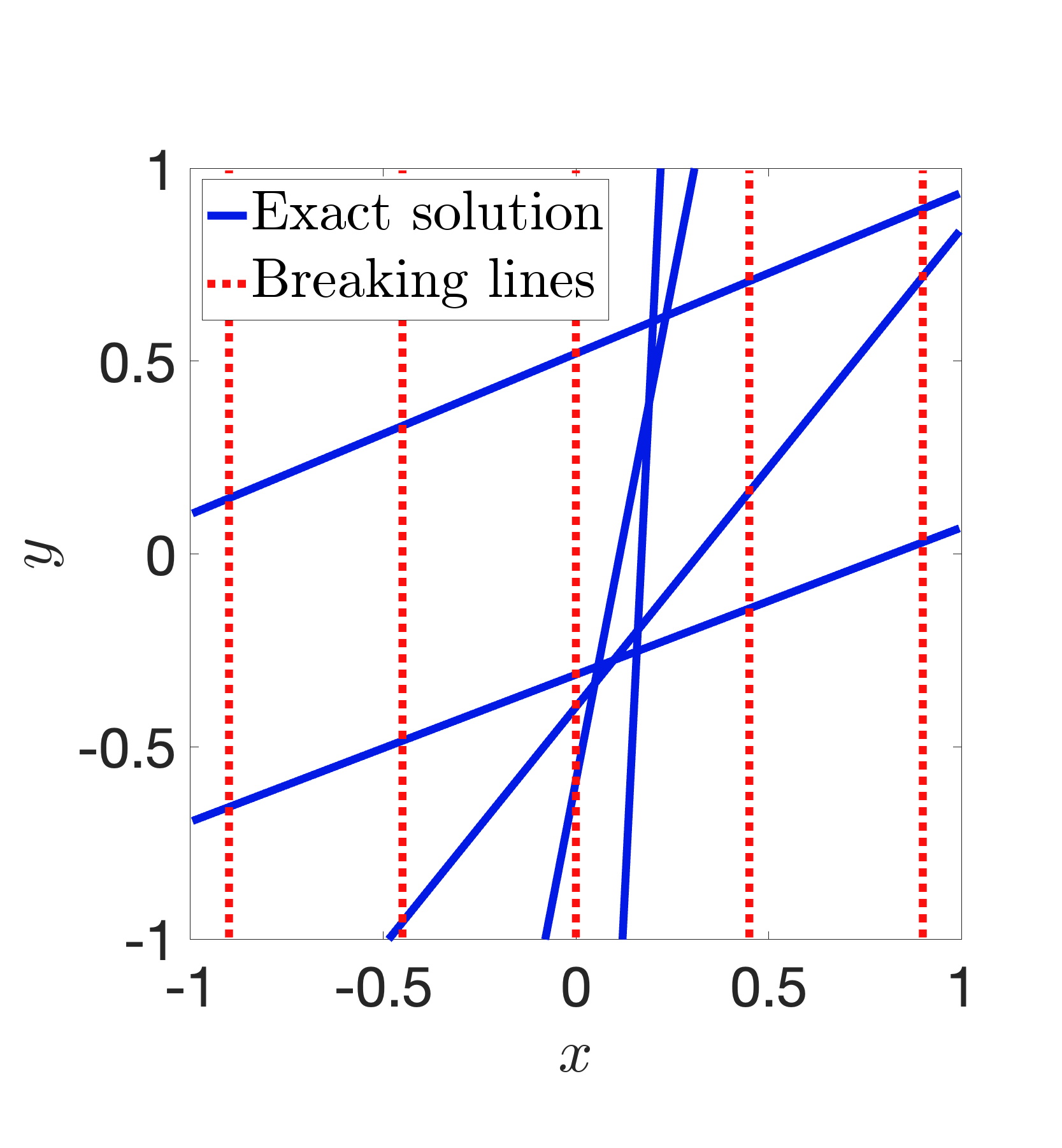}}
     \subfigure[Loss curves (vertical)]{ 
        \label{fig6:loss_v} 
     \includegraphics[width=1.35in]{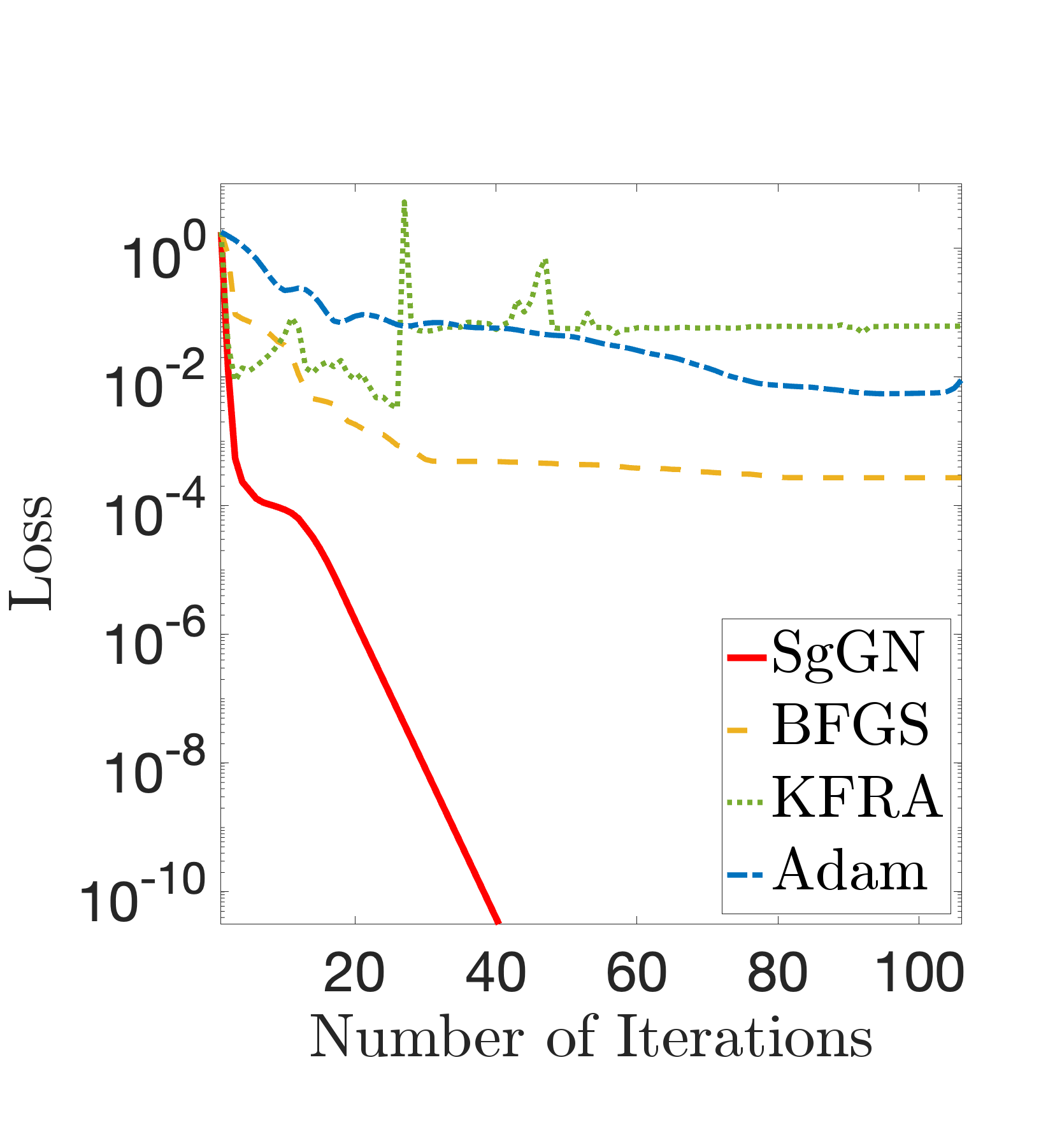}} 
        \subfigure[SgGN $u_n$ (HI)]{ 
        \label{fig6:SgGN_h} 
        \includegraphics[width=1.35in]{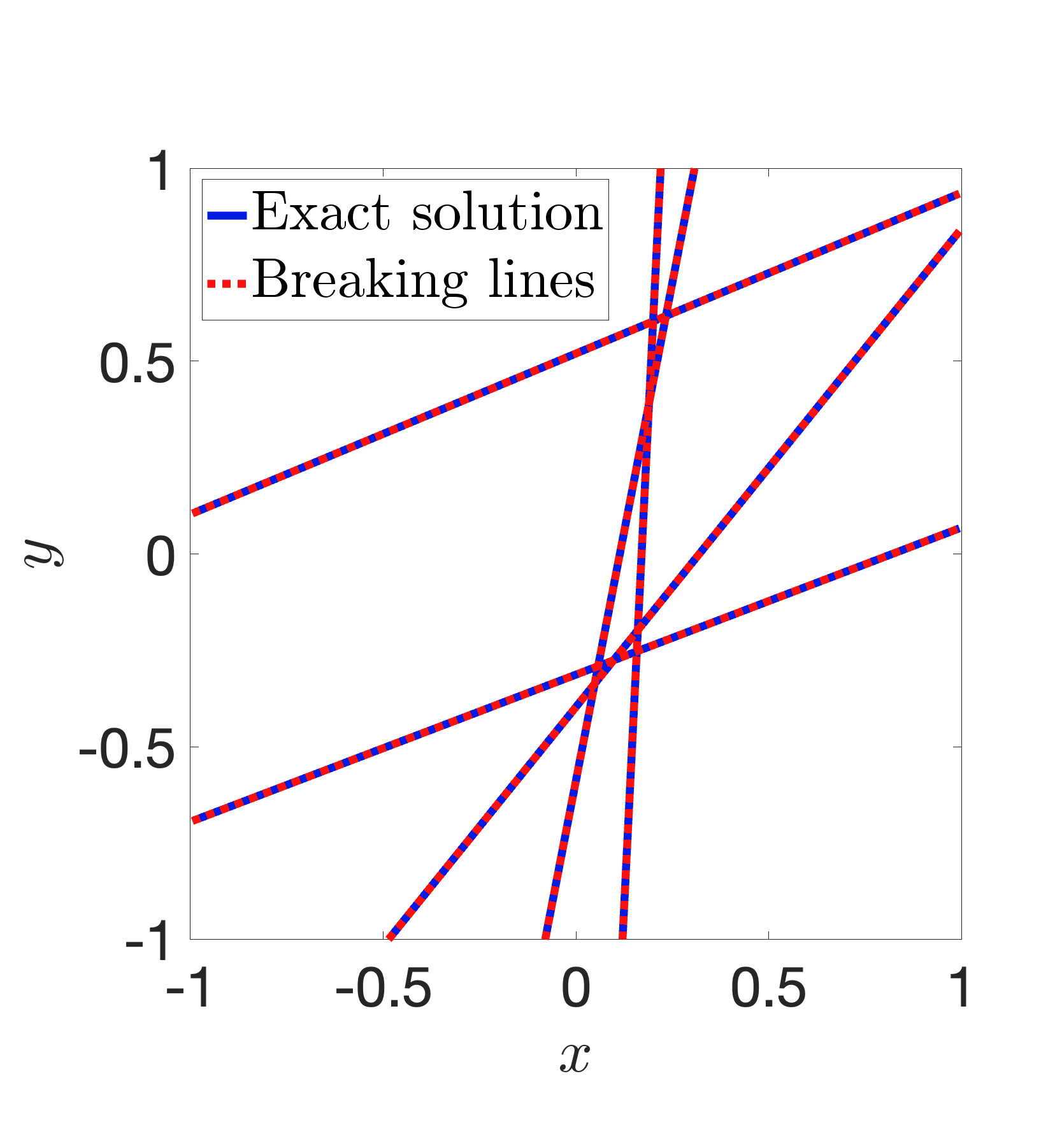}}
        \subfigure[BFGS $u_n$ (HI)]{ 
        \label{fig6:BFGS_h} 
        \includegraphics[width=1.35in]{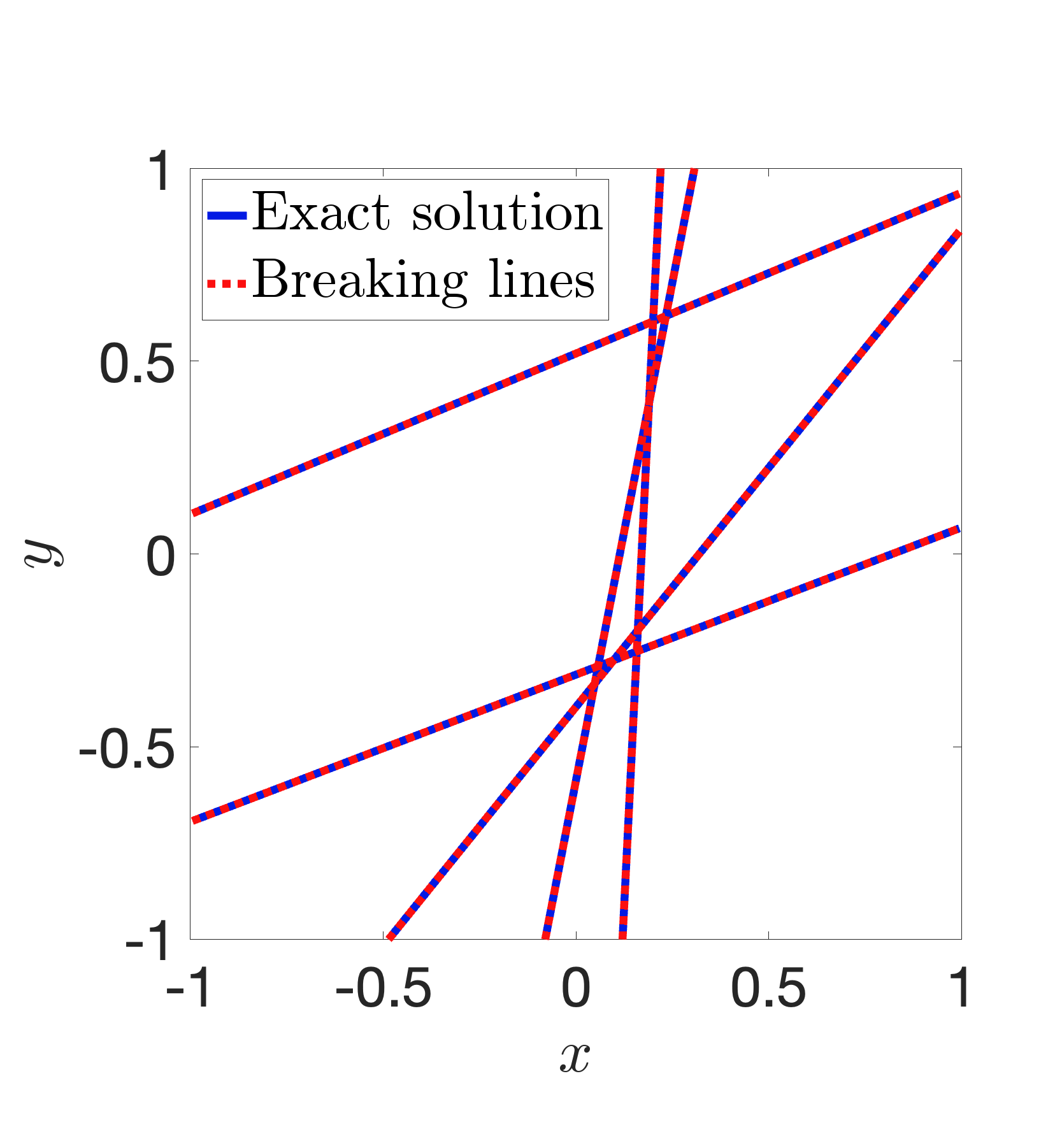}}
        \subfigure[KFRA $u_n$ (HI)]{ 
        \label{fig6:KFRA_h} 
        \includegraphics[width=1.35in]{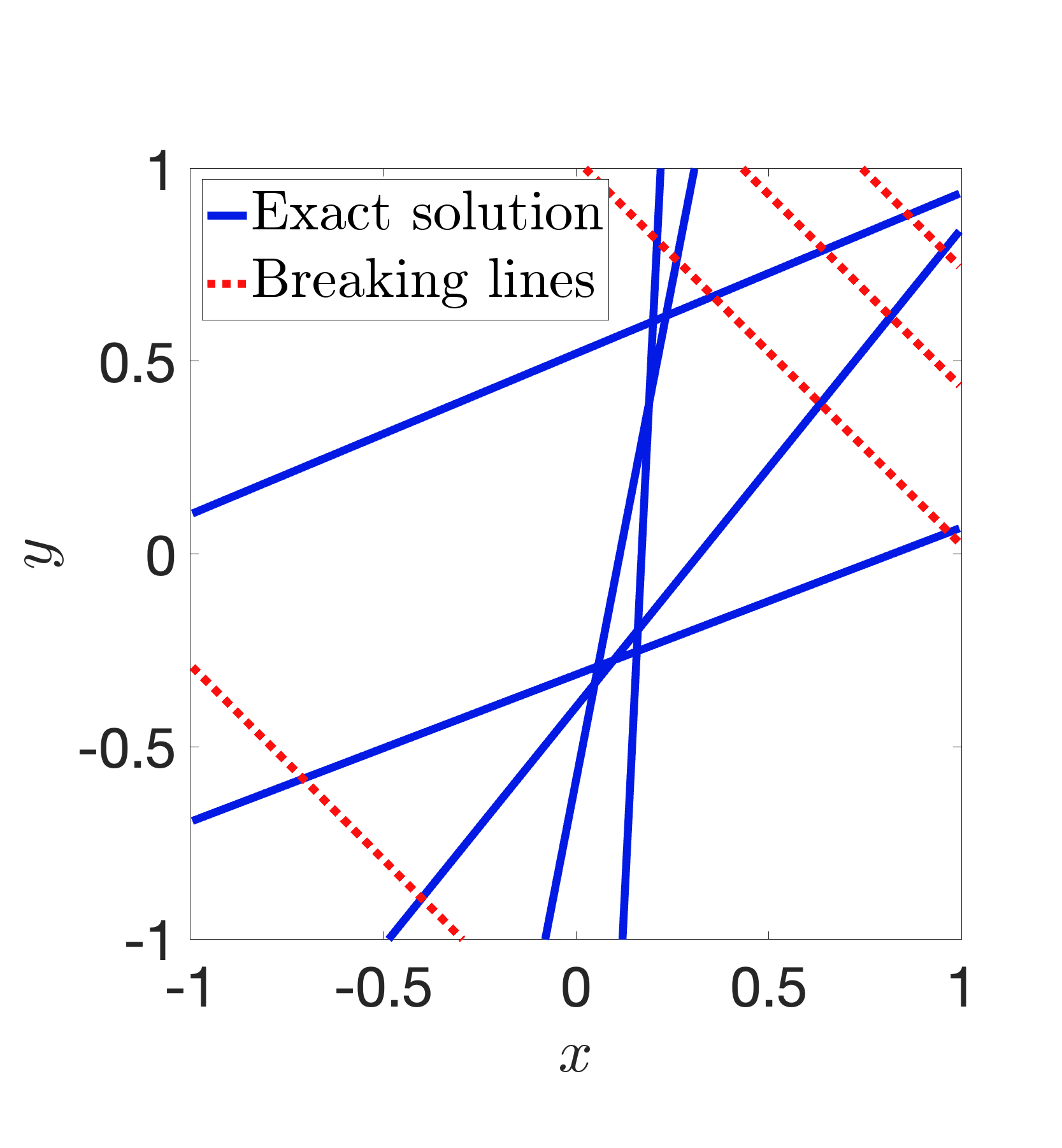}}
        \subfigure[Adam $u_n$ (HI)]{ 
        \label{fig6:Adam_h} 
        \includegraphics[width=1.35in]{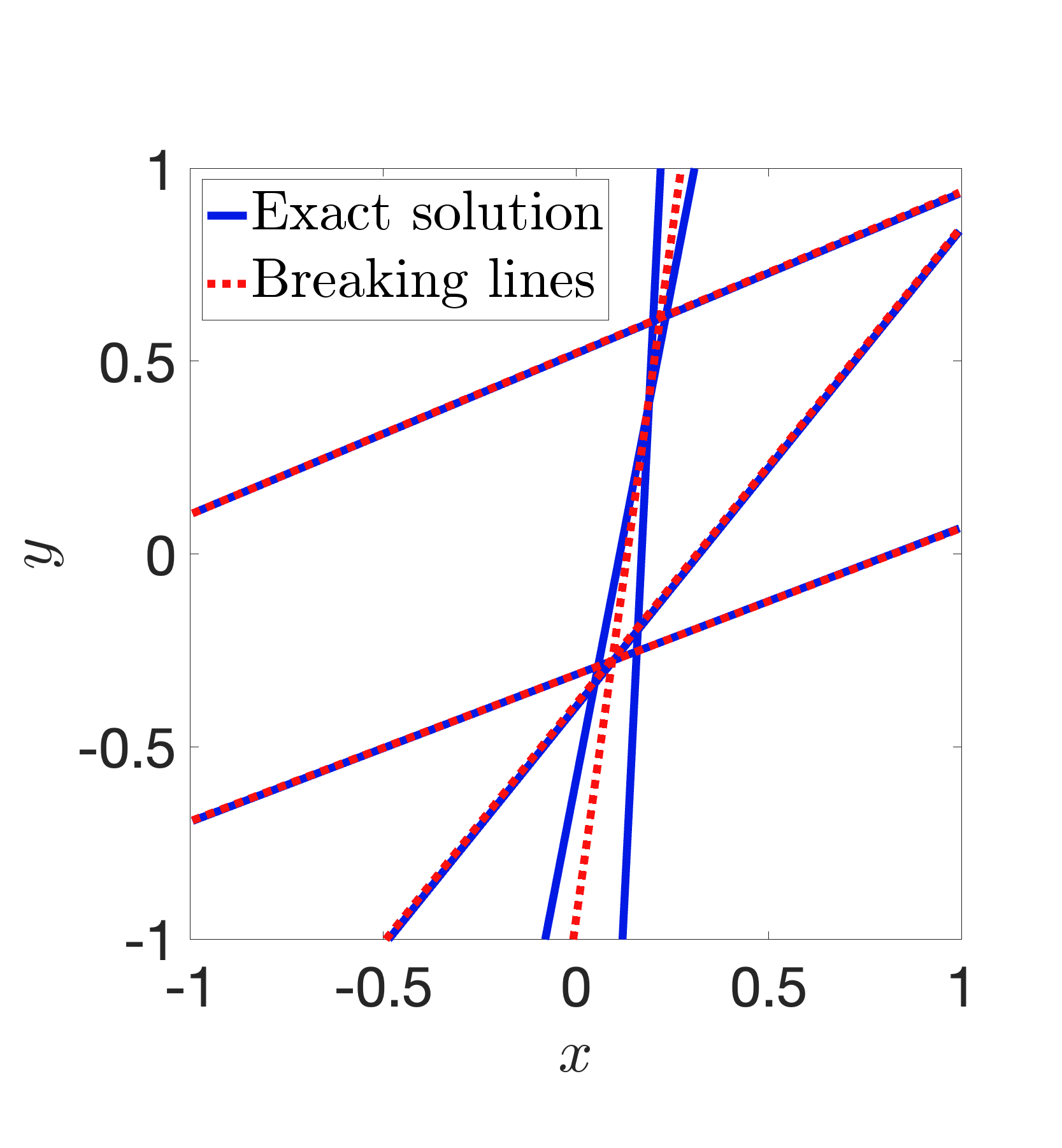}}
        \subfigure[SgGN $u_n$ (VI)]{ 
        \label{fig6:SgGN_v} 
        \includegraphics[width=1.35in]{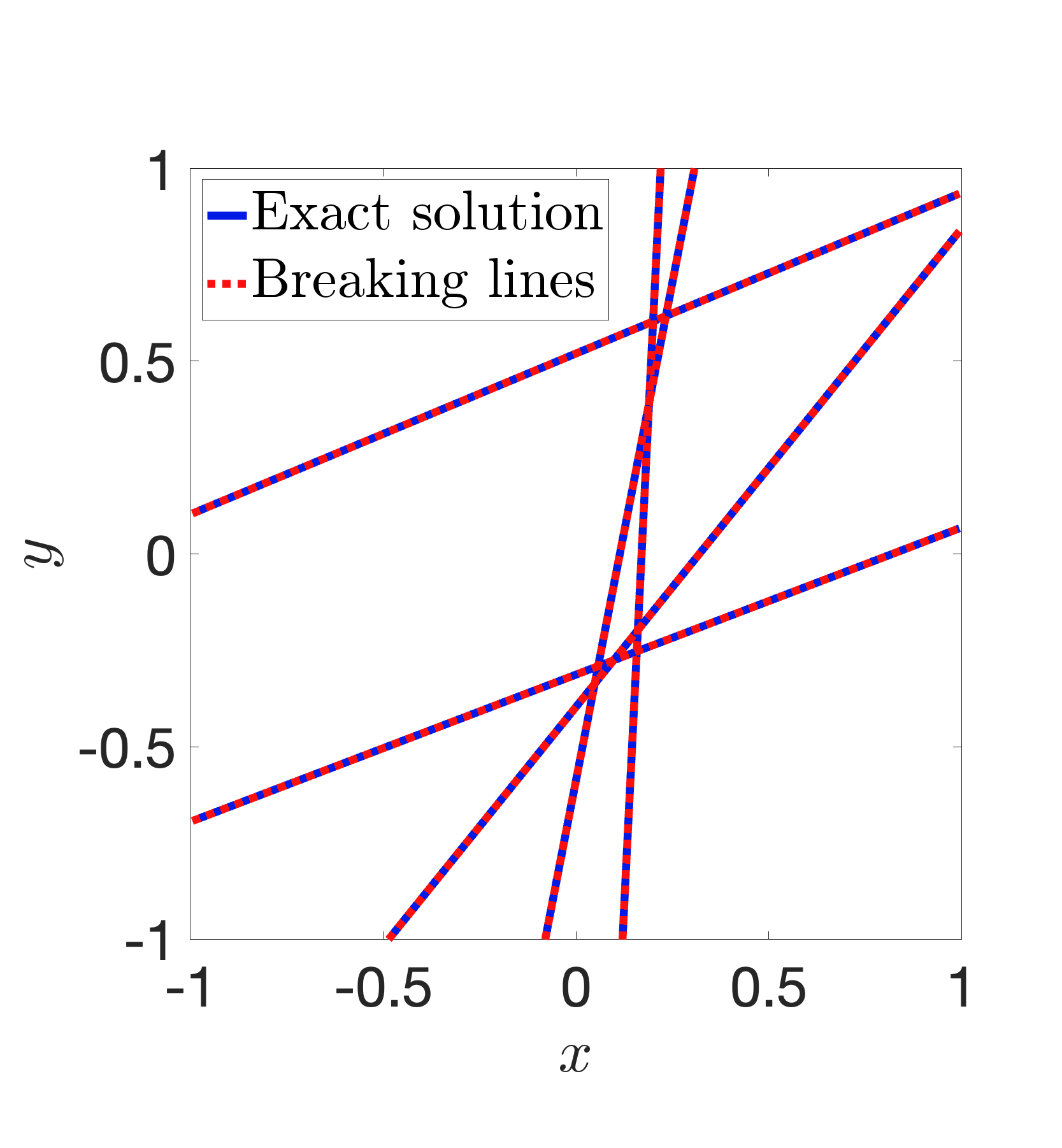}}
        \subfigure[BFGS $u_n$ (VI)]{ 
        \label{fig6:BFGS_v} 
        \includegraphics[width=1.35in]{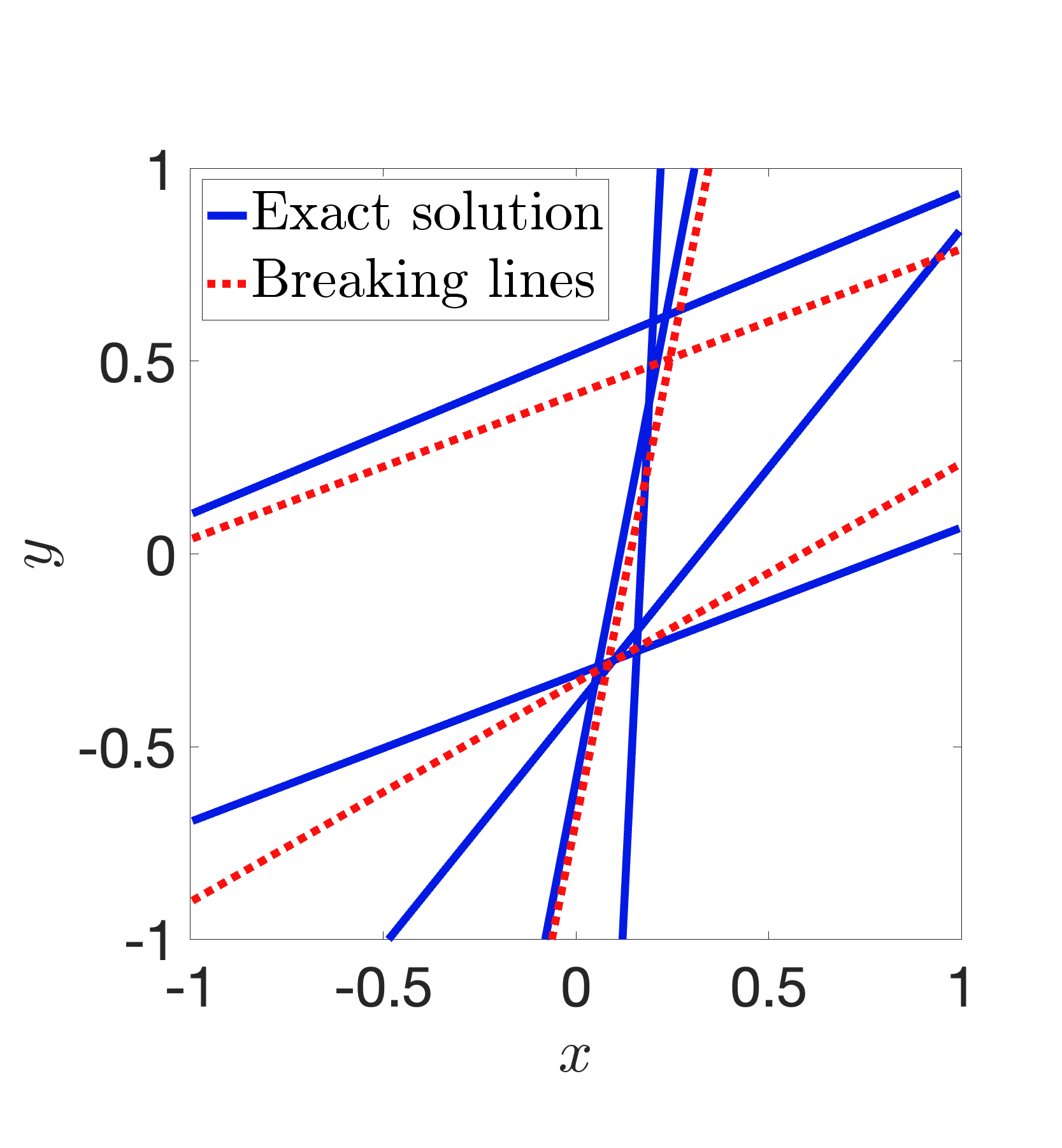}}
        \subfigure[KFRA $u_n$ (VI)]{ 
        \label{fig6:KFRA_v} 
        \includegraphics[width=1.35in]{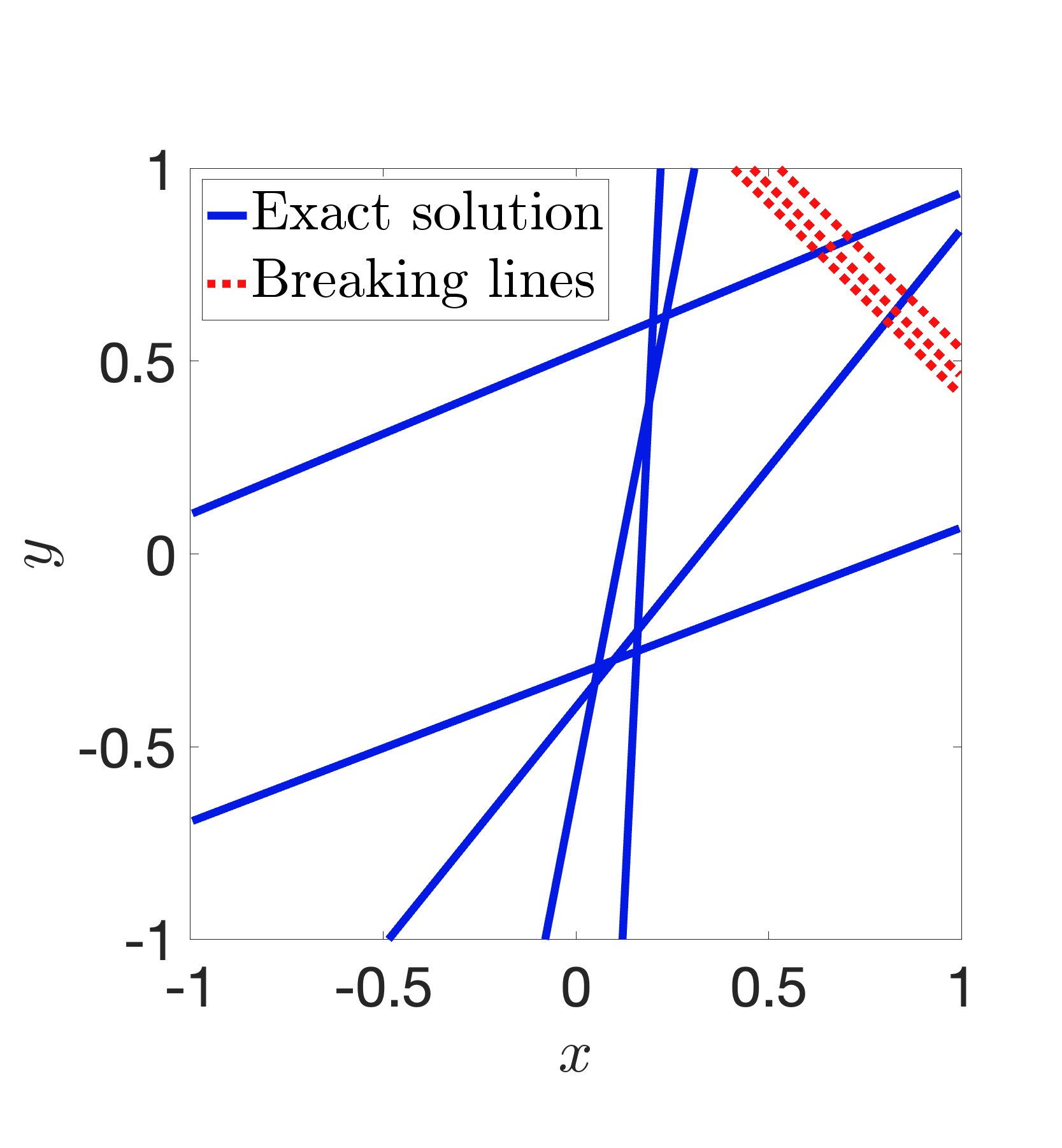}}
        \subfigure[Adam $u_n$ (VI)]{ 
        \label{fig6:Adam_v} 
        \includegraphics[width=1.35in]{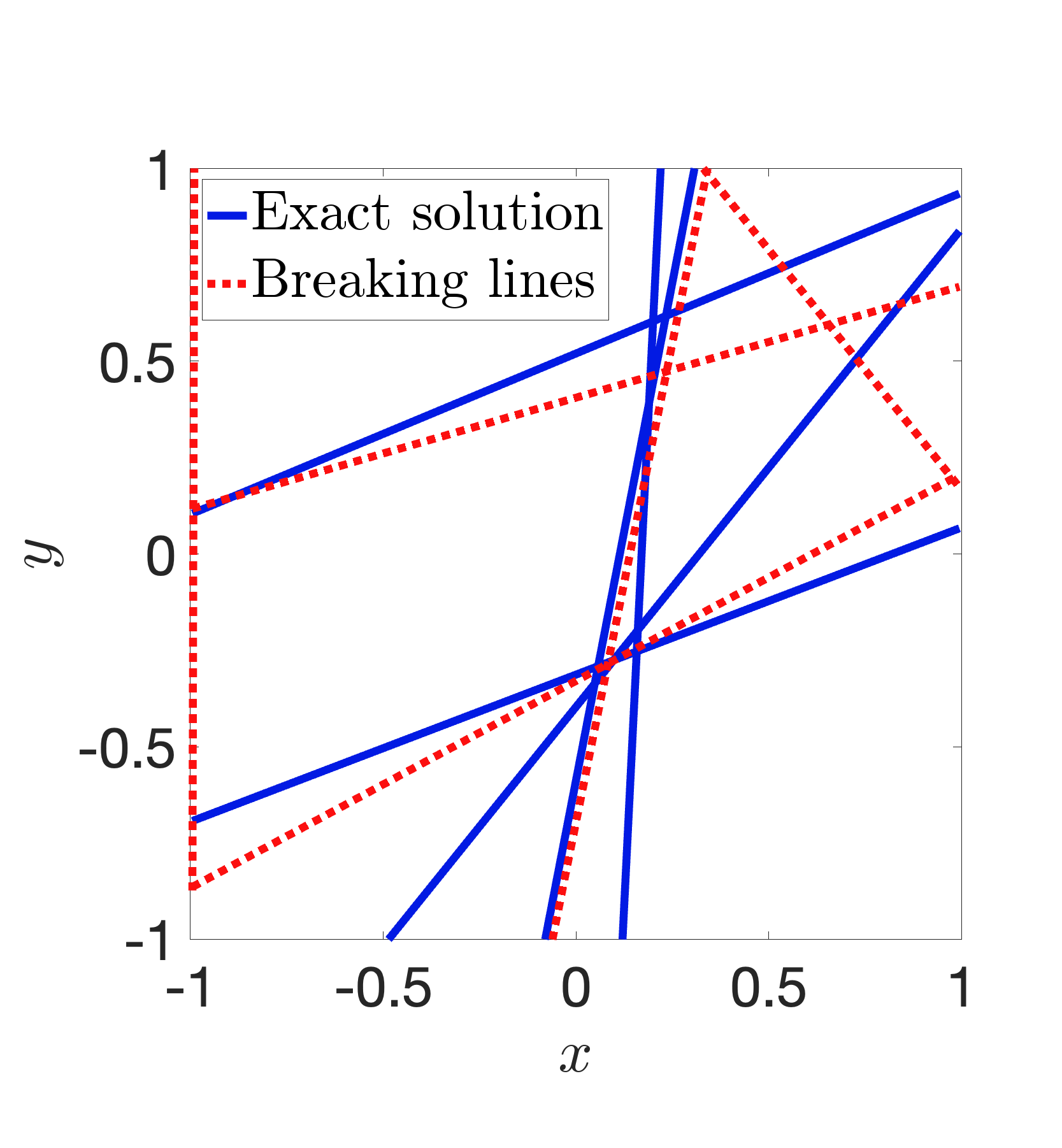}}
     \caption{Approximation of the two-dimensional piecewise linear function: target function, initial breaking lines, optimization loss curves and approximation results using the optimization methods with horizontal initial (HI) and vertical initial (VI) breaking lines.}
    \label{fig6:linear}
    \end{center}
\end{figure}

\begin{table}[!h]
        \caption{Accuracy comparison for the approximation of the two-dimensional piecewise linear function with horizontal initial breaking lines (HI) and vertical initial breaking lines (VI).}
        \label{tab4:linear}
    \begin{center}
        \begin{tabular}{ |c|c|c|c|c|c|c|}
             \hline
             Method & \multicolumn{2}{c|}{SgGN} & \multicolumn{2}{c|}{BFGS} &KFRA& Adam \\
             \hline\hline
             Iteration& $\bm{99}$ &$207$ &$204$ & $207$ &$207$&$10,000$\\
             \hline
             $\mathcal{J}_{m,\mu}$ (HI) &  $6.28\text{E-}22$ &$\bm{6.68\textbf{E-}27}$ & $7.50\text{E-}22$& $7.50\text{E-}22$& $6.12\text{E-}2$&$1.17\text{E-}5$\\
             \hline
            Iteration& $\bm{4}$ &$105$ &$30$ & $105$ &105 &$10,000$\\
             \hline
            $\mathcal{J}_{m,\mu}$ (VI) & $2.35\text{E-}4$ &$\bm{4.34\textbf{E-}{26}}$& $5.21\text{E-}4$ & $2.71\text{E-}4$& $5.56\text{E-}2$&$2.15\text{E-}4$\\
            \hline
        \end{tabular}
    \end{center}
\end{table}

The loss decay results are shown in Figures~\ref{fig6:loss_h} and \ref{fig6:loss_v}. SgGN rapidly converges to a loss on the order of $10^{-10}$ within just 50 iterations. In contrast, the other methods show significantly slower convergence and require many more iterations to approach their final training losses. Table~\ref{tab4:linear} presents a comparison of the final loss values under both horizontal and vertical initializations. Notably, SgGN consistently achieves near-zero loss in both settings, significantly outperforming the other methods.
Further insight is provided in Figure~\ref{fig6:linear}, which shows the final configuration of the breaking lines. SgGN accurately relocates all five breaking lines to their respective optimal positions under both horizontal and vertical initializations (Figures~\ref{fig6:SgGN_h} and \ref{fig6:SgGN_v}). In contrast, the other methods (Figures~\ref{fig6:BFGS_h}, \ref{fig6:BFGS_v}, \ref{fig6:Adam_h}, and \ref{fig6:Adam_v}) either perform well only under horizontal initialization or fail to correctly relocate all breaking lines in both scenarios.

\section{Conclusions and Discussions}\label{sec:conl}
For nonlinear LS problems, GN methods offer attractive features by exploiting the quadratic form of the objective function. However, they often suffer from the singularity of the GN matrix, necessitating additional strategies, such as shifting, to ensure invertibility.
The SgGN method introduced in this paper is an iterative method for solving nonlinear LS problems using shallow ReLU NNs as the model. In addition to leveraging the LS structure, the method also makes effective use of the structure of the network. Guided by both structures, the SgGN method offers several significant advantages. Foremost among these is its guarantee of positive definiteness of the mass matrix and the layer GN matrix without requiring additional shifting \textemdash a common requirement in standard GN methods. Moreover, SgGN explicitly removes the singularity of the GN matrix following its structured form along the NN optimization process. This work thus gives a practical strategy to take advantage of NN LS structures while uncovering the singularity structure of GN matrices.

Another notable advantage of SgGN is its rapid convergence in practice.
The method has been tested for several one- and two-dimensional LS problems that are particularly challenging for commonly used machine learning optimizers such as BFGS and Adam. The resulting loss curves consistently demonstrate the superior convergence behavior of SgGN, which frequently outperforms these baseline methods by a considerable margin. This performance advantage is further supported by SgGN’s ability to effectively reposition the breaking hyperplanes (breaking points for one dimension and breaking lines for two dimensions) defined by the network's nonlinear parameters. 

Each iteration of the SgGN algorithm requires linear solvers to approximately invert the mass matrix $\mathcal{A}(\br^{(k)})$ and the layer GN matrix $\mathcal{H}(\br^{(k)})$ for updating the linear and nonlinear parameters, respectively. While both matrices are symmetric positive definite, they can be highly ill-conditioned. In the numerical experiments reported in this paper, we used truncated SVD as a robust linear solver, albeit at a significant computational cost. To improve efficiency, future work will focus on developing more scalable and structure-aware linear solvers within the SgGN framework \cite{SgGN2}.

\bibliographystyle{siam}
\bibliography{reference}

\end{document}